\pdfoutput=1

\documentclass[10pt,twocolumn,letterpaper]{article}

\usepackage{cvpr}              %

\usepackage[accsupp]{axessibility}

\usepackage{multirow}
\usepackage{xcolor}
\usepackage{colortbl}
\usepackage{makecell}

\usepackage{amsmath}
\usepackage{amsthm} 
\usepackage{thmtools}
\newtheorem{definition}{Definition}

\newtheorem{lemma}{Lemma}
\newtheorem{assumption}{Assumption}
\newtheorem{proposition}{Proposition}

\definecolor{rowblue}{RGB}{232,240,249}
\definecolor{darkblue}{RGB}{38,85,139}
\definecolor{darkred}{RGB}{180,0,0}

\usepackage{svg}

\usepackage{booktabs}      %
\definecolor{LightRedHighlight}{RGB}{244, 231, 232}
\definecolor{DarkRedHighlight}{RGB}{223, 182, 188}

\usepackage{xcolor}
\usepackage{tcolorbox}
\newtcolorbox{summarybox}{
  colback=black!5!white,  %
  colframe=black!60!white, %
  boxrule=0.5pt,           %
  arc=2mm,                 %
  fonttitle=\bfseries,
  left=5mm,
  right=5mm,
  top=3mm,
  bottom=3mm
}

\definecolor{cvprblue}{rgb}{0.21,0.49,0.74}
\usepackage[pagebackref,breaklinks,colorlinks,allcolors=cvprblue]{hyperref}

\def\paperID{4269} %
\def\confName{CVPR}
\def\confYear{2026}

\title{Understanding and Enforcing Weight Disentanglement in Task Arithmetic}

\author{%
  Shangge Liu $^{1}$,
  Yuehan Yin $^{1}$,
  Lei Wang $^{2}$, 
  Qi Fan $^{1}$, 
  Yinghuan Shi $^{1}$, \\
  {Wenbin Li} $^{1}$\thanks{Corresponding Author} ,
  {Yang Gao} $^{1}$, 
  Dacheng Tao $^{3}$
    \\
  $^1$State Key Laboratory for Novel Software Technology, Nanjing University, China \quad \\
$^2$University of Wollongong, Australia \quad 
    $^3$Nanyang Technological University, Singapore \quad 
}

\begin{document}
\maketitle
\begin{abstract}
Task arithmetic provides an efficient, training-free way to edit pre-trained models, yet lacks a fundamental theoretical explanation for its success. The existing concept of ``weight disentanglement" describes the ideal outcome of non-interfering task composition but does not reveal its underlying cause. Crucially, what intrinsic properties of the pre-trained model ($\theta_0$) or the task vectors ($\tau_t$) enable this disentanglement remains underexplored. In this paper, we introduce Task-Feature Specialization (TFS), a model's ability to allocate distinct internal features to different tasks, as the fundamental principle. We first prove that TFS is a sufficient condition for weight disentanglement. More importantly, we find that TFS also gives rise to an observable geometric consequence: weight vector orthogonality. This positions TFS as the common cause for both the desired functional outcome (disentanglement) and a measurable geometric property (orthogonality). This relationship provides the key insight for our method: since the abstract TFS property is intractable to enforce directly, we can instead promote weight disentanglement by shaping its concrete geometric consequence, orthogonality. Therefore, we propose OrthoReg, a simple and effective regularization method that actively enforces an internal orthogonal structure on weight updates ($\Delta W$) that constitute $\tau_t$ during fine-tuning. And we theoretically prove that OrthoReg promotes disentanglement. Extensive experiments demonstrate that OrthoReg consistently and significantly enhances the performance of various task arithmetic methods. Code is available at \href{https://github.com/RL-MIND/OrthoReg}{https://github.com/RL-MIND/OrthoReg}.
\end{abstract}
\section{Introduction}\label{sec:intro}
Large-scale pre-trained models (PTMs)~\cite{clip2021,vit2021,llama2023} have become powerful foundations for various applications~\cite{Financial2024,Medical2025}. However, a critical challenge lies in adapting these powerful yet static models to new requirements, such as acquiring new skills~\cite{Mosbach2021}, personalizing behavior~\cite{Zhang2022_2}, or unlearning harmful capabilities~\cite{Yao2023,Hong2024}. Conventional methods like joint fine-tuning are often impractical due to prohibitive computational costs, the inaccessibility of all training datasets, and the risk of catastrophic forgetting~\cite{Li2024,Luo2023}. 

\begin{figure}[t]
  \centering
   \includegraphics[width=\linewidth]{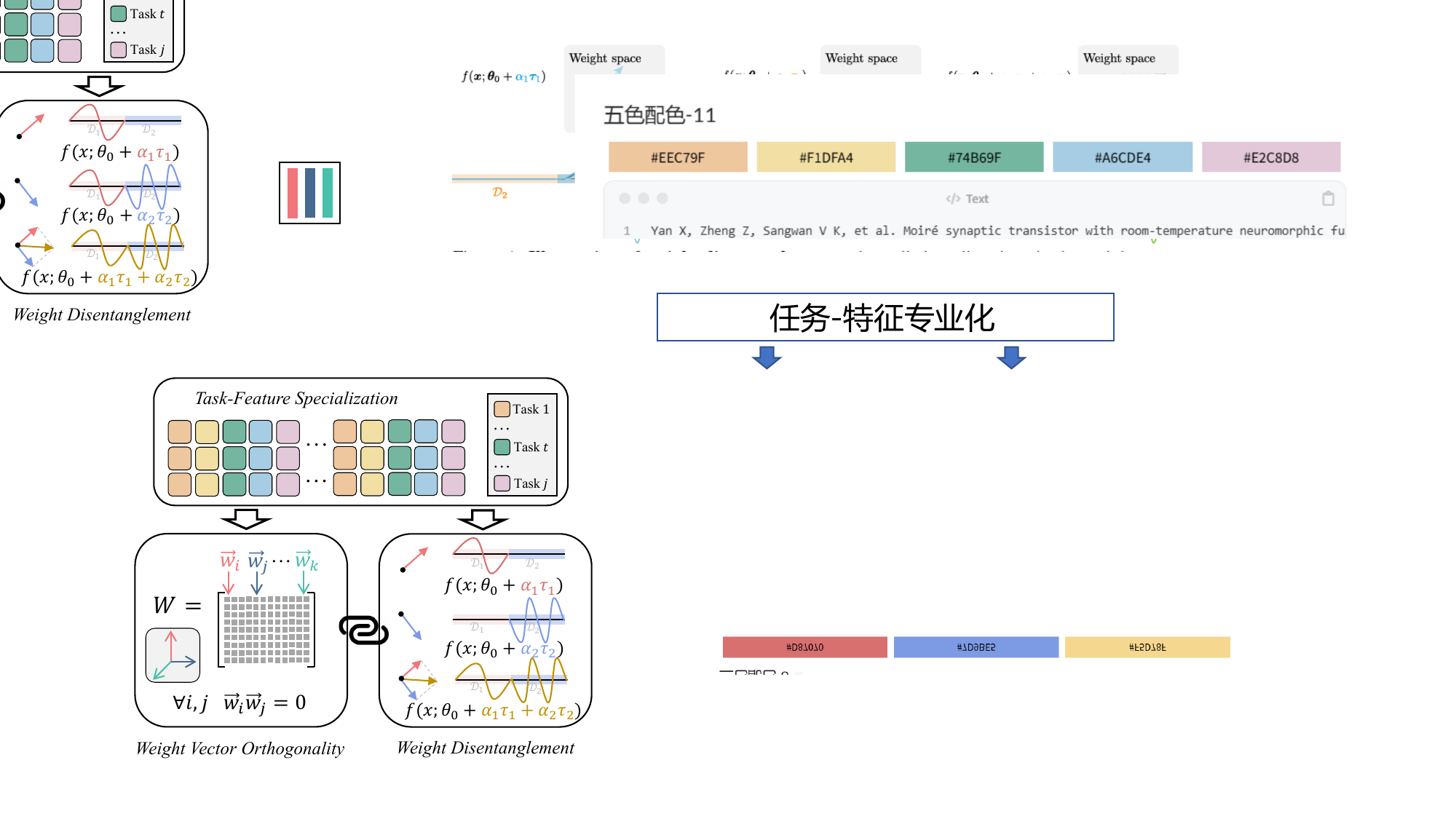}
   \caption{ 
   Conceptual illustration of our central thesis: Task-Feature Specialization (TFS) is proposed and shown as the common cause that connects the geometric property of Weight Vector Orthogonality (WVO) with the functional property of Weight Disentanglement (WD). This paper establishes this connection in two ways: first, by proving that TFS, which gives rise to inherent orthogonality in the pre-trained model $\theta_0$, is a sufficient condition for ideal disentanglement; and second, by proposing a method that actively enforces this structure on weight updates ($\Delta W$) that constitute $\tau_t$ to promote disentanglement in realistic scenarios.}
   \label{fig:concept}
   \vspace{-16pt}
\end{figure}

To address this challenge, model merging~\cite{Yang2024,Stoica2025,Adamerging2024} has recently emerged as an efficient, training-free paradigm. Instead of costly retraining, model merging operates post-hoc combination of the weights of multiple specialized models, each fine-tuned for a specific task, to create a single, multi-talented model. Among these methods, task arithmetic~\cite{ta2023} is particularly elegant. It operates by representing the knowledge for a task $t$ as a task vector, defined by the parameter shift $\tau_t=\theta_t-\theta_0$ from the pre-trained weights $\theta_0$ to the fine-tuned weights $\theta_t$. By simply adding or subtracting these task vectors, one can compose, remove, or even draw analogies between different skills, all without the need for costly joint training.

Despite its empirical success, a fundamental question remains: \textbf{why does task arithmetic work?} Answering this is critical to transforming task arithmetic from an empirical curiosity into a reliable engineering tool and to improve it beyond its current limitations, especially in critical applications where predictability and trustworthiness are significant. The concept of \textbf{Weight Disentanglement}, introduced in \textit{Tangent Task Arithmetic} (TTA)~\cite{tta2023}, offers a partial answer. It posits that in an ideal scenario, the effects of different task vectors are isolated to their respective data domains, thus preventing destructive interference. However, weight disentanglement is more of a phenomenological description of the desired outcome than an explanation of its fundamental cause. The existing literature does not fully specify what properties of the pre-trained model ($\theta_0$) or the task vectors ($\tau_t$) are necessary to achieve this state.

This gap in understanding motivates our work. To move from description to explanation, we must address two questions. \textbf{\textit{i.} What makes a model $\theta_0$ good for task arithmetic? \textit{ii.} How to construct a good $\tau_t$?} The first question asks: what intrinsic properties of a pre-trained model $\theta_0$ make it inherently suitable for achieving weight disentanglement? Without answering this question, we will not be able to select or design foundation models that are naturally amenable to effective model editing, leaving performance to chance. The second question addresses the construction of the task vectors themselves: how can we construct task vectors $\tau_t$ that actively promote weight disentanglement? Without answering this, standard fine-tuning offers no guarantee of producing task vectors that compose well and often leads to suboptimal, interference-prone results.

In this paper, we identify Task-Feature Specialization (TFS), the model's ability to allocate distinct internal features to different tasks, as the key underlying principle. We first prove that TFS is a sufficient condition for Weight Disentanglement (WD) (\Cref{theorem:pretrain}). More importantly, we find that TFS also gives rise to a geometric consequence: Weight Vector Orthogonality (WVO). This positions TFS as the common cause for both the desired functional outcome (WD) and an observable geometric consequence (WVO),  a conceptual relationship shown in \Cref{fig:concept}. We can thus answer Question \textit{i}: models that achieve TFS are effective at disentanglement, and WVO provides a possible indicator for this abstract property.
However, TFS is an ideal property that rarely holds for a pre-trained model $\theta_0$ in practice. This challenge motivates us to resort to $\tau_t$, that is, answering Question \textit{ii}. To address this, we propose a method that enforces an internal orthogonal structure on weight updates ($\Delta W$) that constitute $\tau_t$ and theoretically show its efficacy (\Cref{theorem:ortho_updates}). We also establish a theoretical connection between our approach and TTA~\cite{tta2023}, showing both converge on the same principle: inter-task vector orthogonality.

Our main contributions can be summarized as follows.
\begin{itemize}
    \item We put forward a theory for the success of task arithmetic, identifying task-feature specialization as a sufficient condition for weight disentanglement and then weight vector orthogonality as its geometric consequence.
    \item Based on the theory, we propose OrthoReg, a regularization method that actively promotes disentanglement by enforcing orthogonality on weight updates, for which we provide a rigorous theoretical proof of efficacy.
    \item We establish a theoretical connection between our method and the existing work TTA and reveal that they both succeed by achieving inter-task vector orthogonality.
    \item We experimentally demonstrate that OrthoReg consistently and significantly improves the performance of various task arithmetic methods.
\end{itemize}
\section{Related Work}
\textbf{Model Merging and Task Arithmetic.}
Task Arithmetic~\cite{ta2023} is a model merging technique that combines models by algebraically manipulating their ``task vectors". While this approach avoids costly retraining~\cite{Yang2024}, it often suffers from destructive interference when composing multiple tasks. Existing solutions to this problem can be broadly classified into two categories~\cite{Yang2024}: during-merging and pre-merging methods. During-merging methods design sophisticated algorithms to combine already-trained models~\cite{Tiesmerging2023, Adamerging2024, tsv2025}. In contrast, pre-merging methods aim to create more ``mergeable" models from the outset by modifying the fine-tuning process~\cite{tta2023,linearatt2025,Zhang2025,Tang2024,taujp2025}. Our proposed method, OrthoReg, belongs to the pre-merging category.

Key theoretical work in this area includes Tangent Task Arithmetic (TTA)~\cite{tta2023}, which introduces the crucial concept of weight disentanglement and shows that fine-tuning in linearized tangent space promotes it. Work~\cite{linearatt2025} further demonstrates that fine-tuning only the attention modules also enhances it. 
Concurrent works have provided generalization analyses for nonlinear Transformers based on data-dependent task correlations~\cite{Li2025} and established theoretical bounds that explicitly require task vectors to be nearly orthogonal~\cite{Zeng2025}.
Our work provides a more fundamental explanation through Task-Feature Specialization (TFS) and proposes enforcing its geometric consequence, orthogonality, as a direct mechanism to mitigate interference.

\noindent \textbf{Orthogonality in Neural Networks.}
The geometric properties of weights, particularly orthogonality, have been well-studied for their role in improving training stability, generalization, and efficiency~\cite{Saxe2014, Arjovsky2016, MiyatoKKY2018, He2015, aoft2025}.
It has been successfully applied in RNNs to prevent vanishing/exploding gradients~\cite{Arjovsky2016} and in GANs via Spectral Normalization to stabilize discriminator training~\cite{MiyatoKKY2018}.
Our work repurposes this powerful geometric constraint for a novel application: task arithmetic. We demonstrate that actively shaping the geometry of weight updates to be orthogonal is a direct and effective way to mitigate interference in task arithmetic.
\section{Preliminaries and Problem Formulation}
In this section, we first define the basic setup of task arithmetic, then introduce the concept of weight disentanglement. Finally, we introduce Neural Tangent Kernel (NTK) linearization hypothesis.

\subsection{Basic Setup and Notation}

Let $f(x; \theta)$ be a neural network parameterized by weights $\theta \in \mathbb{R}^P$, with initial pre-trained weights denoted as $\theta_0$. After fine-tuning on a task $t$, the new weights are $\theta_t^*$. Following the literature~\cite{ta2023}, the task vector $\tau_t$ is defined as the parameter shift, 
\begin{equation}\small
\setlength\abovedisplayskip{5pt}
\setlength\belowdisplayskip{5pt}
    \tau_t = \theta_t^* - \theta_0.
\end{equation}
The task vector $\tau_t$ encapsulates the parameter modifications required for the model to adapt to task $t$. Task arithmetic then performs algebraic operations on these task vectors to create a multi-task model. Specially, combining a set of tasks $\mathcal{T} = \{1,\dots, T\}$ via task addition is achieved by
\begin{equation}\small
\setlength\abovedisplayskip{5pt}
\setlength\belowdisplayskip{5pt}
    \theta_{\text{MT}}=\theta_0 +\sum_{t=1}^T\alpha_t\tau_t,
\end{equation}
where $\alpha_t$ is scalar coefficient of each task. Our theoretical analysis focuses on the parameters of linear layers (\textit{e.g.}, FC layers and attention projections). This simplification is well-justified by their central role in modern architectures~\cite{Kaplan2020,Zhang2022} and model merging practices~\cite{ta2023,Modelsoup2022,Tiesmerging2023}, with a detailed rationale provided in Appendix A.

\subsection{The Weight Disentanglement Property}\label{subsec:wd}
The concept of weight disentanglement introduced by the seminal work~\cite{tta2023} is a key property proposed to explain the success of task arithmetic.

\begin{definition}[Weight Disentanglement]\label{defin:weight_disentanglement}
    A model $f$ satisfies weight disentanglement at $\theta_0$ with respect to a set of tasks $\mathcal{T}$ with data domains $\{\mathcal{D}_t\}_{t=1}^T$ if, for any set of scalar coefficients $\{\alpha_t\}_{t=1}^T$, the following relationship holds,
\begin{equation}\small
\setlength\abovedisplayskip{5pt}
\setlength\belowdisplayskip{5pt}
\begin{aligned}
    f(x; \theta_0 + \sum_{t=1}^T \alpha_t \tau_t)=\begin{cases} f(x; \theta_0 + \alpha_i \tau_i), & \text{if } x \in \mathcal{D}_i\\
    f(x; \theta_0). & \text{if } x \notin \bigcup_{t=1}^T \mathcal{D}_t \end{cases}
\end{aligned}
\label{formula:weight_disentanglement}
\end{equation}
\end{definition}
Intuitively, this property requires that a merged model's behavior on a specific task's data depends only on that task's vector, while reverting to pre-trained behavior for out-of-domain data. Disentanglement can stem from the inherent properties of the pre-trained model $\theta_0$ or from the specific construction of the task vectors $\tau_t$. Accordingly, our work investigates both the ideal properties of $\theta_0$ and a method for constructing $\tau_t$ to actively promote disentanglement.

\subsection{The NTK Linearization Hypothesis}\label{subsec:ntk}

Consistent with the literature~\cite{tta2023}, our analysis relies on the Neural Tangent Kernel (NTK)~\cite{ntk2018} linearization hypothesis, which approximates the model's output for a small parameter change $\tau$ with a first-order Taylor expansion around $\theta_0$,
\begin{equation}\small
\setlength\abovedisplayskip{5pt}
\setlength\belowdisplayskip{5pt}
f(x; \theta_0 + \tau) \approx f(x; \theta_0) + \tau^{\top} \nabla_\theta f(x; \theta_0).
\end{equation}
Here, $J(x) := \nabla_\theta f(x; \theta_0)$ is the Jacobian of the model's output with respect to its parameters.

\section{The Proposed Framework}
In this section, we theorize that Task-Feature Specialization (TFS) is the key principle for task arithmetic. In the first part, we demonstrate that TFS is a sufficient condition for weight disentanglement (WD) and also naturally leads to weight vector orthogonality (WVO). This suggests that WD and WVO are correlated effects of the common cause, TFS. While TFS is rare in practice, its geometric consequence, orthogonality, offers a tangible objective. This motivates our method in second part: actively enforcing orthogonality on weight updates ($\Delta W$) to mitigate interference and improve disentanglement.

\subsection{An Equivalent Condition for Disentanglement}\label{subsec:EquivalentCondition_for_Disentanglement}
To clearly reveal the underlying mechanism of weight disentanglement, we first focus our analysis on the interaction between two tasks, $t$ and $j$. The core in-domain component of the weight disentanglement property (see \Cref{defin:weight_disentanglement}) can be simplified to,
\begin{equation}\small
\setlength\abovedisplayskip{5pt}
\setlength\belowdisplayskip{5pt}
    f(x;\theta_0+\tau_t+\tau_j)\;=\;f(x;\theta_0+\tau_t),\qquad \forall\,x\in \mathcal{D}_t.\label{formula:simplified_wd}
\end{equation}
This simplification is sufficient for our analysis  because the linearity of interference ensures that pairwise results extend to the multi-task case. Our analysis concentrates on this in-domain condition, which is the core challenge, while the out-of-domain case follows from similar logic. A detailed justification for this reframing is provided in Appendix B.

We can now reframe the condition of weight disentanglement into a tractable one using the NTK linearization hypothesis, as formalized in the following lemma.
\begin{lemma}
\label{lemma:disentanglement_condition}
Under the NTK linearization hypothesis, weight disentanglement between tasks $t$ and $j$ is equivalent to the interference term from task $j$ being approximately zero on the data domain of task $t$:
\begin{equation}\small
\setlength\abovedisplayskip{5pt}
\setlength\belowdisplayskip{5pt}
\tau_j^{\top} J(x) = 0, \quad \forall x \in \mathcal{D}_t. \label{formula:wd_twotasks}
\end{equation}
\end{lemma}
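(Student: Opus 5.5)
We substitute the first-order expansion from \Cref{subsec:ntk} into both sides of the simplified in-domain condition \eqref{formula:simplified_wd} and compare. The whole argument takes place in the linearized regime, where $f(x;\theta_0+\tau)$ is identified with $f(x;\theta_0)+\tau^{\top}J(x)$. In that regime the model is affine in the parameter shift, so the two sides can be expanded independently.

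First, apply the expansion with $\tau=\tau_t+\tau_j$ to the left-hand side of \eqref{formula:simplified_wd}. By linearity of the inner product this gives $f(x;\theta_0)+\tau_t^{\top}J(x)+\tau_j^{\top}J(x)$. Applying the expansion with $\tau=\tau_t$ to the right-hand side gives $f(x;\theta_0)+\tau_t^{\top}J(x)$. Subtracting, the discrepancy between the merged model and the single-task model at a point $x$ is exactly the interference term $\tau_j^{\top}J(x)$.

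Both directions then follow at once. If \eqref{formula:wd_twotasks} holds for all $x\in\mathcal{D}_t$, the two linearized outputs coincide on $\mathcal{D}_t$, which is \eqref{formula:simplified_wd}. Conversely, if \eqref{formula:simplified_wd} holds on $\mathcal{D}_t$, the difference vanishes pointwise, which forces $\tau_j^{\top}J(x)=0$ there. For vector-valued outputs, $J(x)$ is read as the Jacobian matrix and the argument applies componentwise. For general coefficients $\alpha_t,\alpha_j$, one rescales $\tau_j$, and $\alpha_j\tau_j^{\top}J(x)=0$ for all $\alpha_j$ is equivalent to $\tau_j^{\top}J(x)=0$. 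The symmetric statement on $\mathcal{D}_j$ is obtained by swapping the roles of $t$ and $j$.

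The main difficulty is not algebraic but lies in interpreting ``equivalent''. The equivalence is exact for the linearized model. For the true network, the statement holds only up to the second-order Taylor remainder $O(\|\tau_t+\tau_j\|^2)$, and this is why the lemma says ``approximately zero''. I would state explicitly that the equivalence holds under the NTK hypothesis, i.e.\ with the remainders treated as negligible for small task vectors. I would not attempt to bound them uniformly, since the paper takes the hypothesis as given, following~\cite{tta2023}.
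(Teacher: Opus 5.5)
Your proposal is correct and follows essentially the same route as the paper. You substitute the first-order expansion into both sides of the two-task condition and cancel the common terms $f(x;\theta_0)$ and $\tau_t^{\top}J(x)$ to isolate $\tau_j^{\top}J(x)$, which is exactly what the paper does. Your extra remarks are sound refinements that the paper leaves implicit: both directions of the equivalence, vector-valued outputs, the coefficients $\alpha$, and the Taylor remainder.
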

Detailed proof in Appendix C. This condition forms the basis for our subsequent analysis, also identified as the key to disentanglement in recent literature~\cite{taujp2025}.

\subsection{Our Main Theorem}\label{sec:pretrain_ideal}
We now investigate the ideal conditions that enable perfect task arithmetic. We put forward the Task-Feature Specialization (TFS) property and show that it not only guarantees weight disentanglement but also gives rise to the geometric property of weight vector orthogonality.

\subsubsection{Task-Feature Specialization (TFS)}
To fundamentally explain why task arithmetic can work, we propose a new core concept: Task-Feature Specialization (TFS). It means that an ideal model, when faced with different tasks, intelligently allocates distinct internal features, represented by the column vectors of its weight matrices, to specific tasks. 
For instance, under the ideal TFS assumption, a task for classifying cars and a task for classifying MNIST digits would rely on two disjoint sets of internal features within the same layer of the model. We posit that this functional separation is the root of perfect task arithmetic. We formalize this intuition with the following definitions.

\begin{definition}[Task-Specialized Feature Set]\label{defin:task-spec}
    For a given linear layer with weight matrix $W$, we consider each column vector $\{w_k\}_{k=1}^d$ as extracting a ``base feature" whose activation is $z_k$. For a task $t$ with data domain $\mathcal{D}_t$, we define its specialized feature set $I_t \subseteq \{1, \dots, d\}$ as the set of indices for which the model's final output $f(x; \theta_0)$ is sensitive to the activation $z_k$ for inputs $x \in \mathcal{D}_t$. Formally, for any $k \notin I_t$, we have $\mathbb{E}_{x \sim \mathcal{D}_t} [|\frac{\partial f(x; \theta_0)}{\partial z_k}|] = 0$.
\end{definition}

We formalize our core assumption for the ideal case.

\begin{assumption}[Task-Feature Specialization]\label{assumption:task-specific}
    For two distinct tasks $t$ and $j$, their respective specialized feature sets, $I_t$ and $I_j$, are disjoint, \textit{i.e.}, $I_t \cap I_j = \emptyset$.
\end{assumption}

\subsubsection{TFS as a Sufficient Condition for WD}\label{subsec:ortho_pretrain_disanglement}
We now prove that TFS property is a sufficient condition for weight disentanglement. This provides a direct explanation for the success of task arithmetic: when the model functionally dedicates distinct features to distinct tasks.

\begin{restatable}{theorem}{thmPretrain}
\label{theorem:pretrain}
  Under the NTK linearization hypothesis (\Cref{subsec:ntk}) and the Task-Feature Specialization property, weight disentanglement between tasks $t$ and $j$ holds.
\end{restatable}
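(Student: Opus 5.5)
By \Cref{lemma:disentanglement_condition}, it suffices to show that $\tau_j^{\top} J(x)=0$ for all $x\in\mathcal{D}_t$. Following the paper's restriction to linear layers, I will work one layer at a time. Let $W$ be a layer's weight matrix with columns $\{w_k\}_{k=1}^d$, where column $w_k$ produces the activation $z_k = w_k^{\top} h(x)$ from the layer input $h(x)$. Let $\Delta W_j$ be the block of $\tau_j$ for this layer, with columns $\Delta w_{j,k}$. The chain rule gives
\begin{equation*}
\tau_j^{\top} J(x)\big|_{W} \;=\; \sum_{k=1}^d \frac{\partial f(x;\theta_0)}{\partial z_k}\, \Delta w_{j,k}^{\top} h(x).
\end{equation*}
The full interference term is the sum of these layer contributions, so it is enough to make each one vanish on $\mathcal{D}_t$.

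\textbf{Step 1: the sensitivities vanish off $I_t$.} For $k\notin I_t$, \Cref{defin:task-spec} gives $\mathbb{E}_{x\sim\mathcal{D}_t}\bigl[|\partial f/\partial z_k|\bigr]=0$. Since the integrand is nonnegative, $\partial f(x;\theta_0)/\partial z_k = 0$ for almost every $x\in\mathcal{D}_t$, and for every $x$ if we assume continuity or interpret ``$\forall x$'' as almost surely. Hence on $\mathcal{D}_t$ only the indices $k\in I_t$ contribute to the sum.

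\textbf{Step 2: $\tau_j$ is supported on $I_j$.} The task vector is produced by fine-tuning on $\mathcal{D}_j$. In the NTK regime each gradient step on a loss $\ell$ over $\mathcal{D}_j$ gives
\begin{equation*}
\nabla_{w_k}\mathcal{L}_j=\mathbb{E}_{x\sim\mathcal{D}_j}\Bigl[\ell'\,\frac{\partial f(x;\theta_0)}{\partial z_k}\,h(x)\Bigr],
\end{equation*}
and by Step 1 applied to $j$, this vanishes for $k\notin I_j$. Therefore $\Delta w_{j,k}=0$ for all $k\notin I_j$, since the accumulated updates stay in the span of the gradients at $\theta_0$.

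\textbf{Step 3: combine.} On $\mathcal{D}_t$ a term in the sum can be nonzero only if $k\in I_t$ (from Step 1) and $k\in I_j$ (from Step 2). By \Cref{assumption:task-specific}, $I_t\cap I_j=\emptyset$, so every term is zero. Summing over layers gives $\tau_j^{\top}J(x)=0$, and weight disentanglement follows from \Cref{lemma:disentanglement_condition}.

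The main obstacle is Step 2. The definitions constrain only the sensitivities of $f$ at $\theta_0$; nothing stated so far forces $\tau_j$ to be column-sparse. I plan to close this gap by appealing to the NTK linearization hypothesis, under which the Jacobian is fixed during fine-tuning, so gradient descent never moves the columns outside $I_j$. If that argument proves too strong to state rigorously, I will instead state the support of $\tau_j$ on $I_j$ as an explicit consequence of TFS-driven fine-tuning. A secondary subtlety is passing from an expectation equal to zero to pointwise vanishing, which holds almost surely; this is enough for the in-domain condition (\ref{formula:simplified_wd}).
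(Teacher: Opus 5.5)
Your proposal is correct and follows the paper's proof step for step. The paper likewise reduces to $\tau_j^{\top}J(x)\approx 0$ via Lemma~1 and splits the interference column-wise within each linear layer. It kills the Jacobian factor for $k\notin I_t$ using Definition~2, and kills $(\tau_j)_k$ for $k\notin I_j$ by writing the task vector as accumulated gradient steps whose sensitivity factor stays at its $\theta_0$ value under the NTK hypothesis (its supporting Proposition~1 in Appendix~D, which is exactly your planned fix for Step~2); it then concludes from $I_t\cap I_j=\emptyset$. Your almost-sure handling of the passage from $\mathbb{E}_{x\sim\mathcal{D}_t}\bigl[|\partial f/\partial z_k|\bigr]=0$ to pointwise vanishing is slightly more careful than the paper, which simply writes $\partial f/\partial z_k\approx 0$.
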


The proof is detailed in Appendix D.

\subsubsection{From TFS to Weight Vector Orthogonality}\label{subsec:weight_orthogonality}
More interestingly, we find that under the same TFS condition, a geometric property of the model's parameters can be derived: Weight Vector Orthogonality.

\begin{figure}[t]
  \centering
  \begin{subfigure}{0.49\linewidth}
    \includegraphics[width=\linewidth]{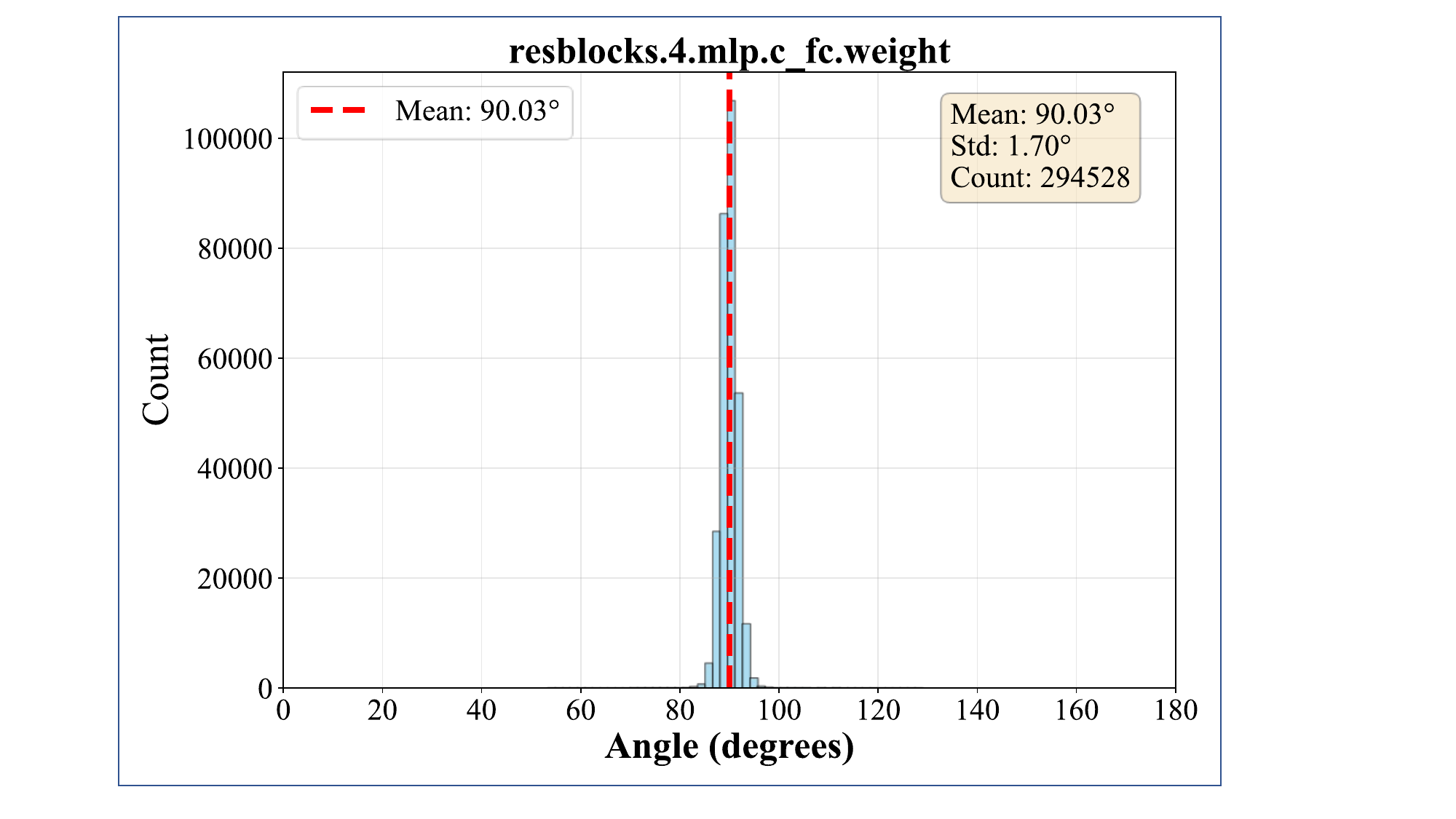}
    \caption{The distribution of angles between column vector pairs in a weight matrix.}
    \label{fig:layer_angle}
  \end{subfigure}
  \hfill
  \begin{subfigure}{0.49\linewidth}
    \includegraphics[width=\linewidth]{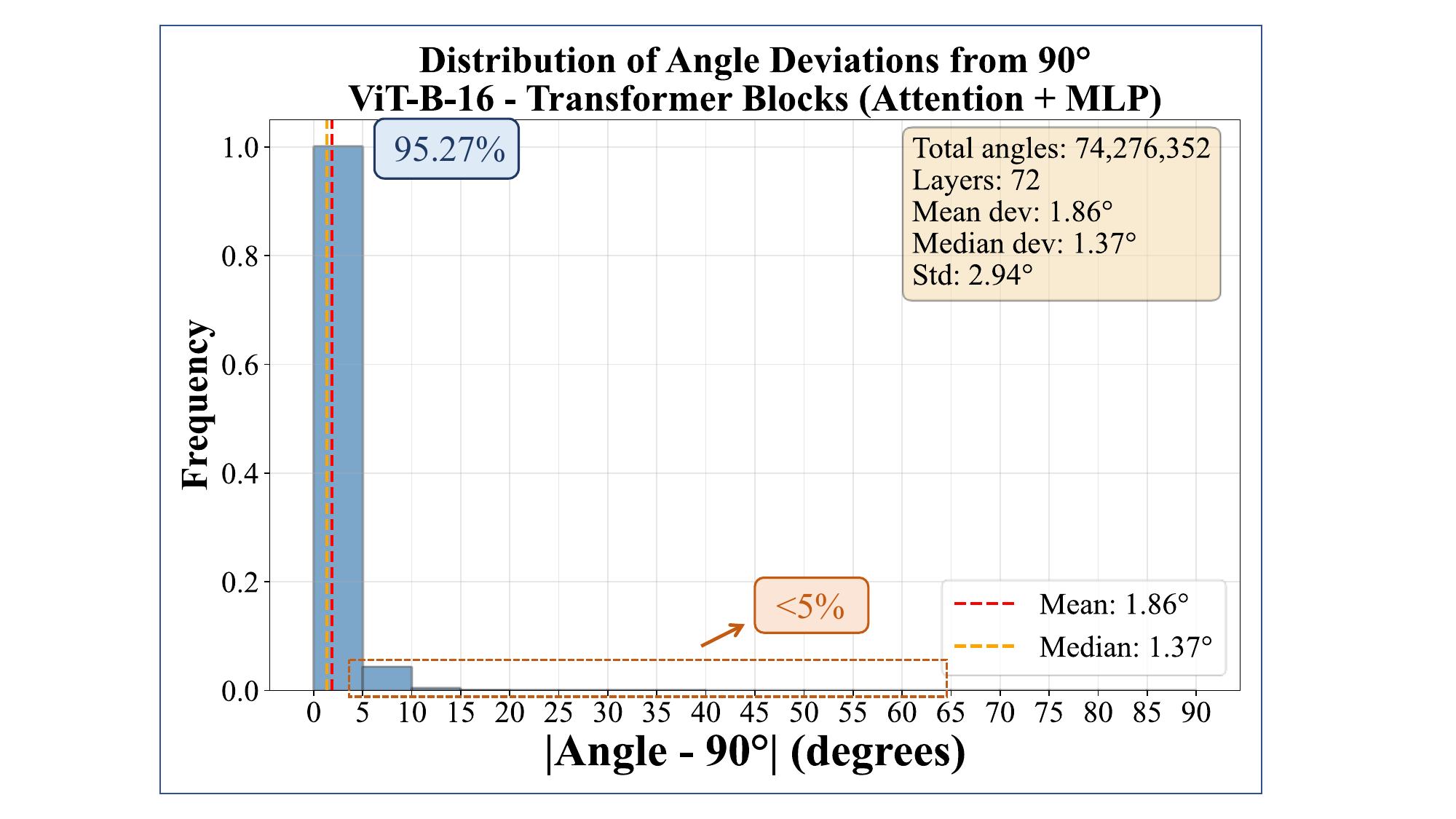}
    \caption{Statistical summary of angular deviations from $90^{\circ}$ across all linear layers of the model.}
    \label{fig:angle_statis}
  \end{subfigure}
  \caption{Empirical evidence of weight vector orthogonality in a pre-trained CLIP ViT-B/16.}
  \label{fig:angle}
  \vspace{-15pt}
\end{figure}

\begin{definition}[Weight Vector Orthogonality]\label{defin:weight_ortho}
A weight matrix $W \in \mathbb{R}^{m \times d}$ with column vectors $\{w_1, \dots, w_d\}$ is said to possess column orthogonality if its column vectors are mutually orthogonal, we distinguish between two key forms.

(a) Block Orthogonality. Given a partition of the column indices into disjoint sets $\{I_1, \dots, I_T\}$ (\textit{e.g.}, corresponding to different tasks), the matrix exhibits block orthogonality if any two vectors $w_k$ and $w_l$ from different sets are orthogonal (\textit{i.e.}, $\langle w_k, w_l \rangle = 0$ for all $k \in I_t, l \in I_j$ with $t \neq j$).

(b) Column-wise Orthogonality. The matrix exhibits column-wise orthogonality if all pairs of distinct column vectors are orthogonal (\textit{i.e.}, $\langle w_k, w_l \rangle = 0$ for all $k \neq l$). This can be seen as a special case of block orthogonality where each block contains only a single vector.
\end{definition}

The TFS property has a direct geometric consequence on the model's parameters. We can show that a model exhibiting TFS will naturally develop an orthogonal structure in its weights, which we formalize as the following corollary.
\begin{restatable}{corollary}{corollaryTFSWVO}
\label{corollary:TFS->WVO}
Given a model that adheres to the Task-Feature Specialization (TFS) property (Assumption~\ref{assumption:task-specific}), its weight matrices will exhibit Block Orthogonality.
\end{restatable}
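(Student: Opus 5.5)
We argue at the level of a single linear layer $W\in\mathbb{R}^{m\times d}$, with input $h(x)$ and pre-activation $z=W^{\top}h(x)$. Each column $w_k$ therefore produces the base feature $z_k=\langle w_k,h(x)\rangle$. The argument has three steps.

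\textbf{Step 1: TFS forces each task to use only its own columns.} By \Cref{defin:task-spec}, for $k\notin I_t$ we have $\mathbb{E}_{x\sim\mathcal{D}_t}|\partial f/\partial z_k|=0$, so $\partial f/\partial z_k=0$ almost surely on $\mathcal{D}_t$. The chain rule gives $\nabla_{w_k} f(x;\theta_0)=h(x)\,\partial f/\partial z_k$. Hence the Jacobian blocks for columns outside $I_t$ vanish on $\mathcal{D}_t$. Fine-tuning on task $t$ (gradient flow, or the NTK-linearized solution) then only updates columns indexed by $I_t$. The same holds for task $j$ and $I_j$.

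\textbf{Step 2: derive block orthogonality from the fact that the features are non-interfering.} Here I must commit to an interpretation, since TFS as stated constrains only sensitivities, not geometry. I would argue as follows. TFS says the features in $I_t$ carry task-$t$ information and are inert for task $j$, and vice versa. The directions $\mathrm{span}\{w_k:k\in I_t\}$ and $\mathrm{span}\{w_l:l\in I_j\}$ must then be read off independently: an input component that excites a task-$j$ feature must not excite any task-$t$ feature.

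Concretely, I would use the disentanglement already established in \Cref{theorem:pretrain} together with \Cref{lemma:disentanglement_condition}. Perturb a column $w_l$, $l\in I_j$, along the direction of a column $w_k$, $k\in I_t$. Such a perturbation is a legitimate task-$j$ update, because Step 1 allows task $j$ to move only columns in $I_j$. Its interference on $\mathcal{D}_t$ must be zero.

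Next, require that the pre-trained representation $h(x)$ on $\mathcal{D}_t$ spans the directions that the task-$t$ features read, i.e.\ $\mathrm{span}\{h(x):x\in\mathcal{D}_t\}\supseteq\mathrm{span}\{w_k:k\in I_t\}$. This is a nondegeneracy condition: a feature that task $t$ relies on must actually be activated by task-$t$ data. Under this condition, the vanishing interference for all admissible task-$j$ column updates, combined with the reading-out symmetry, yields $\langle w_k,w_l\rangle=0$.

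\textbf{Step 3: conclude.} Since the pair $(k,l)$ with $k\in I_t$, $l\in I_j$, $t\neq j$ was arbitrary, this gives Block Orthogonality in the sense of \Cref{defin:weight_ortho}(a), with the partition given by the sets $\{I_t\}$. Indices in no $I_t$ can be collected into an extra block, or the claim can be restricted to the union of the $I_t$.

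\textbf{Main obstacle.} The hardest step is Step 2. Vanishing gradients alone do not force orthogonality of the columns: one can build counterexamples in which the downstream layers simply ignore $z_k$. I therefore expect to need an explicit modelling assumption. The one I would try first is that the features are non-redundant: each $w_k$ with $k\in I_t$ lies in the span of task-$t$ representations, and task-$t$ representations produce zero activation on features in $I_j$ (a zero-activation reading of specialization, i.e.\ $\langle w_l,h(x)\rangle=0$ for $x\in\mathcal{D}_t$, $l\in I_j$). From these two facts, $\langle w_l,w_k\rangle=0$ follows immediately by linearity.

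If only the derivative-based definition is available, the fallback is to state the corollary as holding in this idealized sense. I would then argue that it is the natural geometric realization of TFS, rather than a consequence of the sensitivity condition alone.
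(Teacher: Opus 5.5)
Your Step~2 does not work as written. For $l\in I_j$ and $x\in\mathcal{D}_t$, TFS gives $\partial f/\partial z_l=0$. By your own Step~1, the interference of \emph{any} update $\delta w_l$ on $\mathcal{D}_t$ is therefore $\langle \delta w_l,h(x)\rangle\,\partial f/\partial z_l=0$, whatever its direction. Taking $\delta w_l\propto w_k$ imposes no constraint on $\langle w_k,w_l\rangle$. The nondegeneracy condition on $\mathrm{span}\{h(x):x\in\mathcal{D}_t\}$ cannot help, because it multiplies a factor that is identically zero. The ``reading-out symmetry'' that is supposed to close the argument is never defined. Invoking Theorem~1 adds nothing, since it is derived from the same sensitivity condition. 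Step~1 is also beside the point: it concerns which columns a task vector moves, whereas the corollary is about the columns of the pre-trained $W$ itself.

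Your closing diagnosis is accurate. Definition~2 and Assumption~1 constrain only the downstream map $z\mapsto f$, so a readout that uses $z_k$ on $\mathcal{D}_t$ and $z_l$ on $\mathcal{D}_j$ is compatible with arbitrary, even parallel, $w_k,w_l$. But your fallback then swaps in an activation-based hypothesis: $\langle w_l,h(x)\rangle=0$ on $\mathcal{D}_t$, together with $w_k\in\mathrm{span}\{h(x):x\in\mathcal{D}_t\}$. Under that hypothesis orthogonality is essentially assumed. The resulting two-line argument is valid, but it proves a different statement.

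The idea you are missing is the bridge the paper uses to pass from features (what TFS talks about) to columns. For $z=W^\top h(x)$ one has $\Sigma_z=W^\top\,\mathbb{E}_\mu[h(x)h(x)^\top]\,W$ over the task mixture $\mu$. The paper combines this with a whitening assumption motivated by normalization layers: $\mathbb{E}_\mu[h(x)]\approx 0$ and $\mathbb{E}_\mu[h(x)h(x)^\top]\approx I_m$. This gives $\Sigma_z\approx W^\top W$, i.e.\ $\langle w_k,w_l\rangle\approx\mathrm{Cov}_\mu(z_k,z_l)$.

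The cross-task covariances are then removed by the paper's decorrelation proposition (Appendix~E). Suppose $z_k\approx a z_l+b$ with $a\neq 0$ for $k\in I_t$, $l\in I_j$. Then on $\mathcal{D}_t$ the total derivative $df/dz_l=\partial f/\partial z_l+a\,\partial f/\partial z_k$ is non-negligible, which the paper declares incompatible with TFS. This step reads TFS as insensitivity in \emph{total} effect, so it too goes beyond the partial-derivative Definition~2. It also treats a statistical correlation as a functional dependence. Your suspicion that extra modelling is unavoidable is therefore justified.

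The difference is where the extra assumption sits. The paper keeps the gradient-based notion of specialization and adds second-order statistics (cross-task decorrelation plus whitened inputs). Your fallback assumes task-$j$ features are exactly silent on task-$t$ data, which is much closer to assuming the geometric conclusion outright.
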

The proof is detailed in Appendix E.

Empirically, we find that this predicted block orthogonality not only holds, but that the structure is often even stronger, approaching column-wise orthogonality.
In a pre-trained CLIP ViT-B/16 (\Cref{fig:angle}), the angles between all column vector pairs are sharply peaked at $90^{\circ}$. This suggests that pre-training pushes the entire weight matrix towards column-wise orthogonality (WVO) by also decorrelating features within the same task. (Full per-layer distributions are provided in Appendix J.1).

\subsubsection{Orthogonality as a Clue for Disentanglement}\label{subsubsec:clue_analysis}
Our analysis thus far provides an answer to the first question posed in Section~\ref{sec:intro}: What makes a pre-trained model $\theta_0$ good for task arithmetic? Our theory posits that the sufficient condition is task-feature specialization (TFS), which serves as a common cause for both weight disentanglement and block orthogonality. Although this geometric property should not be seen as a direct cause of disentanglement, it is a geometric consequence of the underlying functional separation (TFS) that effective training produces. TFS is an abstract property, but WVO provides a concrete, measurable signature. This relationship enables us to use WVO as a powerful diagnostic clue. As our Bayesian analysis suggests (see Appendix F), observing WVO in a model that has undergone effective training on diverse data strongly increases our belief that it has developed a TFS-like structure, and consequently, will exhibit disentanglement.

\subsection{Our Method OrthoReg}
Section~\ref{sec:pretrain_ideal} shows that task-feature specialization is sufficient for weight disentanglement. However, the TFS property is an idealization that rarely holds in practice. We now address this gap between the theory and realistic scenarios.

\setlength{\extrarowheight}{-1pt}
\begin{table*}[t]
\small
\centering
\caption{Task addition results on CLIP-based models. Performance of adding 8 task vectors on three architectures. Our proposed orthogonal regularization (+OrthoReg) is applied to several baselines, showing consistent improvements in both Absolute Accuracy (Abs.Acc.) and Normalized Accuracy (Norm.Acc.). An asterisk (*) denotes the best absolute accuracy for each model architecture.}
\label{tab:clip_task_addition}
\begin{tabular}{ccccccc}
\toprule
\multirow{2}{*}{\textbf{Method}} &
\multicolumn{2}{c}{\textbf{ViT-B-32, 8 tasks}} &
\multicolumn{2}{c}{\textbf{ViT-B-16, 8 tasks}} &
\multicolumn{2}{c}{\textbf{ViT-L-14, 8 tasks}} \\
\cmidrule(lr){2-3} \cmidrule(lr){4-5} \cmidrule(lr){6-7}
& \textbf{Abs.Acc.}\textbf{(\textuparrow)} & \textbf{Norm.Acc.} \textbf{(\textuparrow)}
& \textbf{Abs.Acc.}\textbf{(\textuparrow)} & \textbf{Norm.Acc.} \textbf{(\textuparrow)}
& \textbf{Abs.Acc.}\textbf{(\textuparrow)} & \textbf{Norm.Acc.} \textbf{(\textuparrow)} \\
\midrule
zero-shot & 47.74 & / & 54.22 & / & 64.54 & / \\
\midrule
Non-linear Finetuning~\cite{ta2023}  & 70.32 & 77.56 & 75.39 & 75.39 & 84.07 & 89.19 \\
Non-lin. FT\textbf{+OrthoReg} (ours) & \textbf{73.41} & \textbf{93.93} & \textbf{77.68} & \textbf{93.62} & \textbf{88.23} & \textbf{100.08} \\
\rowcolor{rowblue}
\quad  $\Delta$ & {\color{darkblue}+3.09} & {\color{darkblue}+16.37} & {\color{darkblue}+2.29} & {\color{darkblue}+18.23} & {\color{darkblue}+4.16} & {\color{darkblue}+10.89}  \\
\addlinespace[-2pt]
\midrule
Tangent Task Arithmetic~\cite{tta2023}  & 74.68 & 85.27 & 78.97 & 87.48 & 86.19 & 93.14 \\
TTA\textbf{+OrthoReg} (ours) & \textbf{76.35} & \textbf{91.81} & \textbf{79.85} & \textbf{88.02} & \textbf{87.52} & \textbf{96.44} \\
\rowcolor{rowblue}
\quad   $\Delta$ & {\color{darkblue}+1.67} & {\color{darkblue}+6.54} & {\color{darkblue}+0.88} & {\color{darkblue}+0.54} & {\color{darkblue}+1.33} & {\color{darkblue}+3.30} \\
\addlinespace[-2pt]
\midrule
Attention-Only Fine-tuning~\cite{linearatt2025} & 78.07 & 86.99 & 80.71 & 87.64 & 87.81 & 93.59 \\
ATT-FT\textbf{+OrthoReg} (ours) & {\textbf{80.87*}} & \textbf{99.76} & {\textbf{83.37*}} & \textbf{98.77} & {\textbf{90.41*}} & \textbf{100.05} \\
\rowcolor{rowblue}
\quad    $\Delta$ & {\color{darkblue}+2.80} & {\color{darkblue}+12.77} & {\color{darkblue}+2.66} & {\color{darkblue}+11.13} & {\color{darkblue}+2.60} & {\color{darkblue}+6.46} \\
\addlinespace[-2pt]
\midrule
LoRA-ATT & 73.84 & 84.29 & 75.51 & 83.17 & 87.02 & 93.33 \\
LoRA-ATT\textbf{+OrthoReg} (ours) & \textbf{76.00} & \textbf{86.10} & \textbf{79.67} & \textbf{87.70} & \textbf{89.16} & \textbf{95.50} \\
\rowcolor{rowblue}
\quad   $\Delta$ & {\color{darkblue}+2.16} & {\color{darkblue}+1.81} & {\color{darkblue}+4.16} & {\color{darkblue}+4.53} & {\color{darkblue}+2.14} & {\color{darkblue}+2.17} \\
\addlinespace[-2pt]
\bottomrule
\end{tabular}\label{tab:task_addition}
\vspace{-12pt}
\end{table*}

\subsubsection{The Challenge of Feature Overlap}
The core assumption underpinning our ideal case is that the specialized feature sets for distinct tasks are disjoint, \textit{i.e.}, $I_t \cap I_j = \emptyset$, that is often violated in practice, as distinct tasks could rely on shared underlying features. When this overlap occurs, \Cref{theorem:pretrain} does not apply anymore. Specifically, for a shared feature $k \in I_t \cap I_j$ and an input $x \in \mathcal{D}_t$, both the gradient term $\nabla_{w_k} f(x; \theta_0)$ (since $k$ is now relevant to task $t$) and the task vector component $(\tau_j)_k$ (since $k$ is now relevant to task $j$) are generally non-zero. Consequently, their inner product $\langle (\tau_j)_k, \nabla_{w_k} f(x; \theta_0) \rangle$ is highly likely to be non-zero, creating non-zero interference, breaking the disentanglement guarantee.

This reveals the limitations of relying solely on static $\theta_0$. In realistic scenarios where the ideal TFS property does not hold, the pre-trained model alone is insufficient to guarantee disentanglement. Consequently, the responsibility shifts towards the second avenue we identified in Section~\ref{subsec:wd}: the dynamic construction of the task vectors $\tau_t$ themselves. This brings us back to our second question: How can we actively construct ``good" task vectors that promote disentanglement, even when the ideal TFS condition is not met?

\subsubsection{Method: Orthogonal Regularizations}\label{subsec:method_reg}
Directly enforcing the abstract property TFS is intractable. Our theory suggests a practical alternative: enforcing its geometric consequence, orthogonality. While \Cref{corollary:TFS->WVO} proves that TFS leads to block orthogonality, it is hard to enforce this property as the number of feature blocks $I_t$ is implicitly determined by the specific task and cannot be known beforehand. To handle this, we propose enforcing a simpler and stronger condition: column-wise orthogonality on the weight updates. This forms the core motivation for our method, OrthoReg (\Cref{fig:orthoreg}). As will be proved in \Cref{theorem:ortho_updates}, enforcing this condition will minimize cross-task interference during model merging. 
In addition, from a representation learning perspective, this condition will encourage decorrelated intra-task features, which is a desirable inductive bias as it promotes a more efficient and less redundant feature basis for the task.

To achieve this, we introduce a novel regularization term to the standard fine-tuning loss function. The total loss for fine-tuning a model on a given task $t$ becomes,
\begin{equation}\small
\setlength\abovedisplayskip{5pt}
\setlength\belowdisplayskip{5pt}
    \mathcal{L}=\mathcal{L}_{\text{task}}(\theta_0+\Delta \theta)+\lambda \cdot \mathcal{L}_{\text{ortho}}(\Delta\theta),
\end{equation}
where $\mathcal{L}_{\text{task}}$ is the original task objective, $\Delta \theta$ represents all parameter updates, \textit{i.e.}, $\tau$. And $\lambda$ is a hyperparameter controlling the regularization strength, and $\mathcal{L}_{\text{ortho}}$ is our proposed orthogonal regularizer.

\begin{figure}[t]
  \centering
   \includegraphics[width=\linewidth]{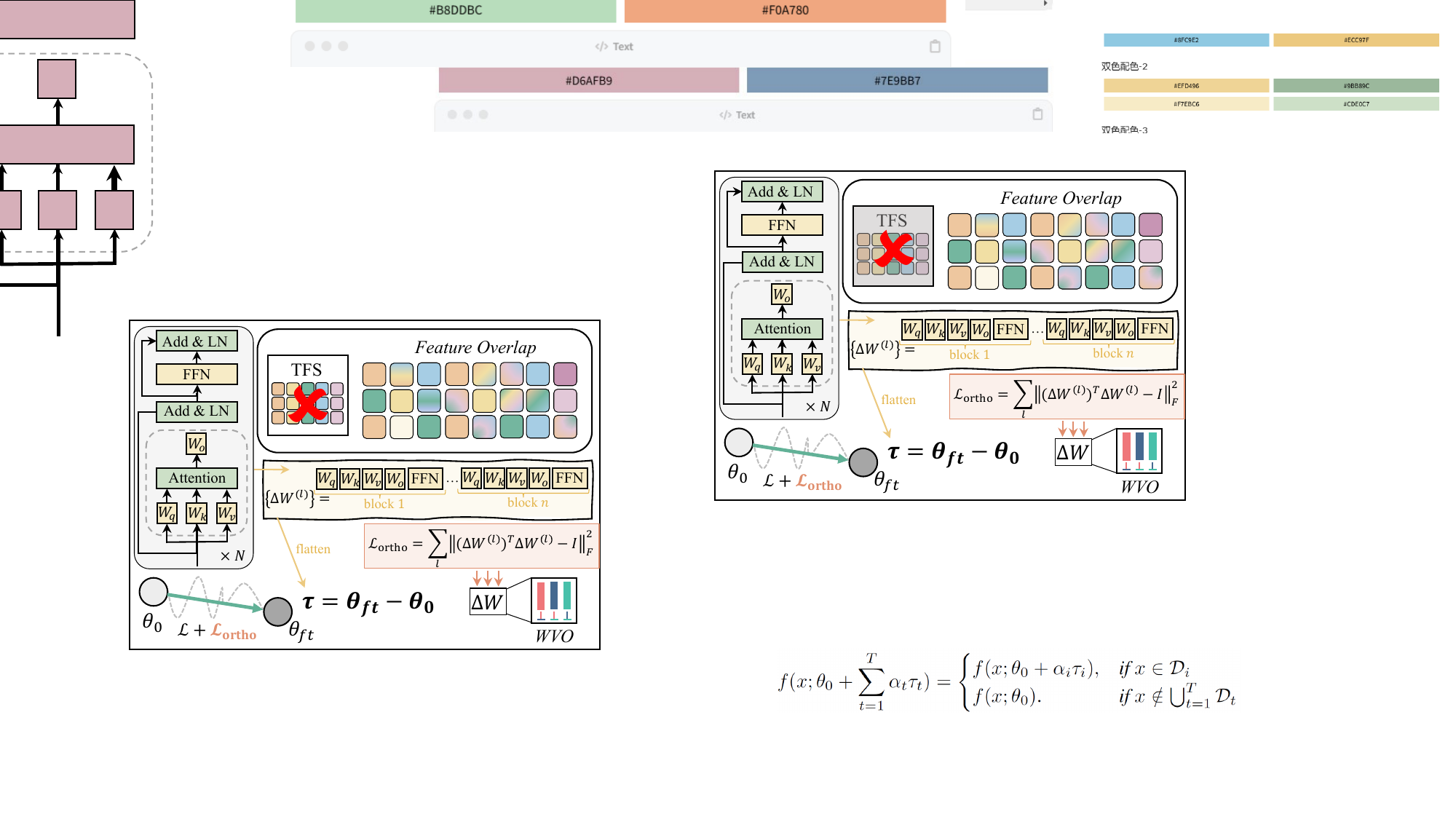}
   \caption{An overview of the OrthoReg method. It mitigates task interference caused by feature overlap by introducing $\mathcal{L}_{\text{ortho}}$. As illustrated for a Transformer block, this loss enforces an orthogonal structure on the weight updates ($\Delta W$) during fine-tuning.}
   \label{fig:orthoreg}
   \vspace{-15pt}
\end{figure}

\begin{definition}\label{defin:ortho_loss}
    The orthogonal regularization term is defined as the sum of penalties over all tuned linear layers,
\begin{equation}\small
\setlength\abovedisplayskip{5pt}
\setlength\belowdisplayskip{5pt}
    \mathcal{L}_{\text{ortho}}(\Delta\theta)=\sum_l \|(\Delta W^{(l)})^{\top}\Delta W^{(l)}-I\|_F^2,
\end{equation}
where the sum is overall linear layers $l$ being updated, $\Delta W^{(l)}$ is weight update matrix for that layer, $I$ is the identity matrix, and $\|\cdot\|_F^2$ denotes the squared Frobenius norm.
\end{definition}

This simple, plug-and-play regularizer penalizes the deviation of each update matrix's Gram matrix from the identity, thereby driving the columns of $\Delta W^{(l)}$ to become mutually orthogonal and have unit norm. 

\subsubsection{Theoretical Justification to OrthoReg}
We now present our second main theoretical result, which formalizes the effectiveness of our proposed method. This theorem shows that enforcing an orthogonal structure on weight updates serves as a key mechanism for disentanglement, even in the realistic scenario of feature overlap.

\begin{restatable}{theorem}{thmTau}\label{theorem:ortho_updates}
    Under the NTK linearization hypothesis (Section~\ref{subsec:ntk}), even if the Task-Feature Specialization property (Assumption 1) does not hold (\textit{i.e.}, $I_t \cap I_j \neq \emptyset$), constraining the task update matrices $\{\Delta W_t^{(l)}\}$ to be approximately internally orthogonal (as encouraged by the regularization in \Cref{defin:ortho_loss}) actively promotes weight disentanglement between tasks $t$ and $j$.
\end{restatable}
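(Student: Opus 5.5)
The plan is to reduce the claim to the interference criterion of \Cref{lemma:disentanglement_condition} and bound the interference term layer by layer. Under the NTK linearization, disentanglement between tasks $t$ and $j$ holds iff $\tau_j^{\top}J(x)=0$ for $x\in\mathcal{D}_t$. Restricting to linear layers, this term decomposes as
\begin{equation}\small
\tau_j^{\top}J(x)=\sum_l \langle \Delta W_j^{(l)}, \nabla_{W^{(l)}} f(x;\theta_0)\rangle_F .
\end{equation}
For a linear layer $z=W^{(l)\top}h$ (so that column $w_k$ produces activation $z_k$), the chain rule gives $\nabla_{W^{(l)}} f = h(x)\,g(x)^{\top}$ with $g_k=\partial f/\partial z_k$. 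Hence each summand equals $g(x)^{\top}\Delta W_j^{(l)\top} h(x)$. Split the sum over column indices into $k\notin I_t$ and $k\in I_t$. The first part vanishes in expectation by \Cref{defin:task-spec}, exactly as in the proof of \Cref{theorem:pretrain}. The residual interference is therefore carried only by the shared columns $k\in I_t\cap I_j$, i.e.\ $\sum_{k\in I_t\cap I_j} g_k(x)\,\langle(\Delta W_j)_k,h(x)\rangle$.

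The key step is to show that approximate internal orthogonality of the update matrices makes this residual small relative to the useful signal. I would argue as follows. The fine-tuning gradient of task $t$ lies in the span of $\{h(x)\}_{x\in\mathcal{D}_t}$, so the columns of $\Delta W_t$ that matter for task $t$ span a subspace $S_t$ aligned with task-$t$ inputs, and similarly for $S_j$.

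The regularizer forces $\Delta W^{\top}\Delta W\approx I$, so each update is close to a partial isometry with well-conditioned, decorrelated columns. I then combine this with task-$j$ optimality. If $\Delta W_j$ must encode task $j$ with orthonormal columns, the loss gradient drives those columns toward directions that are discriminative for $\mathcal{D}_j$. Decorrelation prevents them from collapsing onto the dominant shared directions common to both domains, which low-rank unregularized fine-tuning tends to produce. Concretely, I would write $\Delta W_j=\Delta W_j^{S_t}+\Delta W_j^{\perp}$, with the first term the projection onto $\mathrm{span}\{h(x):x\in\mathcal{D}_t\}$. I would then bound
\begin{equation}\small
|\tau_j^{\top}J(x)|\le \sum_l \|g^{(l)}(x)\|\,\|P_{S_t}\Delta W_j^{(l)}\|_{\mathrm{op}}\,\|h^{(l)}(x)\|.
\end{equation}
Under orthonormality, $\|P_{S_t}\Delta W_j\|_F^2=\mathrm{tr}(P_{S_t}\Delta W_j\Delta W_j^{\top})$ is at most $\dim S_t$ up to an $O(\sqrt{\mathcal{L}_{\text{ortho}}})$ correction. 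More importantly, it equals the squared cosines of principal angles between $S_t$ and the column space of $\Delta W_j$, with no amplification from correlated or large-norm columns. This is the point where I would invoke inter-task orthogonality: arguing $\langle \tau_t,\tau_j\rangle\approx 0$ as a consequence, consistent with the connection to TTA mentioned in the introduction.

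The main obstacle is this last step. Orthogonality \emph{within} $\Delta W_j$ does not by itself imply that its column space avoids $S_t$, so a worst-case bound cannot reach zero. I would first try a probabilistic or high-dimensional argument. Treat the column space of a near-orthonormal $\Delta W_j$ of rank $r$ in ambient dimension $m\gg r$ as generic with respect to task-$t$ directions not useful for task $j$. Then the expected projection onto $S_t$ is about $r\dim S_t/m$, compared with a much larger value for unregularized updates whose energy concentrates on shared top singular directions. This yields an explicit bound on $\mathbb{E}_{x\sim\mathcal{D}_t}|\tau_j^{\top}J(x)|$ that decreases as $\mathcal{L}_{\text{ortho}}\to 0$. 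That supports the qualitative claim that the regularizer ``actively promotes'' disentanglement rather than guaranteeing it exactly.
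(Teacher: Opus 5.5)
Your proposal has a genuine gap, and the first problem comes before the step you flag. Your reduction to the shared columns does not survive the constraint you then impose. Dropping the indices $k\in I_t\setminus I_j$ requires $(\Delta W_j)_k\approx 0$ for $k\notin I_j$. That is the argument behind Theorem 1: plain fine-tuning leaves task-irrelevant columns untouched. But $\|\Delta W_j^{\top}\Delta W_j-I\|_F^2\le\xi$ forces $(\|(\Delta W_j)_k\|_2^2-1)^2\le\xi$ for \emph{every} column. So under OrthoReg the columns irrelevant to task $j$ are pushed to norm close to $1$, and the terms $g_k(x)\langle(\Delta W_j)_k,h(x)\rangle$ with $k\in I_t\setminus I_j$ reappear. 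Within this framework, the regularizer creates interference there that unregularized fine-tuning avoided.

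The key step then fails as you half-anticipate, and your probabilistic fix contradicts the constraint. $S_t=\mathrm{span}\{h(x):x\in\mathcal{D}_t\}$ is generically the whole input space, and $h(x)\in S_t$ anyway, so projecting onto it changes nothing. An update with $\Delta W_j^{\top}\Delta W_j\approx I$ has full column rank with all singular values near $1$, not a generic rank-$r\ll m$ column space. For square layers such as the attention projections it is nearly orthogonal, so $\|P_{S_t}\Delta W_j\|_{\mathrm{op}}\to 1$ as $\mathcal{L}_{\text{ortho}}\to 0$. Your bound therefore tends to $\|g\|\,\|h\|$ rather than decreasing, and the estimate $r\dim S_t/m$ has nothing to apply to. Finally, you invoke $\langle\tau_t,\tau_j\rangle\approx 0$ but neither derive it nor link it to $\tau_j^{\top}J(x)$.

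The paper's proof does not rely on the column space avoiding a subspace. It rests on two ingredients your plan lacks.
\begin{itemize}
\item \textbf{Directional alignment.} Assume the loss derivative is roughly constant on $\mathcal{D}_t$ and a data-consistency condition $\sigma_J^2\ll\|\mu_J\|_2^2$ holds. Then $\tau_t\propto\mu_J$, and by Chebyshev a typical $J(x)$ is aligned with $\mu_J$. Hence $|\tau_j^{\top}J(x)|\approx\|\tau_j\|_2\,\|J(x)\|_2\,|\cos\angle(\tau_j,\tau_t)|$.
\item \textbf{Polar decomposition plus a Haar model.} Write $\Delta W=Q(I+E)$ with $\|E\|_F\le\|\Delta W^{\top}\Delta W-I\|_F$. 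Model $Q_t$ and $Q_j$ as independent Haar-distributed points of the Stiefel manifold, so $\mathbb{E}\langle Q_t,Q_j\rangle_F=0$ with concentration. The bound $\|\Delta W\|_F^2\le d+\sqrt{d\xi}$ controls the norm.
\end{itemize}
Smallness there comes from cancellation in a high-dimensional inner product, not from a column space missing $S_t$.

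If you want to keep your layerwise form $g^{\top}\Delta W_j^{\top}h$, the analogous repair is to model $Q_j$ as Haar and independent of $(g(x),h(x))$. This gives $\mathbb{E}[g^{\top}Q_j^{\top}h]=0$ and second moment $\|g\|_2^2\|h\|_2^2/m$, and it even bypasses the alignment step. But this independence is exactly what feature overlap calls into question. Like the paper's argument, it yields a heuristic ``promotes'' statement rather than a derivation from the orthogonality constraint alone.
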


\noindent \textbf{Proof Sketch.} Our goal is to show that in this case the interference term $\tau_j^\top J(x)$ is approximately $0$ for $x \in \mathcal{D}_t$. The proof first establishes that for a typical input $x$ from task $t$'s domain, its Jacobian $J(x)$ is directionally aligned with the task vector $\tau_t$. This allows us to reframe the interference by approximating the angle involving the Jacobian, $\angle(\tau_j, J(x))$, with the angle between task vectors, $\angle(\tau_j, \tau_t)$,
\begin{equation}\small
\setlength\abovedisplayskip{5pt}
\setlength\belowdisplayskip{5pt}
    |\tau_j^\top J(x)| \approx ||\tau_j||_2 \cdot ||J(x)||_2 \cdot |\cos\angle(\tau_j, \tau_t)|.
\end{equation}
Then we demonstrate that our regularizer implements a dual control mechanism over the resulting interference expression. (1) Norm Control. It inherently bounds the magnitude of the task vector $\|\tau_j\|_2$. (2) Angle Control. More critically, by enforcing an internal orthogonal structure on each update matrix, it drives the angle between the different task vectors, $\angle(\tau_j, \tau_t)$, statistically towards 90 degrees. By simultaneously bounding the norm and nullifying the angle term, the regularizer ensures the expected interference is negligible, thus establishing weight disentanglement. The full proof is detailed in Appendix G.

\Cref{theorem:ortho_updates} provides a constructive answer to the second question in \Cref{sec:intro}: How can we construct good task vectors $\tau_t$ that promote disentanglement? Our analysis shows that \textbf{actively enforcing an internal orthogonal structure on $\Delta W$ serves as a direct and effective mechanism for achieving weight disentanglement}.

\subsection{Connection between Our Work and TTA}\label{subsec:connection_tta}
As a seminal theoretical analysis in task arithmetic, Tangent Task Arithmetic (TTA)~\cite{tta2023} demonstrates that fine-tuning within the tangent space of the pre-trained model $\theta_0$ effectively promotes weight disentanglement. We now connect our analysis with their findings. Our investigation reveals that \textit{both methods, despite their different implementations, derive their effectiveness from a shared underlying mechanism}: enforcing orthogonality between different task vectors ($\langle\tau_t, \tau_j\rangle \approx 0$), \textit{i.e.}, the ``Angle Control" part of our proof for \Cref{theorem:ortho_updates}.

Specifically, OrthoReg enforces orthogonality explicitly via a regularizer. In contrast, TTA achieves this implicitly through the model's NTK geometry, but at a high computational cost. TTA's reliance on Jacobian calculations can double memory usage and increase training time by 2-3x~\cite{linearatt2025}, posing a significant barrier to adoption. OrthoReg thus offers a more direct, efficient, and scalable alternative. A detailed derivation of this connection and a comparative analysis are provided in Appendix H.

\section{Experiments}
\subsection{Experimental Setup}
\textbf{Datasets and tasks.}
We follow the evaluation protocol established by~\cite{ta2023} and~\cite{tta2023}. The primary benchmark consists of eight diverse image classification datasets: Cars~\cite{Krause2013CollectingAL}, DTD~\cite{Cimpoi_2014_CVPR}, EuroSAT~\cite{Helber2017EuroSATAN}, GTSRB~\cite{Stallkamp2011TheGT}, MNIST~\cite{726791}, RESISC45~\cite{Cheng2017RemoteSI}, SUN397~\cite{5539970}, and SVHN~\cite{37648}. 

\noindent\textbf{Models and training methods.}
In our experiments, we adopt CLIP-pretrained Vision Transformers~\cite{clip2021} as pretrained model, including ViT-B-32, ViT-B-16 and ViT-L-14. During fine-tuning, the text encoder is frozen, and the image encoder can be updated. The regularization strength is selected via validation within the range [0.1, 100]. %

\setlength{\extrarowheight}{-1pt}
\begin{table*}[t]
\small
\centering
\caption{The minimum average Target Accuracy (Tar.Acc.) achievable while maintaining at least 95\% of the zero-shot accuracy on the ImageNet control task (Con.Acc.). Our proposed orthogonal regularization (+OrthoReg) shows a consistent and significant improvement in forgetting the target task. An asterisk (*) denotes the best (lowest) target accuracy for each model architecture.}
\label{tab:clip_task_negation}
\begin{tabular}{ccccccc}
\toprule
\multirow{2}{*}{\textbf{Method}} &
\multicolumn{2}{c}{\textbf{ViT-B-32, 8 tasks}} &
\multicolumn{2}{c}{\textbf{ViT-B-16, 8 tasks}} &
\multicolumn{2}{c}{\textbf{ViT-L-14, 8 tasks}} \\
\cmidrule(lr){2-3} \cmidrule(lr){4-5} \cmidrule(lr){6-7}
& \textbf{Tar.Acc.}\textbf{(\textdownarrow)} & \textbf{Con.Acc.} \textbf{(\textuparrow)}
& \textbf{Tar.Acc.}\textbf{(\textdownarrow)} & \textbf{Con.Acc.} \textbf{(\textuparrow)}
& \textbf{Tar.Acc.}\textbf{(\textdownarrow)} & \textbf{Con.Acc.} \textbf{(\textuparrow)} \\
\midrule
zero-shot & 47.74 & 66.70 & 54.22 & 68.34 & 64.54 & 77.44 \\
\midrule
Non-linear Finetuning~\cite{ta2023}  & 25.05 & 63.91 & 20.29 & 66.38 & 18.09 & 74.39 \\
Non-lin. FT\textbf{+OrthoReg} (ours) & \textbf{18.55} & 64.07 & \textbf{19.51} & 67.42 & \textbf{16.33} & 75.39 \\
\rowcolor{rowblue}
\quad  $\Delta$ & {\color{darkblue}-6.50} & {\color{darkblue}+0.16} & {\color{darkblue}-0.78} & {\color{darkblue}+1.04} & {\color{darkblue}-1.76} & {\color{darkblue}+1.00} \\
\addlinespace[-2pt]
\midrule
Tangent Task Arithmetic~\cite{tta2023}  & 11.47 & 63.99 & 9.33 & 66.82 & 8.36 & 74.39 \\
TTA\textbf{+OrthoReg} (ours) & {\textbf{11.39*}} & 64.07 & {\textbf{7.49*}} & 66.73 & {\textbf{8.36*}} & 74.87 \\
\rowcolor{rowblue}
\quad   $\Delta$ & {\color{darkblue}-0.06} & {\color{darkblue}+0.08} & {\color{darkblue}-1.84} & {\color{darkblue}-0.09} & {\color{darkblue}+0.00} & {\color{darkblue}+0.48} \\
\addlinespace[-2pt]
\midrule
Attention-Only Fine-tuning~\cite{linearatt2025}  & 19.39 & 64.90 & 19.20 & 67.75 & 24.85 & 76.42 \\
ATT-FT\textbf{+OrthoReg} (ours) & \textbf{15.67} & 64.16 & \textbf{14.78} & 66.81 & \textbf{14.67} & 75.40 \\
\rowcolor{rowblue}
\quad    $\Delta$ & {\color{darkblue}-3.72} & {\color{darkblue}-0.74} & {\color{darkblue}-4.42} & {\color{darkblue}-0.94} & {\color{darkblue}-10.18} & {\color{darkblue}-1.02}\\
\addlinespace[-2pt]
\midrule
LoRA-ATT & 20.10 & 64.51 & 19.44 & 67.28 & 22.17 & 75.81 \\
LoRA-ATT\textbf{+OrthoReg} (ours) & \textbf{19.19} & 64.43 & \textbf{17.25} & 67.08 & \textbf{13.94} & 74.45 \\
\rowcolor{rowblue}
\quad   $\Delta$ & {\color{darkblue}-0.91} & {\color{darkblue}-0.08} & {\color{darkblue}-2.19} & {\color{darkblue}-0.20} & {\color{darkblue}-8.23} & {\color{darkblue}-1.36}\\
\addlinespace[-2pt]
\bottomrule
\end{tabular}\label{tab:task_negation}
\vspace{-15pt}
\end{table*}

\noindent \textbf{Baselines.}
We evaluate our proposed orthogonal regularization against several state-of-the-art task arithmetic methods. For each baseline, we report the performance when our regularizer is applied, denoted by the ``+OrthoReg" suffix. The primary baselines are based on full-parameter fine-tuning. (1) Non-linear Fine-tuning (Nonlin.~FT). The standard task arithmetic approach~\cite{ta2023}. (2) Tangent Task Arithmetic (TTA)~\cite{tta2023} that fine-tunes on a linearized version of the model. (3) Attention-Only Fine-tuning (ATT-FT) that fine-tunes only attention modules~\cite{linearatt2025}. In addition, we investigate the effectiveness of OrthoReg on Parameter-Efficient Fine-Tuning (PEFT) approaches, such as LoRA~\cite{lora2022}. Our main result tables include a strong PEFT baseline, LoRA-ATT, where adapters are applied to the query, key, value, and output projections. A detailed analysis of other LoRA configurations is presented in Appendix J.5.

\noindent\textbf{Evaluation metrics.} Consistent with ~\cite{ta2023,tta2023,linearatt2025}, we use two metrics to evaluate performance. (1) Absolute Accuracy (Abs.Acc.), the standard classification accuracy of the merged multi-task model. (2) Normalized Accuracy (Norm.Acc.), which measures the performance of the multi-task model relative to individually fine-tuned single-task models. The definition is in Appendix I.

To ensure a fair comparison, a single, uniform scaling coefficient $\alpha$ is applied to the sum of all task vectors (\textit{i.e.}, $\theta_{MT} = \theta_0 + \alpha \sum \tau_t$). This single coefficient is optimized for each method via a grid search on $\{0.0, 0.05, \dots, 1.0\}$. We emphasize that we do not employ more complex, task-adaptive strategies that would assign a different coefficient $\alpha_t$ to each task vector. This approach, which is consistent with the evaluation protocols in prior work~\cite{tta2023,linearatt2025}, allows for a direct and fair comparison focused on the inherent quality of the task vectors produced by each method.

\subsection{Main Results on Task Addition}\label{subsec:results_addition}

The primary results for the task addition benchmark are summarized in \Cref{tab:task_addition}, where our method is applied to several leading task arithmetic methods across three scales of CLIP-based Vision Transformer models.

\textbf{Overall Performance Comparison.} \Cref{tab:task_addition} shows that our proposed orthogonal regularization (OrthoReg) consistently improves performance across all baselines and model scales. For instance, on the ViT-L-14 model, OrthoReg boosts the absolute accuracy of Non-lin.~FT~\cite{ta2023} by 4.16 points (from 84.07\% to 88.23\%) and enhances the seminal TTA~\cite{tta2023} by 1.33 points (from 86.19\% to 87.52\%). Similar gains are observed for other baselines, highlighting OrthoReg's effectiveness as a versatile, plug-and-play regularizer. These results empirically validate our hypothesis: actively enforcing an orthogonal structure on weight updates is a direct mechanism to mitigate interference. Notably, the ATT-FT+OrthoReg combination achieves the highest absolute accuracy across all tested configurations, establishing a new state-of-the-art on this benchmark. 

\begin{figure}[t]
  \centering
   \includegraphics[width=\linewidth]{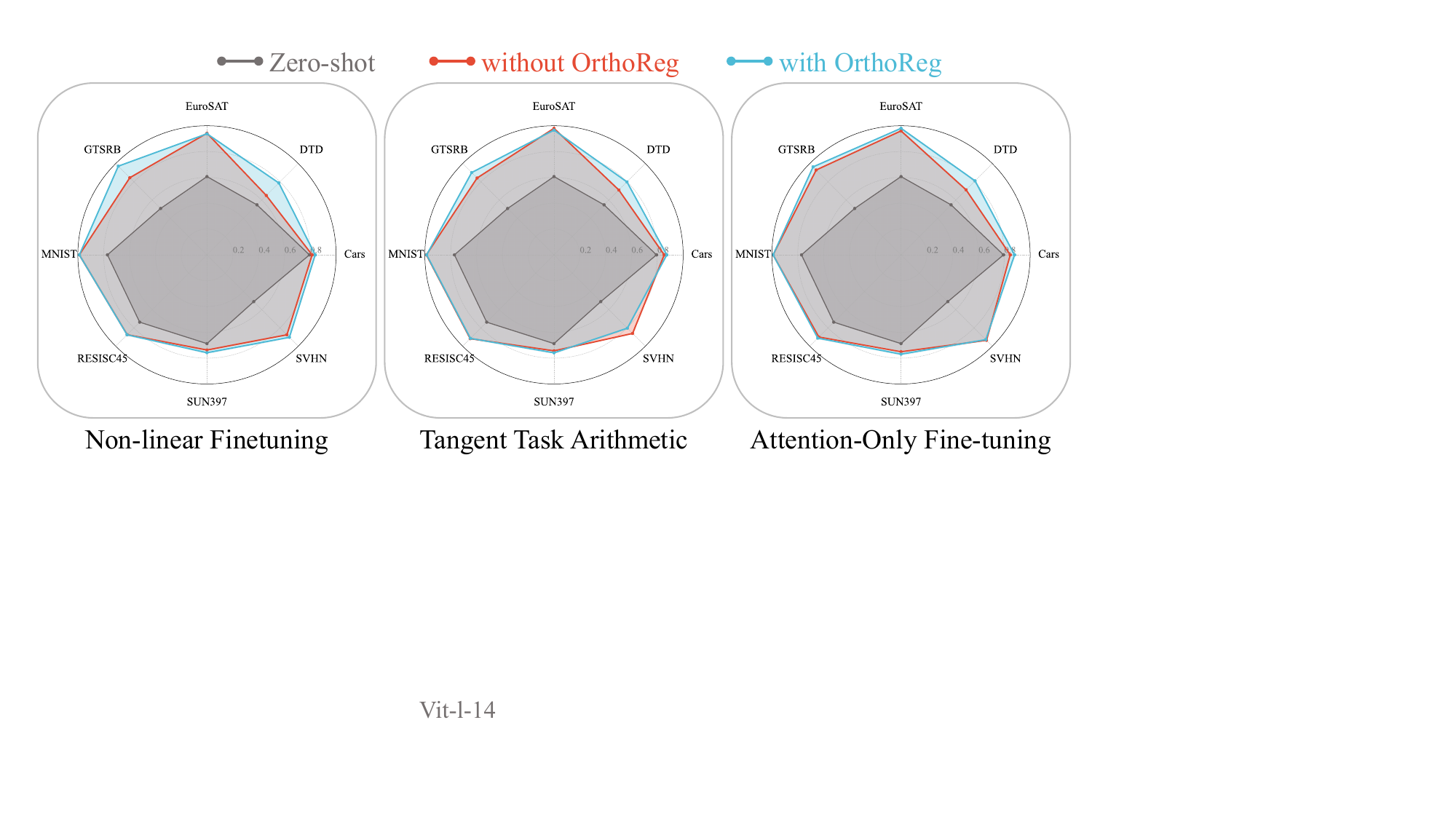}
   \caption{The accuracy of merged models (ViT-L-14) across the eight benchmark tasks. Each subplot shows the performance for a specific baseline method: zero-shot (gray), the baseline's merged model (red), and the baseline enhanced with our OrthoReg (blue). }
   \label{fig:radar-14}
   \vspace{-16pt}
\end{figure}

\textbf{Per-Task Performance Analysis.} \Cref{fig:radar-14} provides a per-task breakdown of these improvements on the ViT-L-14 model. The blue area, representing the performance with OrthoReg, shows a clear expansion compared to the red area (baseline) across the majority of tasks and methods. This demonstrates that the gains from OrthoReg are not merely an average effect but represent a balanced and widespread performance lift across most individual tasks. The corresponding radar charts for ViT-B-32 and ViT-B-16, which show similar trends, are included in Appendix J.2.

\textbf{Normalized Accuracy Analysis.} 
The impact of OrthoReg is particularly striking in the Normalized Accuracy. As shown in \Cref{tab:task_addition}, our method elevates the Norm.Acc. of Non-lin.~FT to 100.08\% and ATT-FT to 100.05\% on ViT-L-14. Achieving a normalized accuracy at or above 100\% is the functional realization of ideal weight disentanglement, as it signifies that the single merged model performs on par with eight individually specialized models, indicating a near-total absence of task interference. This result provides strong empirical validation for \Cref{theorem:ortho_updates}, demonstrating that enforcing an orthogonal geometry on weight updates is an effective mechanism to achieve this state.

\subsection{Main Results on Task Negation}\label{subsec:results_negation}
Beyond combining capabilities, we also evaluate task negation $\theta=\theta_0-\alpha\tau_t$, which aims to make a model forget a specific task. Effective forgetting requires a sharp drop in target task accuracy while preserving performance on a control task (ImageNet)~\cite{ta2023,tta2023}.

As shown in \Cref{tab:task_negation}, our OrthoReg regularizer significantly enhances the ``forgetting" effect across all baseline methods. For instance, when applied to ATT-FT on the ViT-L-14 model, OrthoReg reduces target task accuracy by an additional 10.18 percentage points. This more thorough forgetting is achieved without compromising the model's performance on ImageNet. Further details are provided in Appendix J.3. This result validates our theory that OrthoReg produces cleaner task vectors. Consequently, subtracting such a vector acts as a more precise ``undo" operation, cleanly removing the target capability with minimal side effects on the model's other general abilities.

\subsection{Validation of Inter-Task Orthogonality}\label{subsec:taskvector_sim}

\begin{figure}
  \centering
  \begin{subfigure}{0.49\linewidth}
    \includegraphics[width=\linewidth]{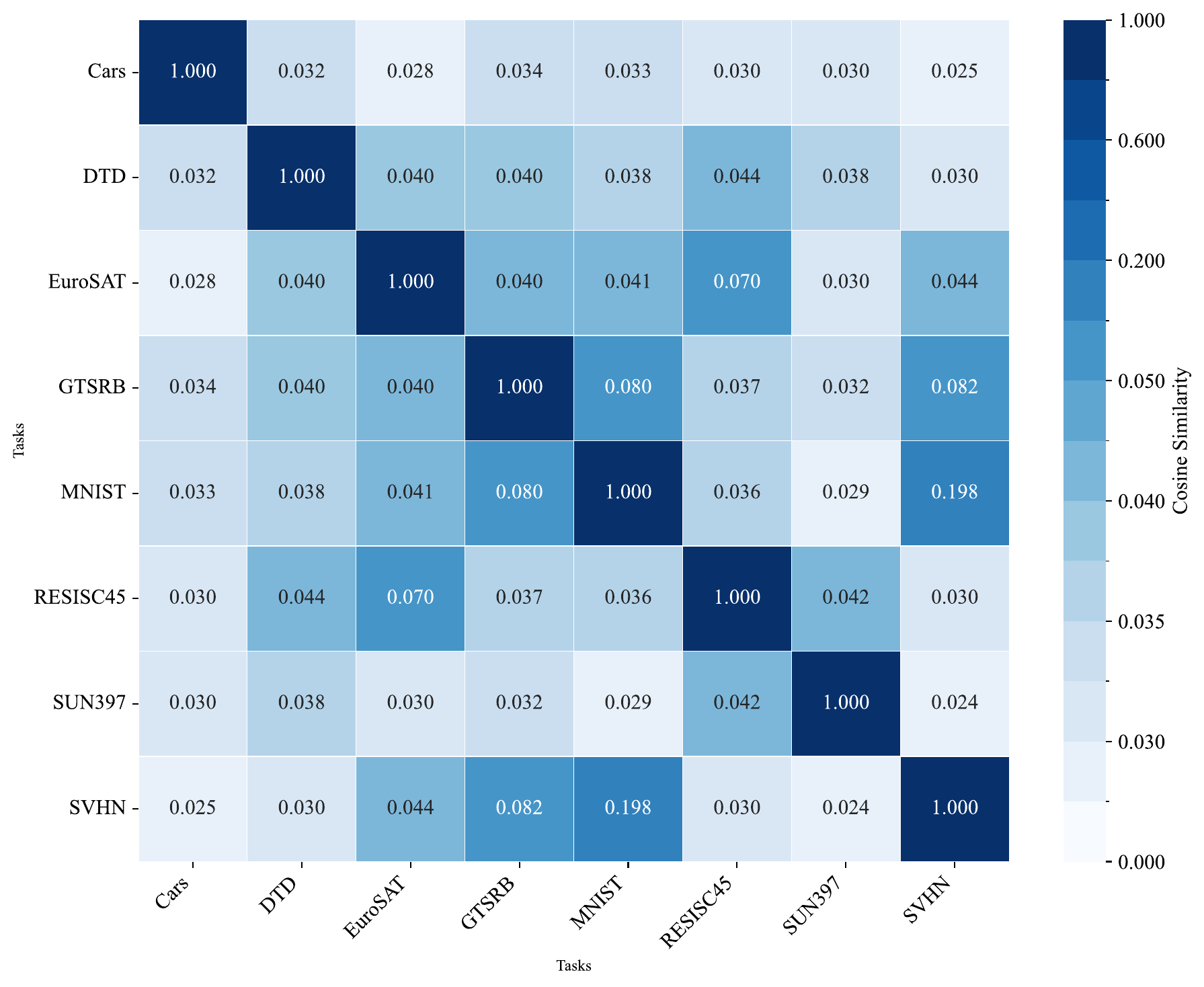}
    \caption{Non-lin. FT}
    \label{fig:sim-vit-b-16-standard}
  \end{subfigure}
  \hfill
  \begin{subfigure}{0.49\linewidth}
    \includegraphics[width=\linewidth]{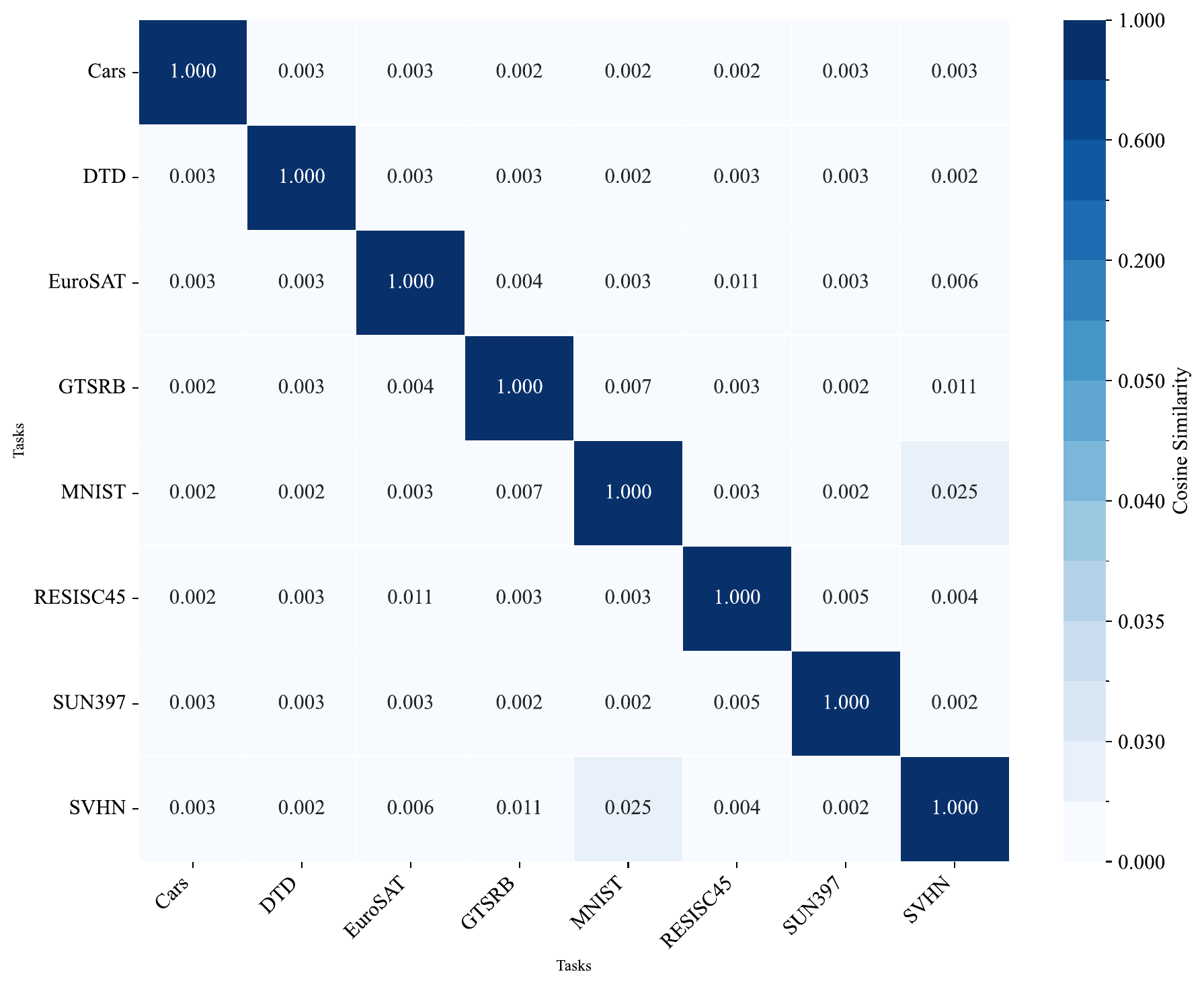}
    \caption{Non-lin. FT+OrthoReg}
    \label{fig:sim-vit-b-16-standard-ortho}
  \end{subfigure}
  \caption{Cosine similarity heatmaps of task vectors for ViT-B-16. (a) Task vectors from Non-lin.~FT show high similarity for several task pairs. (b) Task vectors trained with OrthoReg are significantly more orthogonal.}
  \label{fig:sim-vit-b-16-standard_all}
  \vspace{-17pt}
\end{figure}
Our theory predicts that OrthoReg promotes inter-task orthogonality ($\langle\tau_t, \tau_j\rangle \approx 0$), a mechanism we term ``Angle Control" in the proof of \Cref{theorem:ortho_updates}. To empirically validate this, we visualize the pairwise cosine similarity of task vectors in a heatmap. \Cref{fig:sim-vit-b-16-standard_all} compares the task vectors generated by Non-lin.~FT on ViT-B-16. The baseline heatmap (a) shows significant off-diagonal brightness, indicating high correlation between distinct task vectors. In contrast, after applying OrthoReg (b), the heatmap becomes markedly darker. This result provides direct empirical evidence for our theoretical claims, demonstrating that OrthoReg improves task arithmetic by producing more geometrically disentangled task vectors. Additional heatmaps showing similar trends for other methods are in Appendix~J.4.

\subsection{Parameter Sensitivity Analysis}

We analyze the sensitivity to two key hyperparameters: the regularization strength $\lambda$ and the task vector scaling coefficient $\alpha$. \Cref{fig:lambda} illustrates that the model's accuracy steadily improves as $\lambda$ is increased, demonstrating that the performance gain is a direct and consistent result of the orthogonalization, not sensitive hyperparameter tuning. \Cref{fig:alpha} shows that the model trained with OrthoReg consistently outperforms the baseline across a wide range of $\alpha$ values. This indicates that OrthoReg produces higher-quality task vectors, which not only achieve a higher peak accuracy but also make the task merging process more robust and less sensitive to the choice of the scaling factor.

\begin{figure}
  \centering
  \begin{subfigure}{0.49\linewidth}
    \includegraphics[width=\linewidth]{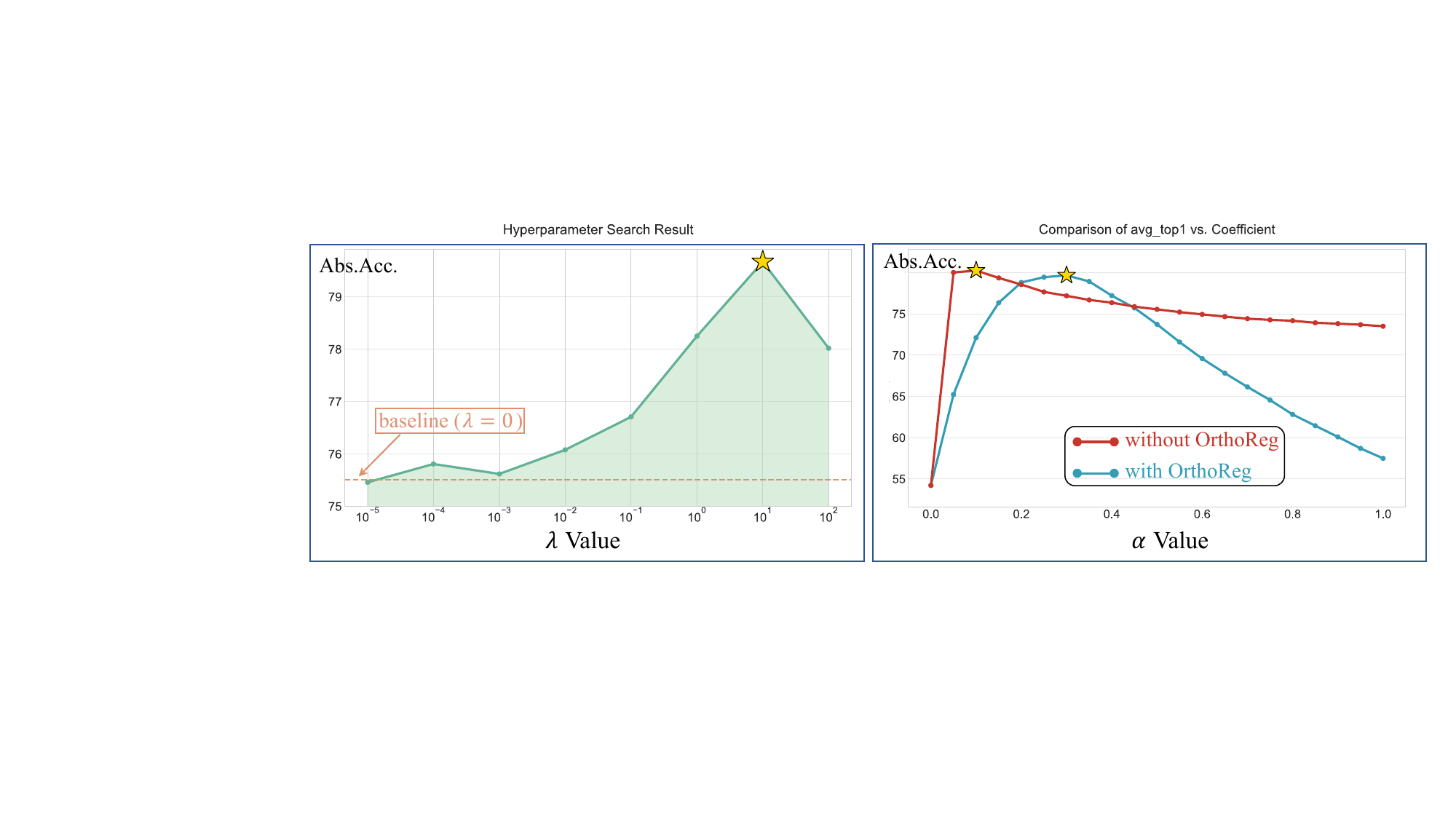}
    \caption{$\lambda$ Sensitivity}
    \label{fig:lambda}
  \end{subfigure}
  \hfill
  \begin{subfigure}{0.49\linewidth}
    \includegraphics[width=\linewidth]{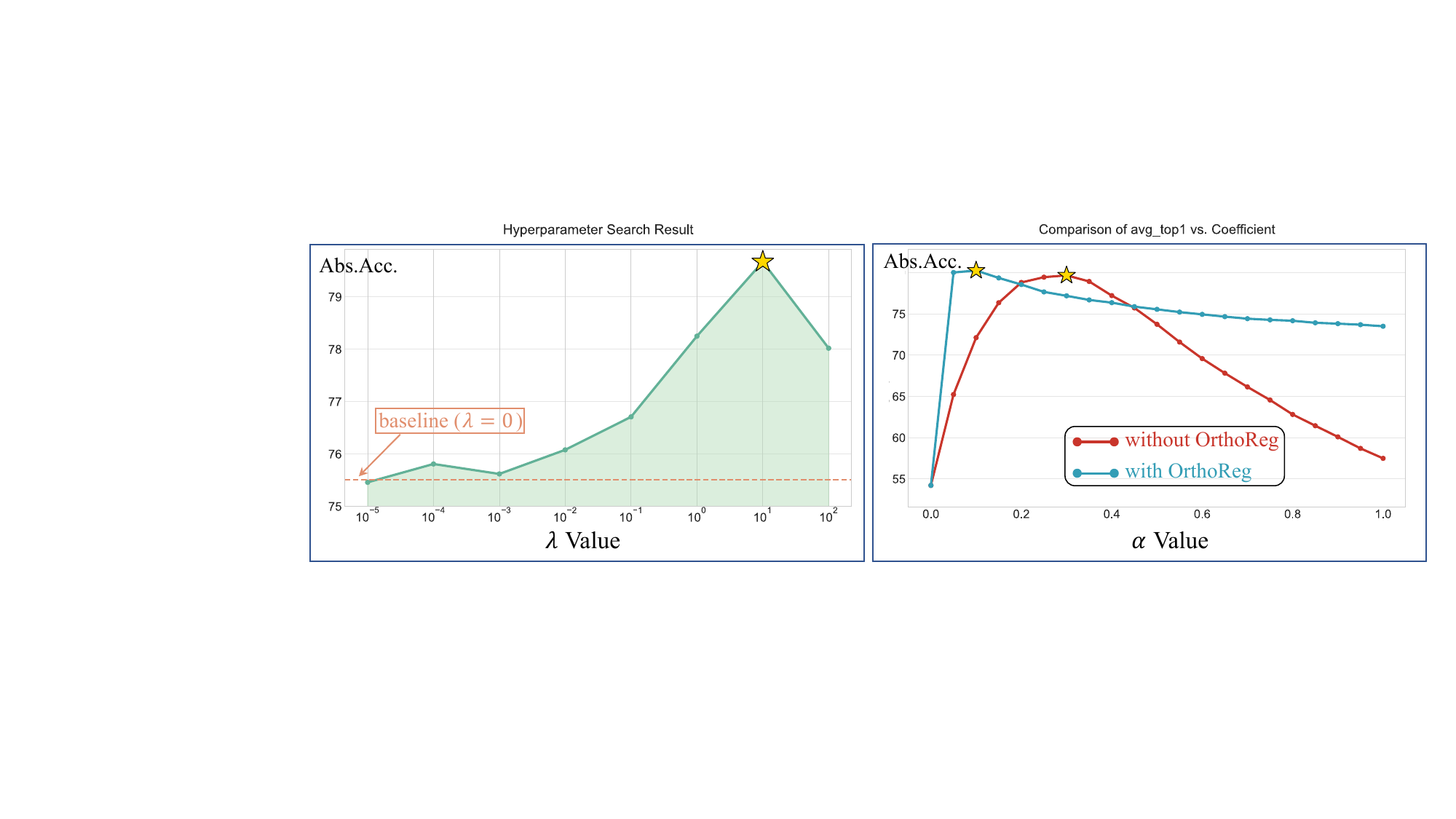}
    \caption{$\alpha$ Sensitivity}
    \label{fig:alpha}
  \end{subfigure}
  \caption{Analysis of hyperparameter sensitivity on ViT-B-16. (a) The impact of the regularization strength $\lambda$ on the performance of LoRA-ATT. (b) The influence of the merging coefficient $\alpha$ on the final accuracy of the merged model. The blue line (TTA+OrthoReg) consistently outperforms the red line (baseline TTA) across a wide range of $\alpha$ values.}
  \label{fig:hyperparameter}
  \vspace{-10pt}
\end{figure}

\section{Conclusion}
Understanding why task arithmetic works is key to making it a reliable engineering tool. In this paper, we advance this understanding by discovering that Task-Feature Specialization ensures weight disentanglement and creates a geometric consequence: weight vector orthogonality. This insight led us to OrthoReg, a method that promotes disentanglement by enforcing orthogonality on weight updates. We found OrthoReg significantly improves performance by creating more orthogonal task vectors. For future work, we plan to explore more diverse forms of orthogonality constraints for more powerful control over model merging.

\section*{Acknowledgement}
This work is supported in part by the National Natural Science Foundation of China (62576160, 62192783), Young Elite Scientists Sponsorship Program by CAST (2023QNRC001), and the Australian Research Council’s Discovery Project(DP220101784).
{
    \small
    \bibliographystyle{ieeenat_fullname}
    \bibliography{main}
}

\maketitlesupplementary
\appendix

\section{Note on the Scope of Analysis: Why Focus on Linear Layers}\label{subsec:linear_layers_seasons}
Throughout our theoretical analysis, we primarily focus on the parameters of linear layers, such as fully-connected (FC) layers and the projection matrices within attention mechanisms. We omit biases and parameters from normalization layers (\textit{e.g.}, LayerNorm). 

This simplification is well-justified, as linear layers constitute the vast majority of parameters~\cite{Kaplan2020,Zhang2022} in modern large-scale models like Transformers~\cite{transformer2017}, and their behavior consequently dictates the model's overall functional transformations and capacity for learning task-specific knowledge. Moreover, this focus aligns with established practices in the model merging literature, where complex strategies are often applied exclusively to linear layers~\cite{ta2023,Tiesmerging2023,tta2023}, suggesting a secondary role for biases and normalization parameters in the task interference phenomena we aim to mitigate. The centrality of these layers is further underscored by the success of parameter-efficient fine-tuning (PEFT) methods like LoRA~\cite{lora2022}, which demonstrate that model adaptation for new tasks primarily occurs within these linear components. 

Given this convergence of evidence, concentrating our geometric analysis on linear layers allows us to build a tractable yet powerful theoretical framework that captures the core mechanisms of task arithmetic.

\section{Justification for Two-Task Simplification}\label{sec:twotask_simplification}
In our main analysis (Section~4.1), we simplify the full definition of weight disentanglement (Definition~1) to a two-task, in-domain scenario as,
\begin{equation}\small
\setlength\abovedisplayskip{5pt}
\setlength\belowdisplayskip{5pt}
    f(x;\theta_0+\tau_t+\tau_j)\;=\;f(x;\theta_0+\tau_t),\qquad \forall\,x\in \mathcal{D}_t.
\end{equation}

This appendix provides a detailed justification for why this simplification is sufficient and does not result in a loss of generality. Our simplification is reasonable for two primary reasons.

First, our subsequent proofs focus on demonstrating that the pairwise interference term $\tau_j^{\top} J(x)$ is approximately zero for any $x$ in the data domain $\mathcal{D}_t$ of a different task $t$. This is the core of the disentanglement mechanism under the NTK linearization hypothesis. Due to the linearity of this interference term with respect to the task vectors, proving the disappearance of pairwise interference is sufficient for the general multi-task case. Specifically, if $\tau_j^{\top} J(x) \approx 0$ for all $j \neq t$, then the total interference from all other tasks in the merged model also vanishes,
\begin{equation}
    \sum_{j \neq t} \alpha_j (\tau_j^\top J(x)) \approx \sum_{j \neq t} \alpha_j \cdot 0 = 0.
\end{equation}
Therefore, focusing on two-task interaction $f(x;\theta_0+\tau_t+\tau_j)$ and omitting the scaling coefficients $\alpha$ during the proof does not compromise the generality of our conclusions.

Second, our analysis concentrates on the ``in-domain disentanglement" condition because it addresses the central challenge of eliminating crosstalk between actively composed tasks. The``out-of-domain preservation" condition, $f(x; \theta_0 + \sum_{t=1}^T \alpha_t \tau_t)=f(x;\theta_0)$ for $x \notin \bigcup_{t=1}^T \mathcal{D}_t$, can be established using the same underlying logic. For an out-of-domain sample $x_{\text{ood}}$, its processing should ideally not rely on the specialized features of any task $t$. This implies that the interference term $\tau_t^\top J(x_{ood})$ should be approximately zero for all task vectors $\tau_t$. This is a direct extension of the principle we prove for the in-domain case. By establishing the core argument for pairwise in-domain disentanglement, we effectively provide the necessary and sufficient reasoning to prove the full weight disentanglement property.

\section{Proof of \Cref{lemma:disentanglement_condition}}
\label{subsec:proof_lemma1}
In this part, we provide the detailed proof for \Cref{lemma:disentanglement_condition}, which establishes the equivalence between the functional property of weight disentanglement and a geometric orthogonality condition under the NTK linearization hypothesis.

\begin{lemma}
\label{lemma:disentanglement_condition}
Under the NTK linearization hypothesis, weight disentanglement between tasks $t$ and $j$ is equivalent to the interference term from task $j$ being approximately zero on the data domain of task $t$:
\begin{equation}\small
\setlength\abovedisplayskip{5pt}
\setlength\belowdisplayskip{5pt}
\tau_j^{\top} J(x) = 0, \quad \forall x \in \mathcal{D}_t. \label{formula:wd_twotasks}
\end{equation}
\end{lemma}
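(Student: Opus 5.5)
The argument expands both sides of the simplified in-domain condition \eqref{formula:simplified_wd} with the first-order Taylor approximation from \Cref{subsec:ntk}, then cancels the common terms. Fix $x\in\mathcal{D}_t$. Under the NTK linearization hypothesis, applied once with $\tau=\tau_t+\tau_j$ and once with $\tau=\tau_t$, we have
\begin{equation*}
f(x;\theta_0+\tau_t+\tau_j)\approx f(x;\theta_0)+(\tau_t+\tau_j)^{\top}J(x),\qquad
f(x;\theta_0+\tau_t)\approx f(x;\theta_0)+\tau_t^{\top}J(x).
\end{equation*}
The first step is to use linearity of the inner product, $(\tau_t+\tau_j)^{\top}J(x)=\tau_t^{\top}J(x)+\tau_j^{\top}J(x)$. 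Subtracting the two expansions then gives
\begin{equation*}
f(x;\theta_0+\tau_t+\tau_j)-f(x;\theta_0+\tau_t)\approx \tau_j^{\top}J(x).
\end{equation*}
If $f$ is vector-valued, $J(x)$ is a $P\times C$ matrix and the same identity holds coordinatewise.

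Both directions of the equivalence follow directly from this identity. If \eqref{formula:simplified_wd} holds for every $x\in\mathcal{D}_t$, the left side vanishes, so $\tau_j^{\top}J(x)=0$ on $\mathcal{D}_t$ (up to the linearization error). Conversely, if $\tau_j^{\top}J(x)=0$ on $\mathcal{D}_t$, the two linearized outputs coincide, which is \eqref{formula:simplified_wd}. By the symmetric argument, the analogous condition $\tau_t^{\top}J(x)=0$ on $\mathcal{D}_j$ covers the other in-domain half. Appendix B already reduces the multi-task statement with coefficients $\alpha_t$ to pairwise terms, since $\sum_{j\neq t}\alpha_j\tau_j^{\top}J(x)$ is linear in the $\tau_j$; this extends the lemma to \Cref{defin:weight_disentanglement}. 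Out-of-domain points are handled by the same computation with $\tau_t^{\top}J(x_{\mathrm{ood}})=0$.

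The only real obstacle is interpreting the word ``equivalent'' rigorously. The linearization is an approximation, so the exact equivalence holds only for the linearized model $f_{\mathrm{lin}}(x;\theta_0+\tau)=f(x;\theta_0)+\tau^{\top}J(x)$. I would state the lemma exactly for $f_{\mathrm{lin}}$, where the computation above is an identity rather than an approximation. For the true $f$, I would add a remark that the residual is $O(\|\tau_t+\tau_j\|^2+\|\tau_t\|^2)$ by Taylor's theorem with a bounded Hessian near $\theta_0$. Hence the condition holds to within second-order error, which matches the ``approximately zero'' in the statement.
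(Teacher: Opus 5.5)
Your proposal is correct and follows the paper's proof essentially verbatim: linearize both sides of the simplified two-task condition around $\theta_0$, split $(\tau_t+\tau_j)^{\top}J(x)$ by linearity, and cancel the common terms $f(x;\theta_0)$ and $\tau_t^{\top}J(x)$. Beyond the paper, you state the equivalence exactly for the linearized model $f_{\mathrm{lin}}$, write out both directions, and bound the residual by a second-order Taylor term, which makes precise what the paper leaves implicit in its use of $\approx$.
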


\begin{proof}
Our starting point is the simplified, two-task definition of weight disentanglement, which states that for any input $x$ from the data domain of task $t$, the following approximation should hold:
\begin{equation}
f(x; \theta_0 + \tau_t + \tau_j) = f(x; \theta_0 + \tau_t), \quad \forall x \in \mathcal{D}_t.
\label{eq:proof_start}
\end{equation}
We apply the first-order Taylor approximation from the NTK hypothesis to both sides of this equation.

For the left-hand side (LHS), the total parameter perturbation from the pre-trained state $\theta_0$ is $(\tau_t + \tau_j)$. The linearization is therefore,
\begin{align}
\text{LHS} &\approx f(x; \theta_0) + (\tau_t + \tau_j)^{\top} J(x) \nonumber \\
&= f(x; \theta_0) + \tau_t^{\top} J(x) + \tau_j^{\top} J(x).
\label{eq:proof_lhs}
\end{align}
For the right-hand side (RHS), the perturbation is simply $\tau_t$. The linearization is,
\begin{equation}
\text{RHS} \approx f(x; \theta_0) + \tau_t^{\top} J(x).
\label{eq:proof_rhs}
\end{equation}
By substituting these approximations from \Cref{eq:proof_lhs} and \Cref{eq:proof_rhs} back into the original weight disentanglement condition (\Cref{eq:proof_start}), we obtain,
\begin{equation}
f(x; \theta_0) + \tau_t^{\top} J(x) + \tau_j^{\top} J(x) \approx f(x; \theta_0) + \tau_t^{\top} J(x).
\end{equation}
Canceling the common terms $f(x; \theta_0)$ and $\tau_t^{\top} J(x)$ from both sides of the approximation leaves us with the final, equivalent condition:
\begin{equation}
\tau_j^{\top} J(x) = 0, \quad \forall x \in \mathcal{D}_t.
\end{equation}
This shows that, under NTK linearization, the functional requirement that task $j$ does not interfere with task $t$ is equivalent to the geometric condition that the task vector $\tau_j$ is orthogonal to the model's gradient Jacobian $J(x)$ for all data points $x$ in the domain of task $t$.
\end{proof}

\section{Detailed Proof of \Cref{theorem:pretrain}}\label{sec:proof_theorum1}
\subsection{Proof of Theorem~1}\label{subsec:proof_theorum1}
In this section, we provide the formal proof for Theorem~1.
\begin{restatable}{theorem}{thmPretrain}
\label{theorem:pretrain}
  Under the NTK linearization hypothesis (Section~3.3) and the Task-Feature Specialization property, weight disentanglement between tasks $t$ and $j$ holds.
\end{restatable}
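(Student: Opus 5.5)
By \Cref{lemma:disentanglement_condition}, it suffices to show that $\tau_j^{\top} J(x) = 0$ for every $x \in \mathcal{D}_t$. Following Appendix~A, we restrict attention to linear layers. The interference term then decomposes layer by layer and, within a layer with weight matrix $W$ and columns $\{w_k\}_{k=1}^d$, column by column:
\begin{equation}
\tau_j^{\top} J(x) = \sum_{l} \sum_{k=1}^{d} \left\langle (\tau_j)^{(l)}_k, \nabla_{w^{(l)}_k} f(x;\theta_0) \right\rangle ,
\end{equation}
where $(\tau_j)^{(l)}_k$ is the $k$-th column of the update $\Delta W_j^{(l)}$. We will show that every summand vanishes, splitting the index set into $k \in I_t$ and $k \notin I_t$. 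By Assumption~\ref{assumption:task-specific}, the first case forces $k \notin I_j$.

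\textbf{Case $k \notin I_t$.} Since $z_k$ is computed from $w_k$ (it is the $k$-th feature extracted by that column), the chain rule gives
\begin{equation}
\nabla_{w_k} f(x;\theta_0) = \frac{\partial f(x;\theta_0)}{\partial z_k}\, \nabla_{w_k} z_k .
\end{equation}
Definition~\ref{defin:task-spec} states that $\mathbb{E}_{x\sim\mathcal{D}_t}\big[|\partial f/\partial z_k|\big]=0$. A nonnegative quantity with zero expectation vanishes almost surely on $\mathcal{D}_t$, and we read the definition as holding pointwise on the domain. Hence the gradient block is zero and the summand vanishes.

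\textbf{Case $k \in I_t$, so $k \notin I_j$.} Here we need $(\tau_j)_k = 0$. The task vector is produced by fine-tuning on $\mathcal{D}_j$, so in the NTK regime the update is a combination of per-sample gradients on $\mathcal{D}_j$. Concretely, by gradient flow on the linearized model, $\tau_j = -\eta \sum_{s}\sum_{x'\in\mathcal{D}_j} \ell'(x')\, J(x')$ summed over steps. Column $k$ of each $J(x')$ with $x'\in\mathcal{D}_j$ vanishes by the same chain-rule argument applied to task $j$, because $k\notin I_j$. Hence $(\tau_j)_k = 0$ and the summand is again zero.

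Summing over all columns and layers gives $\tau_j^{\top}J(x)=0$ on $\mathcal{D}_t$, which proves the theorem via \Cref{lemma:disentanglement_condition}.

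The main obstacle is the second case. It requires a modeling claim that fine-tuning leaves unused columns unchanged, i.e.\ that $\tau_j$ lies in the span of the Jacobians on $\mathcal{D}_j$. I would state this explicitly as a consequence of linearized gradient descent started at $\theta_0$. Jacobians are evaluated at $\theta_0$ throughout, consistent with \Cref{subsec:ntk}. I would also note the almost-sure caveat in the first case, since the definition is phrased through an expectation.
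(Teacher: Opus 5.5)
Your proposal is correct and is essentially the paper's proof. Both use the reduction via Lemma~1 and the column-wise decomposition of $\tau_j^{\top}J(x)$ within each linear layer. Both then observe that disjointness $I_t\cap I_j=\emptyset$ forces, for every column $k$, either a vanishing gradient block ($k\notin I_t$) or a vanishing task-vector column ($k\notin I_j$), the latter obtained by writing $\tau_j$ as accumulated gradients on $\mathcal{D}_j$ exactly as in the paper's supporting Proposition~1.

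The only cosmetic difference is where the gradients are evaluated. You run the accumulation on the linearized model, so every Jacobian is taken at $\theta_0$ and you get exact (almost-sure) zeros. The paper instead uses nonlinear SGD iterates $\theta^{(s)}$ and appeals to Jacobian stability near $\theta_0$ to keep $\partial f/\partial z_k$ negligible.
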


\begin{proof}
    According to Lemma~1, our goal is to prove that the interference term $\tau_j^\top J(x)$ is approximately zero for any $x \in \mathcal{D}_t$. We can decompose this total interference into contributions from each linear layer. For clarity, we analyze the interference arising from a single weight matrix $W \in \mathbb{R}^{m \times d}$ and show it is zero. The conclusion generalizes to the entire model by summation.

    The interference contributed by $W$ is $\langle (\tau_j)_W, J_W(x) \rangle$, where $(\tau_j)_W$ and $J_W(x)$ are the components of the task vector and Jacobian corresponding to $W$. By decomposing this along the column vectors $\{w_1, \dots, w_d\}$ of $W$, we get,
\begin{equation}
    \mathrm{Interference}_W(x) = \sum_{k=1}^d \langle (\tau_j)_k, \nabla_{w_k} f(x; \theta_0) \rangle,
\end{equation}
where $(\tau_j)_k$ is the update applied to column $w_k$. We will show every term in this summation is approximately zero.

\textbf{Analysis of the gradient term ($\nabla_{w_k} f(x; \theta_0)$).} For an input $x \in \mathcal{D}_t$, the gradient of the model output with respect to a weight column $w_k$ can be expressed using the chain rule,
\begin{equation}
    \nabla_{w_k} f(x; \theta_0) = \frac{\partial f(x; \theta_0)}{\partial w_k}=\frac{\partial f(x; \theta_0)}{\partial z_k} \cdot \frac{\partial z_k}{\partial w_k}.
\end{equation}
According to Definition~2, if the feature index $k$ is not in the specialized set for task $t$ (\textit{i.e.}, $k \notin I_t$), the model's output is insensitive to it, meaning $\frac{\partial f(x; \theta_0)}{\partial z_k} \approx 0$. For $x \in \mathcal{D}_t$,
\begin{equation}
    k \notin I_t \implies \nabla_{w_k} f(x; \theta_0) \approx 0.\label{formula:gradient_equal_0}
\end{equation}

\textbf{Analysis of the task Vector term ($(\tau_j)_k$).} The task vector component $(\tau_j)_k$ is the accumulated update to weight $w_k$ from fine-tuning on task $j$. By definition, if feature $k$ is not specialized for task $j$ (\textit{i.e.}, $k \notin I_j$), the loss function for task $j$ is insensitive to it. This means the gradients with respect to $w_k$ computed on the data domain $\mathcal{D}_j$ are consistently negligible. Since $(\tau_j)_k$ is the sum of these negligible gradients, it will be approximately zero. (A detailed proof is provided in Appendix~\ref{subsec:proposition1} as Proposition~\ref{proposition:updated_tau_j_0}).
\begin{equation}
    k \notin I_j \implies (\tau_j)_k \approx 0. \label{formula:taujk_equal_0}
\end{equation}

Now, we examine each term $\langle (\tau_j)_k, \nabla_{w_k} f(x; \theta_0) \rangle$ in the summation for index $k \in \{1, \dots, d\}$. There are two mutually exclusive possibilities.

Case A: $k \in I_j$. By the Task-Feature Specialization property ($I_t \cap I_j = \emptyset$), it must be that $k \notin I_t$. From gradient analysis (\Cref{formula:gradient_equal_0}), this implies $\nabla_{w_k} f(x; \theta_0) \approx 0$. %

Case B: $k \notin I_j$. From task vector analysis (\Cref{formula:taujk_equal_0}), this implies $(\tau_j)_k \approx 0$. %

In both cases, the term $\langle (\tau_j)_k, \nabla_{w_k} f(x; \theta_0) \rangle$ vanishes. Since this holds for all $k$, the interference from this layer, $\mathrm{Interference}_W(x)$, is approximately zero. As this applies to all layers, the total interference $\tau_j^\top J(x) \approx 0$. By \Cref{lemma:disentanglement_condition}, this proves that weight disentanglement holds. 
\end{proof}

\subsection{Supporting Proposition for \Cref{theorem:pretrain}}\label{subsec:proposition1}
In this part, we provide a detailed proof for the proposition referenced in the proof of \Cref{theorem:pretrain}. This proposition formalizes the intuition that if a task does not depend on a specific feature, the fine-tuning process for that task will not significantly alter the weights associated with that feature.

\begin{proposition}\label{proposition:updated_tau_j_0}
Under the NTK Linearization hypothesis (Section~3.3) and the Task-Feature Specialization property, consider the fine-tuning process for task $j$ on its data domain $\mathcal{D}_j$. If a feature index $k$ does not belong to the specialized feature set for task $j$ (\textit{i.e.}, $k \notin I_j$), then the corresponding component of the resulting task vector, $(\tau_j)_k$, is approximately zero.
\begin{equation}
    k \notin I_j \implies (\tau_j)_k \approx 0.
\end{equation}
\end{proposition}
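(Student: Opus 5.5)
The plan is to write the task vector as the accumulated sum of gradient steps, show that each step's component along column $w_k$ factors through $\partial f/\partial z_k$, and then use Definition~\ref{defin:task-spec} to make that factor vanish on $\mathcal{D}_j$. Concretely, model fine-tuning on task $j$ as (stochastic) gradient descent with step sizes $\eta_s$, starting from $\theta_0$ and running $S$ steps on samples $x^{(s)}\sim\mathcal{D}_j$ with loss $\ell(f(x;\theta),y)$. Then the component of the task vector on column $w_k$ is
\begin{equation}
(\tau_j)_k = -\sum_{s=0}^{S-1} \eta_s\, \nabla_{w_k} \ell\big(f(x^{(s)};\theta^{(s)}),y^{(s)}\big).
\end{equation}
Under the NTK linearization hypothesis, the Jacobian stays at its value at $\theta_0$ along the trajectory. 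So each per-step gradient becomes $\ell'\big(f(x^{(s)};\theta^{(s)}),y^{(s)}\big)\,\nabla_{w_k} f(x^{(s)};\theta_0)$. Here the loss derivative $\ell'$ is a bounded scalar (or vector) factor, and the dependence on $\theta^{(s)}$ enters only through it.

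Next, apply the chain rule as in the proof of Theorem~\ref{theorem:pretrain}. Since $z_k = w_k^\top h(x)$, where $h(x)$ is the layer input, we get
\begin{equation}
\nabla_{w_k} f(x;\theta_0) = \frac{\partial f(x;\theta_0)}{\partial z_k}\, h(x).
\end{equation}
Definition~\ref{defin:task-spec} with $k\notin I_j$ gives $\mathbb{E}_{x\sim\mathcal{D}_j}\big[|\partial f(x;\theta_0)/\partial z_k|\big]=0$. A nonnegative random variable with zero mean vanishes almost surely, so $\partial f/\partial z_k = 0$ for almost every $x\in\mathcal{D}_j$. Hence every sampled per-step gradient on $w_k$ is zero almost surely.

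To conclude, bound the norm of the update by
\begin{equation}
\|(\tau_j)_k\| \le \sum_s \eta_s\,|\ell'|\,\Big|\frac{\partial f(x^{(s)};\theta_0)}{\partial z_k}\Big|\,\|h(x^{(s)})\|.
\end{equation}
Taking expectations, and assuming $|\ell'|$ and $\|h\|$ are bounded (or using Cauchy--Schwarz with finite second moments), gives $\mathbb{E}\|(\tau_j)_k\|=0$. Therefore $(\tau_j)_k\approx 0$, where the "$\approx$" absorbs the NTK approximation error. TFS (Assumption~\ref{assumption:task-specific}) is not actually needed here beyond fixing the sets $I_j$.

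The main obstacle is the step that freezes the Jacobian. Without linearization, $\partial f/\partial z_k$ at intermediate $\theta^{(s)}$ could become nonzero as the other weights move. I would handle this by appealing to the NTK hypothesis, i.e.\ that $J(x)$ is constant to first order in $\|\tau\|$. That yields $(\tau_j)_k = O(\|\tau_j\|^2)$ rather than exactly zero, which is consistent with the paper's use of "$\approx$".
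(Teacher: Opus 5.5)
Your proposal is correct and follows essentially the same route as the paper: write $(\tau_j)_k$ as the accumulated sum of gradient steps, factor each step's $w_k$-component through $\partial f/\partial z_k$ by the chain rule, use Definition~\ref{defin:task-spec} to make that factor vanish on $\mathcal{D}_j$, and invoke the NTK hypothesis to keep it negligible along the trajectory while the loss derivative and layer input stay bounded. Your per-sample SGD version, with its almost-sure vanishing argument and explicit norm bound, is a slightly tighter version of the paper's full-batch ``bounded factors times expectation $\approx 0$'' step. You are also right that the disjointness in Assumption~\ref{assumption:task-specific} is never used, in the paper's proof either.
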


\begin{proof}
The task vector $\tau_j$ is defined as the total change in parameters after fine-tuning on task $j$, starting from the pre-trained weights $\theta_0$,
\begin{equation}
    \tau_j=\theta_j^*-\theta_0,
\end{equation}
where $\theta_j^*$ are the final fine-tuned parameters. The component $(\tau_j)_k$ specifically represents the change in the weight column $w_k$ of a given linear layer.

Let's model the fine-tuning process as a sequence of updates using a gradient-based optimizer, such as Stochastic Gradient Descent (SGD). For a total of $S$ update steps, the weight column $w_k$ is updated iteratively. The update rule for $w_k$ at step $s$ is,
\begin{equation}
    w_k^{(s+1)} = w_k^{(s)} - \eta \cdot \mathbb{E}_{x \sim \mathcal{D}_j} \left[ \nabla_{w_k} \mathcal{L}_j(x; \theta^{(s)}) \right],
\end{equation}
where $\eta$ is the learning rate, and $\theta^{(s)}$ represents the model parameters at step $s$, with the initial state being $\theta^{(0)} = \theta_0$.

Consistent with the perspective from work on Adaptive Weight Disentanglement (AWD)~\cite{awd2024} that views the task vector as the sum of accumulated gradients, the total change in the weight column $w_k$, which is the task vector component $(\tau_j)_k$, is the sum of all single-step updates over the course of training,
\begin{equation}
\begin{aligned}
    (\tau_j)_k &= w_{k}^{(S)} - w_{k}^{(0)} = \sum_{s=0}^{S-1} \left( w_k^{(s+1)} - w_k^{(s)} \right) \\
    & = -\eta \sum_{s=0}^{S-1} \mathbb{E}_{x \sim \mathcal{D}_j} \left[ \nabla_{w_k} \mathcal{L}_j(x; \theta^{(s)}) \right].\label{formula:tau_jk_sum}
\end{aligned}
\end{equation}
To prove that $(\tau_j)_k \approx 0$, we need to show that the expected gradient $\mathbb{E}_{x \sim \mathcal{D}_j} \left[ \nabla_{w_k} \mathcal{L}_j(x; \theta^{(s)}) \right]$ is approximately zero at every step $s$ of the fine-tuning process.

Let's analyze the gradient for a single data point $x \in \mathcal{D}_j$ using the chain rule,
\begin{equation}
    \nabla_{w_k} \mathcal{L}_j(x; \theta^{(s)}) = \frac{\partial \mathcal{L}_j}{\partial f(x; \theta^{(s)})} \cdot \frac{\partial f(x; \theta^{(s)})}{\partial z_k} \cdot \frac{\partial z_k}{\partial w_k}, \label{formula:gradient_3}
\end{equation}
where $z_k$ is the activation of the base feature corresponding to $w_k$. We analyze each term in this product.
\begin{itemize}
\item $\frac{\partial \mathcal{L}_j}{\partial f(x; \theta^{(s)})}$. This is the derivative of the loss with respect to the model's final output. Before the model has fully converged, this term is generally non-zero and bounded.

\item $\frac{\partial z_k}{\partial w_k}$. For a standard linear layer where $z_k = (w_k)^\top \mathrm{In}(x)$, this derivative is simply the input to the layer, $\mathrm{In}(x)$. This term is also non-zero and bounded.

\item $\frac{\partial f(x; \theta^{(s)})}{\partial z_k}$. It measures the sensitivity of the final model output to the intermediate feature activation $z_k$. Our core assumption is that $k \notin I_j$. By Definiton~2 (Task-Specialized Feature Set), this means that at the pre-trained state $\theta_0$, the model's output is insensitive to $z_k$ in expectation over the data domain $\mathcal{D}_j$,
\begin{equation}
    \mathbb{E}_{x \sim \mathcal{D}_j} \left\| \frac{\partial f(x; \theta_0)}{\partial z_k} \right\| \approx 0.
\end{equation}
The fine-tuning process occurs in the neighborhood of $\theta_0$. Under the NTK linearization hypothesis, the parameter changes are small, and the model's Jacobian is assumed to be stable. Therefore, for all steps $s$ in the fine-tuning process, $\theta^{(s)}$ remains close to $\theta_0$, and the sensitivity of the model's output to feature $z_k$ also remains negligible,
\begin{equation}
    \mathbb{E}_{x \sim \mathcal{D}_j} \left\| \frac{\partial f(x; \theta^{(s)})}{\partial z_k} \right\| \approx 0 \quad \text{for } s=0,1,\dots,S-1.
\end{equation}
\end{itemize}

Now, let's take the expectation of the full gradient expression (\Cref{formula:gradient_3}) over the data domain $\mathcal{D}_j$,
\begin{equation}
\begin{aligned}
    &\mathbb{E}_{x \sim \mathcal{D}_j} \left[ \nabla_{w_k} \mathcal{L}_j(x; \theta^{(s)}) \right] \\
    &= \mathbb{E}_{x \sim \mathcal{D}_j} \left[\underbrace{\frac{\partial \mathcal{L}_j}{\partial f(x; \theta^{(s)})}}_{\text{non-zero,bounded}} \cdot \underbrace{\frac{\partial f(x; \theta^{(s)})}{\partial z_k}}_{\text{Expectation}\approx 0} \cdot \underbrace{\frac{\partial z_k}{\partial w_k}}_{\text{non-zero,bounded}} \right].
\end{aligned}
\end{equation}
Since the expectation of the sensitivity term $\frac{\partial f}{\partial z_k}$ is approximately zero, and the other terms are bounded, the expectation of their product will also be approximately zero.

This holds for every step $s$ of the fine-tuning process. Substituting this result back into \Cref{formula:tau_jk_sum}, we find that the total update $(\tau_j)_k$ is a finite sum of near-zero vectors,
\begin{equation}
    (\tau_j)_k = -\eta \sum_{s=0}^{S-1} \underbrace{\mathbb{E}_{x \sim \mathcal{D}_j} \left[ \nabla_{w_k} \mathcal{L}_j(x; \theta^{(s)}) \right]}_{\approx 0} \approx 0.
\end{equation}
This demonstrates that if a feature is not part of a task's specialized set, the corresponding weights will remain virtually unchanged during fine-tuning for that task.

This completes the proof. 

\end{proof}

\section{Proof of \Cref{corollary:TFS->WVO}}\label{sec:proof_tfs_to_wvo}
This part provides the detailed proof for \Cref{corollary:TFS->WVO}, which establishes that the Task-Feature Specialization (TFS) property, a functional characteristic of an ideal pre-trained model, gives rise to a specific geometric structure in its parameters, namely, weight vector block-orthogonality. This formalizes the connection, that Weight Vector Orthogonality (WVO) is presented as a geometric consequence of TFS.

\begin{restatable}{corollary}{corollaryTFSWVO}
\label{corollary:TFS->WVO}
Given a model that adheres to the Task-Feature Specialization (TFS) property, its weight matrices will exhibit Block Orthogonality.
\end{restatable}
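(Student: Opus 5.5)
The plan is to derive block orthogonality from a decorrelation argument on the feature activations, using the same two ingredients as the proof of \Cref{theorem:pretrain}: the chain-rule factorization of $\nabla_{w_k} f$ and the insensitivity statement of Definition~2. Fix a linear layer $W\in\mathbb{R}^{m\times d}$ with columns $w_k$. Feature $k$ has activation $z_k = w_k^\top \mathrm{In}(x)$. I will read the corollary in its natural ideal-model sense: the columns of $W$ are the stationary point of pre-training on the union of the task domains. Under TFS, this setting yields $\langle w_k, w_l\rangle \approx 0$ whenever $k\in I_t$, $l\in I_j$, $t\neq j$.

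The first step is to show that features in different blocks are driven by disjoint data. By Definition~2 and Proposition~\ref{proposition:updated_tau_j_0}, for $k\in I_t$ the gradient $\nabla_{w_k}\mathcal{L}$ is a sum of terms $\frac{\partial \mathcal{L}}{\partial f}\frac{\partial f}{\partial z_k}\mathrm{In}(x)$. These terms are nonzero only for $x\in\mathcal{D}_t$. Consequently, every update ever applied to $w_k$ lies in $S_t:=\mathrm{span}\{\mathrm{In}(x): x\in\mathcal{D}_t\}$.

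The second step is to argue that these input subspaces are mutually orthogonal across tasks. TFS gives $I_t\cap I_j=\emptyset$ at every layer, so the representation reaching the layer on $\mathcal{D}_t$ is carried by the task-$t$ features of the previous layer. I would therefore argue by induction over depth that $S_t\perp S_j$. Combining with the first step, $w_k - w_k^{(0)}\in S_t$ and $w_l-w_l^{(0)}\in S_j$, so their inner product vanishes.

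The third step disposes of the initialization. I would assume a random initialization whose columns are approximately orthogonal in high dimension, or argue that weight decay drives the component outside the trained subspace to zero. Either way the cross terms $\langle w_k^{(0)}, w_l - w_l^{(0)}\rangle$ are negligible, giving $\langle w_k,w_l\rangle\approx 0$, which is Definition~\ref{defin:weight_ortho}(a) with the partition $\{I_t\}$. Indices in no $I_t$ can be absorbed into a leftover block, or handled by the same weight-decay argument.

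The main obstacle is the second step, the orthogonality of the input subspaces $S_t\perp S_j$. TFS as stated in Definition~2 only constrains output sensitivity $\partial f/\partial z_k$, not the geometry of activations. The first-layer inputs (raw patches) of different tasks certainly overlap. I would first try to handle this inductively: show that the previous layer's activation on $\mathcal{D}_t$ is supported, in expectation, on coordinates in that layer's $I_t$. If the base case fails, I would fall back to a statistical version, replacing exact orthogonality with vanishing expected inner product between independent block-specific accumulated gradients.
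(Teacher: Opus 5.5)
\textbf{Gap: your second step ($S_t\perp S_j$).} It is not merely unproven: it does not follow from TFS, and it fails in the architectures the corollary is about. TFS (Definition~2 with Assumption~1) constrains only output sensitivities $\partial f/\partial z_k$, not where activations live. A feature $l\in I_j$ can fire strongly on $\mathcal{D}_t$ and simply be gated out downstream. So your inductive hypothesis, that activations on $\mathcal{D}_t$ are supported on $I_t$, is not implied by TFS at any depth. The base case also fails outright, since patch inputs of different tasks overlap.

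Architecture makes this worse. In a pre-LN transformer such as CLIP's ViT, the QKV and first MLP projections read $\mathrm{LN}(h)$ of a residual stream $h$ that carries the shared embedding through every block. Hence $S_t=\mathrm{span}\{\mathrm{In}(x):x\in\mathcal{D}_t\}$ is typically (nearly) all of $\mathbb{R}^m$ for every task, and $S_t\perp S_j$ cannot hold.

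Your fallback does not repair this. For independent accumulated gradients $g_k,g_l$ you only get $\mathbb{E}\langle g_k,g_l\rangle=\langle \mathbb{E}g_k,\mathbb{E}g_l\rangle$. This vanishes only under an extra zero-mean or isotropy assumption that TFS does not provide, so you would be assuming the conclusion.

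Step~1 has a smaller problem. TFS is asserted only at $\theta_0$, but ``every update ever applied to $w_k$ lies in $S_t$'' needs two things:
\begin{itemize}
\item the specialization pattern along the whole pre-training trajectory, whereas Proposition~1 controls the trajectory only in the NTK regime around $\theta_0$;
\item pre-training on $\bigcup_t\mathcal{D}_t$.
\end{itemize}
A weight-decay stationarity condition $w_k=-\lambda^{-1}\nabla_{w_k}\mathcal{L}(\theta_0)$ would localize the step to $\theta_0$ and make Step~3 unnecessary. The assumption on the pre-training distribution would remain.

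\textbf{The missing idea.} You need no geometry of the input subspaces, because the paper connects weight geometry to activation statistics directly. It assumes normalization makes each layer's input approximately whitened over the mixed distribution $\mu$: $\mathbb{E}_\mu[\mathrm{In}(x)]\approx 0$ and $\mathbb{E}_\mu[\mathrm{In}(x)\mathrm{In}(x)^\top]\approx I_m$. Then $\Sigma_z=W^\top W$, so $\langle w_k,w_l\rangle=\mathrm{Cov}_\mu(z_k,z_l)$, and block orthogonality reduces to cross-task decorrelation of activations.

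The paper derives that decorrelation from TFS in Proposition~2, by contradiction. Suppose $z_k\approx a z_l+b$ with $a\neq 0$, $k\in I_t$, $l\in I_j$. Then on $\mathcal{D}_t$ the total derivative $df/dz_l$ picks up $a\,\partial f/\partial z_k$. This term is non-negligible because $k\in I_t$, contradicting $l\notin I_t$.

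That route also goes beyond TFS: it assumes input whitening, and it reads TFS as insensitivity of the total rather than the partial derivative. But those assumptions are explicit, and they are exactly what lets TFS say something about activations, which your argument lacks. To finish your proposal you would have to do one of two things. Either state $S_t\perp S_j$ as an additional hypothesis, in which case weight-decay stationarity gives exact block orthogonality. Or switch to this covariance bridge.
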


\subsection{TFS Implies Cross-Task Feature Decorrelation}
We begin by proving a key statistical consequence of TFS, which will be instrumental in our main proof.
The functional separation defined by TFS has a direct consequence on the statistical properties of the feature activations. We formalize this as the following proposition.

\begin{proposition}\label{proposition:feature_second_order_0}
    Under the Task-Feature Specialization (TFS) property, for any two distinct tasks $t \neq j$, and for any pair of features with indices $k \in I_t$ and $l \in I_j$, their activations $z_k$ and $z_l$ are approximately decorrelated over a mixed data distribution $\mu$. That is,
\begin{equation}
\text{Cov}_\mu(z_k, z_l) \approx 0.\\
\end{equation}
\end{proposition}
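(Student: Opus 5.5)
We want $\mathrm{Cov}_\mu(z_k,z_l)\approx 0$ for $k\in I_t$, $l\in I_j$, $t\neq j$, where $\mu$ is the mixture $\mu=\sum_s \pi_s \mathcal{D}_s$ of the task domains, possibly together with an out-of-domain component. The plan is to split the covariance along the mixture components and bound each piece using the insensitivity in Definition~\ref{defin:task-spec}. By the law of total covariance,
\begin{equation*}
\mathrm{Cov}_\mu(z_k,z_l)=\sum_s \pi_s\,\mathrm{Cov}_{\mathcal{D}_s}(z_k,z_l)+\mathrm{Cov}_{s\sim\pi}\big(\mathbb{E}_{\mathcal{D}_s}[z_k],\,\mathbb{E}_{\mathcal{D}_s}[z_l]\big).
\end{equation*}
The goal is to show both the within-component term and the between-component term are approximately zero.

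\textbf{Within-component term.} Fix a component $s$. By Assumption~\ref{assumption:task-specific}, at most one of $k$ and $l$ lies in $I_s$, so at least one of them, say $l$, satisfies $l\notin I_s$. Then $\mathbb{E}_{\mathcal{D}_s}|\partial f/\partial z_l|=0$, so the output on $\mathcal{D}_s$ does not depend on $z_l$. I would read this the way Appendix~\ref{subsec:proposition1} does: fine-tuning and pre-training on $\mathcal{D}_s$ receive no signal through feature $l$. I would then argue that $z_l$ is effectively uninformative noise on $\mathcal{D}_s$. In particular, if it carried variance correlated with a task-relevant feature $z_k$, that variance would be redundant capacity that training has no reason to build. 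This heuristic step requires an added modelling assumption: on $\mathcal{D}_s$, a feature outside $I_s$ is either approximately constant or fluctuates independently of the features in $I_s$. Either alternative gives $\mathrm{Cov}_{\mathcal{D}_s}(z_k,z_l)\approx 0$.

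\textbf{Between-component term.} The between-component term involves the vector of conditional means $(\mathbb{E}_{\mathcal{D}_s}z_k)_s$ and $(\mathbb{E}_{\mathcal{D}_s}z_l)_s$. Here I would use the same "inactive off-domain" reading: $z_k$ has a non-baseline mean only on $\mathcal{D}_t$, and $z_l$ only on $\mathcal{D}_j$. After centering at the baseline activation, the two mean vectors have disjoint supports over $s$, so their covariance across components reduces to $-\pi_t\pi_j\,m_k m_l$, where $m_k,m_l$ are the centered mean shifts. I expect this to be the main obstacle, because it is not zero in general. I would handle it either by assuming centered (e.g.\ normalized) activations, so that $m_k m_l\approx 0$, or by taking the covariance conditionally within domains. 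If neither works, I would weaken the claim to decorrelation up to an $O(\pi_t\pi_j)$ mean-shift term.

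Combining the two terms then yields the proposition. Block orthogonality should follow afterwards: with $z_k=w_k^\top \mathrm{In}(x)$, we have $\mathrm{Cov}(z_k,z_l)=w_k^\top \Sigma_{\mathrm{In}}\, w_l$, and for approximately isotropic (whitened) inputs this gives $\langle w_k,w_l\rangle\approx 0$.
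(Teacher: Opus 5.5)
Your law-of-total-covariance decomposition is a genuinely different route from the paper's, and it correctly isolates the real difficulty. As written, though, it does not prove the proposition, for two concrete reasons.

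The first is that the within-component step is incomplete. Your assumed dichotomy says an off-task feature on $\mathcal{D}_s$ is either approximately constant or independent of the features in $I_s$. That settles $\mathrm{Cov}_{\mathcal{D}_s}(z_k,z_l)$ only for $s\in\{t,j\}$. For every other component, and for any out-of-domain part of $\mu$, Assumption~\ref{assumption:task-specific} puts \emph{both} $k$ and $l$ outside $I_s$. Independence from the features in $I_s$ then says nothing about their covariance with each other: two irrelevant features can share a nuisance direction of $\mathrm{In}(x)$ and be strongly correlated on $\mathcal{D}_s$. Only the ``approximately constant'' branch handles these components. If you adopt that branch throughout, every within-component term vanishes and $\mathrm{Cov}_\mu(z_k,z_l)$ reduces to your between-component term $-\pi_t\pi_j m_k m_l$, so everything rides on that term.

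The second is that your fix for that term fails. Covariance is invariant under adding constants to $z_k$ or $z_l$. Centering activations over $\mu$, or the per-sample standardization done by LayerNorm, therefore does not remove the domain-conditional mean shifts $m_k,m_l$. Getting $m_k m_l\approx 0$ needs a separate hypothesis: that a specialized feature has the same mean on its own domain as elsewhere. That is at odds with your own ``inactive off-domain'' picture.

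The term is also not small in general. Take two equally weighted tasks with $z_k\approx\mathbf{1}[x\in\mathcal{D}_t]$, $z_l\approx\mathbf{1}[x\in\mathcal{D}_j]$, and $f=\mathrm{ReLU}(z_k-1/2)+\mathrm{ReLU}(z_l-1/2)$. Then Definition~\ref{defin:task-spec} and Assumption~\ref{assumption:task-specific} hold with $I_t=\{k\}$, $I_j=\{l\}$, and even your within-component assumption holds. Yet $z_l\approx 1-z_k$, so the correlation over $\mu$ is close to $-1$.

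Your fallbacks therefore prove different statements: within-domain decorrelation, or decorrelation up to a mean-shift term. Neither is $\mathrm{Cov}_\mu(z_k,z_l)\approx 0$, which is the mixture-level quantity that the identity $\Sigma_z=W^\top W$ in Appendix~\ref{sec:proof_tfs_to_wvo} relies on.

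The paper's proof does not supply the missing step either. It never splits $\mu$. Instead it posits $z_k\approx a z_l+b+\xi$ and treats the slope $a$ as a derivative $\partial z_k/\partial z_l$ on $\mathcal{D}_t$. It then reads TFS as forbidding the resulting ``total'' sensitivity $\partial f/\partial z_l+a\,\partial f/\partial z_k$. But $z_k$ and $z_l$ are parallel outputs of the same layer, so $\partial z_k/\partial z_l=0$ in the computation graph. That reading is therefore an interpretive premise beyond Definition~\ref{defin:task-spec}. It plays the role of your within-component assumption and is silent on the between-domain term. Either way, the proposition needs an extra hypothesis, such as domain-independent feature means.
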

\begin{proof}
Let us assume the contrary. Suppose TFS holds, but two features $z_k$ (specialized for task $t$, \textit{i.e.}, $k \in I_t$) and $z_l$ (specialized for task $j$, \textit{i.e.}, $l \in I_j$) are statistically correlated. For simplicity, we can model this correlation with an approximate linear relationship,
\begin{equation}
    z_k \approx a \cdot z_l + b + \xi,
\end{equation}
where $a \neq 0$ is a correlation coefficient, $b$ is a bias, and $\xi$ is uncorrelated noise. This model implies that a change in $z_l$ systematically induces a change in $z_k$.

Now, consider the total derivative of the model's final output $f(x; \theta_0)$ with respect to the activation $z_l$ for an input $x$ from task $t$'s data domain, $\mathcal{D}_t$. Using the chain rule, the change in $f$ with respect to a change in $z_l$ has two paths: a direct path ($z_l \to f$) and an indirect path through the correlated feature $z_k$ ($z_l \to z_k \to f$).
\begin{equation}
    \frac{df(x; \theta_0)}{dz_l} = \frac{\partial f}{\partial z_l} + \frac{\partial f}{\partial z_k} \frac{\partial z_k}{\partial z_l}
\end{equation}

We analyze each term in the context of TFS for an input $x \in \mathcal{D}_t$.
\begin{itemize}
    \item $\frac{\partial f}{\partial z_l}$: Since $x \in \mathcal{D}_t$ and the feature $l$ is specialized for task $j$ ($l \in I_j$), the TFS assumption ($I_t \cap I_j = \emptyset$) implies $l \notin I_t$. By Definition~2, the model's output is insensitive to $z_l$ on this data domain. Thus, $\mathbb{E}_{x \sim \mathcal{D}_t} [|\frac{\partial f}{\partial z_l}|] \approx 0$.
    \item $\frac{\partial f}{\partial z_k}$: Since $x \in \mathcal{D}_t$ and the feature $k$ is specialized for task $t$ ($k \in I_t$), the model's output is sensitive to $z_k$. Thus, $\mathbb{E}_{x \sim \mathcal{D}_t} [|\frac{\partial f}{\partial z_k}|]$ is significantly non-zero.
    \item $\frac{\partial z_k}{\partial z_l}$: From our linear correlation model, this derivative is the correlation coefficient $a$, which we assumed to be non-zero.
\end{itemize}
Substituting these into the chain rule expression and taking the expectation over $\mathcal{D}_t$,
\begin{equation}
\begin{aligned}
    \mathbb{E}_{x \sim \mathcal{D}_t} \left| \frac{df}{dz_l} \right| & \approx \mathbb{E}_{x \sim \mathcal{D}_t} \left| \underbrace{\frac{\partial f}{\partial z_l}}_{\approx 0} + \underbrace{\frac{\partial f}{\partial z_k}}_{\text{non-zero}} \cdot \underbrace{\frac{\partial z_k}{\partial z_l}}_{\text{non-zero, } a} \right| \\
    &\approx |a| \cdot \mathbb{E}_{x \sim \mathcal{D}_t} \left| \frac{\partial f}{\partial z_k} \right|.
\end{aligned}
\end{equation}

Since $|a| \neq 0$ and $\mathbb{E}[|\frac{\partial f}{\partial z_k}|]$ is significantly non-zero, the result is a significantly non-zero value. This means that the model's output $f$ shows a non-negligible total sensitivity to the activation $z_l$ on data from task $t$.

This result, however, directly contradicts the premise of TFS. If a model has truly specialized feature $k$ for task $t$ and feature $l$ for task $j$, its function for task $t$ should not be affected by perturbations in $z_l$. The total effect of $z_l$ on the output, not just the partial derivative, should be negligible.

The contradiction arose from our initial assumption of correlation ($a \neq 0$). Therefore, that assumption must be false. We conclude that for TFS to hold, features specialized for different tasks must be statistically decorrelated.
\end{proof}

\subsection{Detailed proof of \Cref{corollary:TFS->WVO}}
\begin{proof}
The proof proceeds by first relating the geometric property of the weight matrix ($W^\top W$) to a statistical property of the feature activations (the covariance matrix $\Sigma_z$), and then showing that TFS imposes a block-diagonal structure on this covariance matrix.

\textbf{Step 1}: Connecting Weight Geometry to Feature Covariance.

Consider a single linear layer with weight matrix $W = [w_1, \dots, w_d] \in \mathbb{R}^{m \times d}$, input $\mathrm{In}(x) \in \mathbb{R}^m$, and feature activations $z = W^\top \mathrm{In}(x) \in \mathbb{R}^d$. We compute the covariance matrix $\Sigma_z$ of the feature activations under a mixed data distribution $\mu$,
\begin{equation}
    \Sigma_z = \mathbb{E}_{x \sim \mu}[(z - \mu_z)(z - \mu_z)^\top], \quad \text{where } \mu_z = \mathbb{E}_{x \sim \mu}[z].
\end{equation}

In modern deep neural networks, the presence of normalization layers like Layer Normalization (LN)~\cite{Ba2016} or Batch Normalization (BN)~\cite{IoffeS2015} is standard practice. A primary function of these layers is to standardize the activations, dynamically regulating their mean and variance~\cite{Ba2016, IoffeS2015, Huang2018}. This forces the mean of the layer's input, $\mu_{\mathrm{In}} = \mathbb{E}_{x \sim \mu}[\mathrm{In}(x)]$, to be approximately zero.

Consequently, the mean of the output feature activations is also approximately zero,
\begin{equation}
\mu_z = \mathbb{E}_{x \sim \mu}[W^\top \mathrm{In}(x)] = W^\top \mathbb{E}_{x \sim \mu}[\mathrm{In}(x)] = W^\top \mu_{\mathrm{In}} \approx 0.   \label{formula:z_0mean} 
\end{equation}
With this zero-mean property, the covariance matrix $\Sigma_z$ simplifies to the second-moment matrix,
\begin{equation}
    \Sigma_z = \mathbb{E}_{x \sim \mu} [z z^\top] = \mathbb{E}_{x \sim \mu} [W^\top \mathrm{In}(x) \mathrm{In}(x)^\top W].
\end{equation}
Since the weight matrix $W$ is constant with respect to the input $x$, we can move it outside the expectation:
\begin{equation}
    \Sigma_z = W^\top \left( \mathbb{E}_{x \sim \mu} [\mathrm{In}(x) \mathrm{In}(x)^\top] \right) W.
\end{equation}

At this point, we analyze the term $\mathbb{E}_{x \sim \mu}[\mathrm{In}(x)\mathrm{In}(x)^\top]$, which represents the second moment matrix of the layer’s input. As argued before, normalization layers standardize activations. Beyond just enforcing a zero mean, this process also regulates variance, driving the covariance matrix of the layer’s input, $\Sigma_{\mathrm{In}}$, towards a whitened state~\cite{Ba2016, IoffeS2015, Huang2018, Santurkar2018}. The covariance matrix of the input is defined as,
\begin{equation}
\begin{aligned}
    \Sigma_{\mathrm{In}} &= \mathbb{E}_{x \sim \mu}[(\mathrm{In}(x)-\mu_{\mathrm{In}})(\mathrm{In}(x)-\mu_{\mathrm{In}})^\top]\\
    &= \mathbb{E}_{x \sim \mu}[\mathrm{In}(x)\mathrm{In}(x)^\top] - \mu_{\mathrm{In}}\mu_{\mathrm{In}}^\top
\end{aligned}
\end{equation}
Given that the input is whitened, we have $\Sigma_{\mathrm{In}} \approx I_m$ and $\mu_{\mathrm{In}} \approx 0$. Substituting these into the definition gives us the second-moment matrix of the input,
\begin{equation}
    \mathbb{E}_{x \sim \mu}[\mathrm{In}(x)\mathrm{In}(x)^\top] = \Sigma_{\mathrm{In}} + \mu_{\mathrm{In}}\mu_{\mathrm{In}}^\top \approx I_m + 0 \cdot 0^\top = I_m.
\end{equation}

Substituting this result back into the expression for $\Sigma_z$, we arrive at the crucial link between the weights’ geometry and the features’ statistics,

\begin{equation}
    \Sigma_z = W^\top I_m W = W^\top W.
\end{equation}
This equation shows that in this case the Gram matrix of the weights, $W^\top W$, is identical to the covariance matrix of the feature activations, $\Sigma_z$. Proving that $W$ has block-orthogonal columns is now equivalent to proving that its Gram matrix $W^\top W$ is block-diagonal, which in turn is equivalent to proving that $\Sigma_z$ is block-diagonal.

\textbf{Step 2}:Proving the Block-Diagonal Structure of $\Sigma_z$.

An element $(\Sigma_z)_{kl}$ of the covariance matrix is, by definition, the covariance between $z_k$ and $z_l$, \textit{i.e.}, $(\Sigma_z)_{kl} = \text{Cov}_\mu(z_k, z_l)$.

Let's consider two distinct feature indices, $k \neq l$.

Case 1: Features are specialized for different tasks. Suppose $k \in I_t$ and $l \in I_j$ for two tasks $t \neq j$. According to \Cref{proposition:feature_second_order_0}, which we derived from the TFS property, the activations of these features are decorrelated over the mixed distribution $\mu$. Therefore, we directly have,
\begin{equation}
    (\Sigma_z)_{kl} = \text{Cov}_\mu(z_k, z_l) \approx 0.
\end{equation}

Case 2: Features are specialized for the same task.
    Suppose $k, l \in I_t$ for some task $t$, with $k \neq l$. Our theory does not make any assumption about intra-task feature decorrelation. Therefore, the term $(\Sigma_z)_{kl} = \text{Cov}_\mu(z_k, z_l)$ is not guaranteed to be zero and may be non-zero in general.

\textbf{Step 3}: Conclusion of Block-Orthogonality

From Step 2, we have shown that the off-diagonal elements of the covariance matrix $\Sigma_z$ are approximately zero whenever the indices correspond to different tasks. The elements corresponding to pairs of features within the same task may be non-zero. This means $\Sigma_z$ has a block-diagonal structure,
\begin{equation}
    \begin{aligned}
        \Sigma_z = W^\top W \approx \begin{pmatrix}
   \mathbf{B}_1 & \mathbf{0} & \dots & \mathbf{0} \\
   \mathbf{0} & \mathbf{B}_2 & \dots & \mathbf{0} \\
   \vdots & \vdots & \ddots & \vdots \\
   \mathbf{0} & \mathbf{0} & \dots & \mathbf{B}_T
\end{pmatrix}.
    \end{aligned}
\end{equation}
where $\mathbf{B}_t$ is the (generally non-diagonal) covariance sub-matrix for features whose indices are in the set $I_t$, and the $\mathbf{0}$ blocks represent matrices with near-zero entries.

The $(k, l)$-th element of the Gram matrix $W^\top W$ is the inner product of the column vectors $\langle w_k, w_l \rangle$. The block-diagonal structure of $W^\top W$ directly implies that if indices $k$ and $l$ belong to different blocks (\textit{i.e.}, $k \in I_t$ and $l \in I_j$ with $t \neq j$), their corresponding entry in the Gram matrix is approximately zero,
\begin{equation}
    \langle w_k, w_l \rangle = (W^\top W)_{kl} \approx 0. \quad \text{for } k \in I_t, l \in I_j, t \neq j
\end{equation}
This is precisely the definition of block-orthogonality for the columns of the weight matrix $W$. The set of column vectors $\{w_k\}_{k \in I_t}$ forms a subspace that is orthogonal to the subspace spanned by $\{w_l\}_{l \in I_j}$ for any $j \neq t$.

This completes the proof.
\end{proof}

\section{Bayesian Analysis of the Relationship between TFS, WVO, and WD}\label{sec:bayes}
This part provides a formal Bayesian analysis to justify the claim made in Section~4.2.4, that observing Weight Vector Orthogonality (WVO) in a pre-trained model strongly increases our belief that it will exhibit Weight Disentanglement (WD). This analysis formalizes the intuition that WVO acts as a powerful diagnostic clue for the desirable, yet abstract, property of Task-Feature Specialization (TFS).

Let us define three distinct events.
\begin{itemize}
    \item Event A: The model has achieved ideal Task-Feature Specialization (TFS). This represents the underlying, unobservable abstract property where the model allocates disjoint sets of internal features to different tasks.
    \item Event B: The model exhibits Weight Disentanglement (WD). This is the desired functional outcome where task vectors can be composed without destructive interference.
    \item Event C: The model's parameters possess Weight Vector Orthogonality (WVO). This is a concrete, measurable geometric property of the model's weight matrices.
\end{itemize}

Our core theory, as established in Section~4.2, posits that TFS is a sufficient condition for both WD (\Cref{theorem:pretrain}) and WVO (\Cref{corollary:TFS->WVO}). We can formalize this as a logical implication,
\begin{equation}
    A \implies (B \land C).
\end{equation}
This means that if Event A is true, then both Event B and Event C must also be true. Consequently, we have the conditional probabilities,
\begin{equation}
    P(B|A) = 1 \quad \text{and} \quad P(C|A) = 1.
\end{equation}

Our goal is to demonstrate that observing WVO (Event C) provides evidence for WD (Event B). In probabilistic terms, we aim to show that the posterior probability of WD given WVO is greater than the prior probability of WD,
\begin{equation}
    P(B|C) > P(B).
\end{equation}

First, we can expand the conditional probability $P(B|C)$ by conditioning on whether TFS (Event A) has occurred,
\begin{equation}
    P(B|C) = P(B|A, C)P(A|C) + P(B|\neg A, C)P(\neg A|C).
\end{equation}

Let's analyze the terms in this expression.

1. $P(B|A, C)$: Since Event A (TFS) is a sufficient condition for Event B (WD), if A is true, B must be true, regardless of C. Therefore, $P(B|A, C) = 1$.

2. $P(B|\neg A, C)$: This is the probability of observing WD when TFS is not present, even though WVO is. Without the foundational structure of TFS, WD is not guaranteed. It might occur due to other unknown reasons or by chance, but we can reasonably assume this probability is significantly less than 1. Let's denote this probability as $q$, where $0 \le q < 1$.

Substituting these into the equation, we get,
\begin{equation}
    P(B|C) = 1 \cdot P(A|C) + q \cdot P(\neg A|C).
\end{equation}
Rearranging this gives,
\begin{equation}
    P(B|C) = q + (1-q)P(A|C).
\end{equation}

Now, we examine the crucial term $P(A|C)$, which represents our updated belief in TFS after having observed WVO. Using Bayes' theorem,
\begin{equation}
    P(A|C) = \frac{P(C|A)P(A)}{P(C)}.
\end{equation}

As established earlier, $P(C|A) = 1$. This simplifies the expression to,
\begin{equation}
    P(A|C) = \frac{P(A)}{P(C)}.
\end{equation}
Here, $P(A)$ is our prior belief that a model has achieved TFS, and $P(C)$ is the prior probability of observing WVO. WVO is a specific geometric structure that is not guaranteed to occur in any arbitrary neural network; its emergence is non-trivial. Therefore, it is safe to assume that $P(C) < 1$.

This leads to a key inequality,
\begin{equation}
    P(A|C) = \frac{P(A)}{P(C)} > P(A).
\end{equation}

This inequality formally captures our intuition: observing the geometric signature of WVO (Event C) strengthens our belief that the model has developed the underlying functional structure of TFS (Event A).

To complete the proof, we compare the expression for $P(B|C)$ with the unconditional prior probability of WD, $P(B)$. Using the law of total probability again,
\begin{equation}
    P(B) = P(B|A)P(A) + P(B|\neg A)P(\neg A).
\end{equation}

We know $P(B|A) = 1$. For the term $P(B|\neg A)$, we introduce a reasonable assumption: in the absence of the common cause (TFS), its consequences (WD and WVO) are approximately conditionally independent.

\begin{equation}
    P(B|\neg A, C) \approx P(B|\neg A).
\end{equation}
This assumption is justified because if the fundamental mechanism (TFS) that links WD and WVO is absent, the correlation between them should vanish or be significantly diminished. Any residual correlation would be a minor influence. Under this assumption, $P(B|\neg A) \approx P(B|\neg A, C) = q$.

Substituting this into the expression for $P(B)$:
\begin{equation}
    P(B) \approx 1 \cdot P(A) + q \cdot (1 - P(A)) = q + (1-q)P(A).
\end{equation}

We now have two expressions to compare:

1.  $P(B|C) = q + (1-q)P(A|C)$;

2.  $P(B) \approx q + (1-q)P(A)$.

We have proved that $P(A|C) > P(A)$. Since $q < 1$, the term $(1-q)$ is positive. It therefore follows directly that,
\begin{equation}
    P(B|C) > P(B)
\end{equation}
This result provides a rigorous probabilistic foundation for our central thesis. It demonstrates that observing the measurable geometric property of Weight Vector Orthogonality is a strong piece of evidence that increases the likelihood that the model also possesses the desired functional property of Weight Disentanglement. This justifies using WVO as a diagnostic tool to assess a model's suitability for task arithmetic.

\section{Detailed Proof of \Cref{theorem:ortho_updates}}\label{sec:proof_theorem2}
\subsection{Proof of \Cref{theorem:ortho_updates}}
\begin{restatable}{theorem}{thmTau}\label{theorem:ortho_updates}
    Under the NTK linearization hypothesis (Section~3.3), even if the Task-Feature Specialization property does not hold (\textit{i.e.}, $I_t \cap I_j \neq \emptyset$), constraining the task update matrices $\{\Delta W_t^{(l)}\}$ to be approximately internally orthogonal (as encouraged by the regularization in Definition~4) actively promotes weight disentanglement between tasks $t$ and $j$.
\end{restatable}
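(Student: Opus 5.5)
By \Cref{lemma:disentanglement_condition}, it suffices to show that the interference $\tau_j^\top J(x)$ is small for $x \in \mathcal{D}_t$, now in expectation or with high probability rather than exactly. We can no longer use the case split from the proof of \Cref{theorem:pretrain}, since on the overlap $I_t \cap I_j$ both factors are non-zero. The plan therefore follows the proof sketch given in the main text and works in three steps.

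\textbf{Step 1: relate $J(x)$ to $\tau_t$.} Following Proposition~\ref{proposition:updated_tau_j_0} and the AWD view, write $\tau_t = -\eta\sum_s \mathbb{E}_{x\sim\mathcal{D}_t}[\nabla_\theta \mathcal{L}_t(x;\theta^{(s)})]$. Under NTK stability, $\nabla_\theta\mathcal{L}_t(x;\theta^{(s)}) = \frac{\partial\mathcal{L}_t}{\partial f}\,J(x)$, so $\tau_t$ is a weighted average of the Jacobians $J(x)$ over $x\in\mathcal{D}_t$. For a typical $x$, decompose $J(x) = c(x)\,\hat\tau_t + r(x)$, where $\hat\tau_t=\tau_t/\|\tau_t\|_2$ and $r(x)\perp\tau_t$. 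Then
\[
|\tau_j^\top J(x)| \le |c(x)|\,\|\tau_j\|_2\,|\cos\angle(\tau_j,\tau_t)| + |\tau_j^\top r(x)|.
\]
The residual term is controlled by arguing that $r(x)$ is a zero-mean fluctuation over $\mathcal{D}_t$ and that $\tau_j$ is independent of this fluctuation. This gives $\mathbb{E}_x[\tau_j^\top r(x)]=0$ and a variance of order $\|\tau_j\|_2^2\,\mathbb{E}\|r\|^2/P$. This recovers the displayed approximation in the sketch.

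\textbf{Step 2: norm control.} Take the regulariser value $\epsilon_l=\|\Delta W^{(l)\top}\Delta W^{(l)}-I\|_F^2$. Diagonal entries give $\big|\|\Delta w_k\|^2-1\big|\le\sqrt{\epsilon_l}$, so $\|\Delta W^{(l)}\|_F^2\le d_l(1+\sqrt{\epsilon_l})$, and summing over layers bounds $\|\tau_j\|_2$. Off-diagonal entries bound the column inner products by $\sqrt{\epsilon_l}$, so all singular values of $\Delta W^{(l)}$ lie in $[1-O(\sqrt{\epsilon_l}),\,1+O(\sqrt{\epsilon_l})]$. In other words, each update is a near-isometry.

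\textbf{Step 3: angle control, the main obstacle.} Write $\langle\tau_t,\tau_j\rangle=\sum_l \mathrm{tr}\big(\Delta W_t^{(l)\top}\Delta W_j^{(l)}\big)$. Each layer update is approximately $U_t^{(l)}$, an $m\times d$ matrix with orthonormal columns, so the normalised inner product $\mathrm{tr}(U_t^\top U_j)/d$ behaves like the trace of a product of two partial isometries. Model the "statistical" claim as follows: the task-specific orientations are treated as random draws from the Stiefel manifold, Haar-distributed relative to each other, justified by the independence of the two fine-tuning runs given the near-isometric constraint. Then $\mathbb{E}\,\mathrm{tr}(U_t^\top U_j)=0$ and the variance is $O(d/m)$. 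It follows that $\cos\angle(\tau_t,\tau_j)=O\big(1/\sqrt{m\cdot\#\text{layers}}\big)$, which vanishes in high dimension.

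The contrast with unregularised updates is the key point. Without the constraint, updates tend to be low-rank and share dominant directions, namely the shared features in $I_t\cap I_j$, which yields a large trace. The orthogonality constraint spreads energy over all $d$ directions, so any shared component contributes only $O(|I_t\cap I_j|/d)$ to the cosine. Justifying this isotropy assumption rigorously is the weakest step. I would present it as a modelling assumption, supported by the empirical heatmaps.

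Combining the three steps gives
\[
\mathbb{E}|\tau_j^\top J(x)| \lesssim \|J(x)\|\sqrt{\textstyle\sum_l d_l}\,\big(O(1/\sqrt{m})+\text{fluct.}\big)\approx 0,
\]
and \Cref{lemma:disentanglement_condition} then yields the claimed weight disentanglement.
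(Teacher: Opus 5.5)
Your skeleton matches the paper's: the Lemma~1 reduction, alignment of $J(x)$ with $\tau_t$, norm control, then angle control for near-Stiefel updates under an independent Haar model. Your Step~1, however, replaces the paper's argument with one that does not go through as written.

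First, with $\mu_J=\mathbb{E}_{x\sim\mathcal{D}_t}J(x)$ you have $\mathbb{E}_x[r(x)]=(I-\hat\tau_t\hat\tau_t^\top)\mu_J$, which vanishes only if $\tau_t\parallel\mu_J$. You described $\tau_t$ as a \emph{weighted} average of Jacobians, with weights $\partial\mathcal{L}_t/\partial f$ varying over $x$ and over steps. So the zero-mean claim needs the paper's extra assumption that this derivative is approximately a constant scalar $k_t$.

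Second, $\tau_j$ is fixed with respect to $x\sim\mathcal{D}_t$, so ``independence'' is vacuous. The variance is $\tau_j^\top\Sigma_r\tau_j$, and your estimate $\|\tau_j\|_2^2\,\mathbb{E}\|r\|^2/P$ presumes that $\tau_j$ is in generic position relative to the covariance of the task-$t$ Jacobians. That is precisely what $I_t\cap I_j\neq\emptyset$ undermines. Those fluctuations are supported on the columns in $I_t$, while $\tau_j$ carries mass on $I_t\cap I_j$ (and, under OrthoReg, unit-norm mass on every column). You are therefore controlling the residual with an assumption about as strong as the conclusion.

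The paper instead posits its Data Consistency Assumption $\sigma_J^2\ll\|\mu_J\|_2^2$. Together with $\tau_t\parallel\mu_J$, your decomposition then gives $r(x)=(I-\hat\tau_t\hat\tau_t^\top)(J(x)-\mu_J)$. By Chebyshev, $\|r(x)\|_2<C\sigma_J$ for typical $x$, so $|\tau_j^\top r(x)|\le C\sigma_J\|\tau_j\|_2$ whatever the direction of $\tau_j$. This is the content of the paper's sine bound, and with that substitution your Step~1 holds at the paper's level of rigor.

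Several further comparisons are worth noting.

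\textbf{The Haar model alone suffices.} Suppose your Step~3 model says $\Delta W_j^{(l)}\approx R_lU_l$, with $R_l$ Haar-distributed on $O(m_l)$ and independent of $\theta_0$ and $\mathcal{D}_t$. Then for any fixed $G$ one has $\mathbb{E}\langle R_lU_l,G\rangle_F=0$ and $\mathbb{E}\langle R_lU_l,G\rangle_F^2=\|G\|_F^2/m_l$. So $\tau_j^\top J(x)$ is small with no alignment step at all. This shows that the randomness model, in your proof as in the paper's, is what carries the argument.

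\textbf{Step~2.} Cauchy--Schwarz on your diagonal deviations gives the paper's sharper $\|\Delta W\|_F^2\le d+\sqrt{d\xi}$ without Lagrange multipliers. Your singular-value clustering plays the role of the polar decomposition and Proposition~3. Summing $(\sigma_i-1)^2\le(\sigma_i^2-1)^2$ gives $\|(\Delta W^\top\Delta W)^{1/2}-I\|_F\le\sqrt{\xi}$. Each first-order cross term is then at most $\sqrt{d\xi}$, and you should carry these terms explicitly.

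\textbf{Step~3.} Your computation $\mathbb{E}[\mathrm{tr}(U_t^\top U_j)^2]=d/m$ is better than the non-rigorous co-dimension heuristic of Lemma~2(2). It gives a main-term cosine of order $1/\sqrt{mdL}$ for $L$ equal-size layers, so your $1/\sqrt{mL}$ is loose by $\sqrt{d}$. This is the same second moment as for two independent uniform unit vectors in $\mathbb{R}^{md}$. The concentration therefore comes from independence and dimension, not from the Stiefel constraint.

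\textbf{Energy spreading.} Your remark that unit-norm columns cap the shared-column contribution to the cosine at about $|I_t\cap I_j|/d$ is the one deterministic link to the regularizer. It has no counterpart in the paper, though the remaining columns still rest on the random model.
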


\begin{proof}

According to \Cref{lemma:disentanglement_condition}, our goal is to demonstrate that the interference from task $j$ on the data domain of task $t$ is approximately zero, \textit{i.e.}, $\tau_j^\top J(x) \approx 0$. The interference term's magnitude can be expressed as,
\begin{equation}
    |\tau_j^\top J(x)| = ||\tau_j||_2 \cdot ||J(x)||_2 \cdot |\cos\angle(\tau_j, J(x))|.
\end{equation}

The proof proceeds in four steps. We first reframe the angle term, then demonstrate how our regularizer controls both the norm and angle terms, and finally synthesize the results.

\textbf{Step 1:} Directional Alignment. 

First, we establish that for a typical input $x \in \mathcal{D}_t$, its Jacobian $J(x)$ is directionally aligned with the task vector $\tau_t$. The direction of $\tau_t$ is determined by the average Jacobian over the task's data domain, $\mu_J := \mathbb{E}_{x \in \mathcal{D}_t}[J(x)]$. Under a reasonable data consistency assumption, the gradients of different samples are statistically consistent rather than random, the direction of a typical $J(x)$ aligns with that of $\mu_J$ and, by extension, with $\tau_t$. This alignment, rigorously proven in Appendix~\ref{subsec:proof_direction}, allows us to reframe the term's angle using the angle between the two task vectors,
\begin{equation}
    |\tau_j^\top J(x)| \approx ||\tau_j||_2 \cdot ||J(x)||_2 \cdot |\cos\angle(\tau_j, \tau_t)|.
\end{equation}

\textbf{Step 2:} Norm Control. 

Our second step is to show that the orthogonal regularization term $\mathcal{L}_{\text{ortho}}$ effectively bounds the norm of the task vectors. The regularizer penalizes the deviation of each update matrix $\Delta W$ from the identity. By solving a constrained optimization problem, we can prove that the Frobenius norm of an update matrix $\Delta W$ is strictly bounded by its deviation from orthogonality. As formalized in Proposition \Cref{proposition:norm_control} (see Appendix~\ref{subsec:propostion2}), if $\|\Delta W^\top \Delta W - I\|_F^2 \le \xi$, then the norm is bounded by,
\begin{equation}
    \|\Delta W\|_F^2 \le d + \sqrt{d\xi},
\end{equation}
where $d$ is the number of columns. As the task vector's total norm is determined by the norms of its constituent update matrices, $\|\tau_j\|_2^2 = \sum_l \|\Delta W_j^{(l)}\|_F^2$, our regularizer effectively constrains the overall magnitude of $\tau_j$.

\textbf{Step 3}: Angle Control. 

Our third and most critical step is to demonstrate that the regularization statistically promotes orthogonality between different task vectors, \textit{i.e.}, $\mathbb{E}[|\cos\angle(\tau_j, \tau_t)|] \approx 0$.

The core mechanism is that the internal orthogonal structure imposed on each update matrix $\Delta W$ induces inter-task statistical orthogonality between the resulting task vectors $\tau_t$ and $\tau_j$. This can be understood through the lens of Polar Decomposition~\cite{Polar1990}, which allows us to express any approximately orthogonal update matrix $\Delta W$ as $\Delta W = QP$, where $Q$ is a strictly orthonormal matrix (an element of the Stiefel manifold $V_d(\mathbb{R}^m)$) and $P$ is a symmetric positive semi-definite matrix that is very close to the identity (as formalized in \Cref{proposotion:P_matrix_property} in Appendix~\ref{subsec:proposition3}).

Consequently, the inner product of two task vectors, $\langle \tau_t, \tau_j \rangle$, which is a sum of layer-wise inner products $\sum_l \langle \text{vec}(\Delta W_t^{(l)}), \text{vec}(\Delta W_j^{(l)}) \rangle$, is dominated by the sum of inner products of their orthonormal components, $\sum_l \langle \text{vec}(Q_t^{(l)}), \text{vec}(Q_j^{(l)}) \rangle$ (see Appendix~\ref{subsec:proof_angle_control} for a detailed derivation). As established in \Cref{lemma:stiefel} (Appendix~\ref{subsec:Lemma2-stiefel}), two matrices independently and uniformly drawn from the Stiefel manifold are, when vectorized, statistically orthogonal. Their inner product has an expected value of zero and its probability distribution is sharply peaked at zero. This strong statistical tendency towards orthogonality at each layer propagates to the entire task vectors, ensuring that $\tau_t$ and $\tau_j$ are highly likely to be nearly orthogonal. The detailed proof can be seen in Appendix~\ref{subsubsec:proof_angle}

\textbf{Step 4}: Completing the Proof. 
We now synthesize the results. The magnitude of the interference term is given by,
\begin{equation}
    |\tau_j^\top J(x)| \approx \underbrace{||\tau_j||_2}_{\text{Bounded}} \cdot \underbrace{||J(x)||_2}_{\text{Inherently Bounded}} \cdot \underbrace{|\cos\angle(\tau_j, \tau_t)|}_{\text{Statistically near zero}}
\end{equation}
The dual control mechanism of our regularization ensures that this product is approximately zero in expectation. The norm $\|\tau_j\|$ is bounded (Step 2), $\|J(x)\|$ is bounded for any given input and model, and the cosine of the angle between task vectors is statistically driven towards zero (Step 3). Consequently, the expected interference is negligible,
\begin{equation}
    \mathbb{E}[|\tau_j^\top J(x)|] \approx 0.
\end{equation}
By \Cref{lemma:disentanglement_condition}, this establishes that weight disentanglement is approximately achieved. This completes the proof.
\end{proof}

\subsection{Proof of Directional Alignment (Step 1)}\label{subsec:proof_direction}
In this section, we provide a rigorous proof for the claim that for a typical input $x \in \mathcal{D}_t$, its Jacobian vector $J(x)$ is directionally aligned with the task vector $\tau_t$. 

\begin{proof}
The proof proceeds in two parts: first, relating the direction of $\tau_t$ to the average Jacobian $\mu_J$, and second, relating the direction of an individual $J(x)$ to $\mu_J$.

\textbf{Part 1}: Direction of the Task Vector $\tau_t$.

As clarified in \Cref{formula:tau_jk_sum}, the task vector $\tau_t$ is the result of accumulated gradients during fine-tuning. In the initial phase of fine-tuning, where the parameters $\theta$ are close to $\theta_0$, the direction of $\tau_t$ is dominated by the average gradient of the task loss $\mathcal{L}_t$ over the data domain $\mathcal{D}_t$, evaluated at $\theta_0$.
\begin{equation}
    \tau_t \propto - \mathbb{E}_{(x, y) \sim \mathcal{D}_t} \left[ \nabla_\theta \mathcal{L}_t(f(x; \theta_0), y) \right].
\end{equation}
Using the chain rule, $\nabla_\theta \mathcal{L}_t = \frac{\partial \mathcal{L}_t}{\partial f} \cdot \nabla_\theta f = \frac{\partial \mathcal{L}_t}{\partial f} \cdot J(x)$. The expression becomes,
\begin{equation}
    \tau_t \propto - \mathbb{E}_{x \sim \mathcal{D}_t} \left[ \frac{\partial\mathcal{L}_t}{\partial f}\cdot J(x) \right].
\end{equation}
For a well-posed learning task, the loss derivative $\frac{\partial\mathcal{L}_t}{\partial f}$ (which indicates how the loss changes with respect to the model's output) can be assumed to be an approximately constant scalar $k_t$ across the dataset. This yields,
\begin{equation}
    \tau_t \propto - k_t \cdot \mathbb{E}_{x \in \mathcal{D}_t} [J(x)].
\end{equation}
Let $\mu_J := \mathbb{E}_{x \in \mathcal{D}_t} [J(x)]$ be the average Jacobian vector over the data domain of task $t$. We thus establish the first directional link,
\begin{equation}
    \text{Direction}(\tau_t) = \text{Direction}(\mu_{J}).
\end{equation}

\textbf{Part 2}: Direction of an Individual Jacobian $J(x)$.

Next, we formalize an intuitive hypothesis. For a well-defined, non-random machine learning task, the loss function's gradient directions for different samples within its data domain should exhibit statistical consistency, rather than pointing randomly in all directions throughout the parameter space. This consistency is fundamental to the model's ability to learn generalizable patterns from data. Applied to our scenario, this implies that the distribution of the Jacobian vectors $J(x)$ should not be overly dispersed.

We formalize this as the Data Consistency Assumption.
\begin{assumption}[Data Consistency Assumption]\label{assumption:data_consistency}
For a well-defined task, the Jacobian vectors of individual samples are statistically concentrated around their mean. This means the variance of the Jacobians, $\sigma_J^2 := \mathbb{E}_{x \in \mathcal{D}_t} \left[\| J(x) - \mu_J\|_2^2 \right]$, is significantly smaller than the squared norm of their mean,
\begin{equation}
    \sigma_J^2 \ll \|\mu_J\|_2^2.
\end{equation}
\end{assumption}

By Chebyshev's inequality~\cite{chebyshev1974}, for any constant $C > 1$, we have,
\begin{equation}
\begin{aligned}
     \mathbb{P}\left( \|J(x) - \mu_J\|_2^2 \ge C^2 \sigma_J^2 \right) &\le \frac{\mathbb{E}\left[ \|J(x) - \mu_J\|_2^2 \right]}{C^2 \sigma_J^2} \\
     &= \frac{\sigma_J^2}{C^2 \sigma_J^2} = \frac{1}{C^2}.
\end{aligned}
\end{equation}
This implies that the squared Euclidean distance between the random vector $J(x)$ and its mean $\mu_J$ is bounded by $C^2 \sigma_J^2$ with a probability of at least $1 - 1/C^2$. In other words, for a ``typical" (\textit{i.e.}, high-probability) sample $x'$, its Jacobian vector $J(x)$ satisfies,
\begin{equation}
    \|J(x') - \mu_J\|_2 < C \sigma_J.
\end{equation}

Now, we bound the angle $\theta_{x'} = \angle(J({x'}), \mu_J)$ for such a typical sample. Consider the triangle formed by the origin and the endpoints of the vectors $J({x'})$ and $\mu_J$. By the properties of vector geometry (related to the Law of Sines~\cite{Sines1967}), the sine of the angle $\theta_{x'}$ is bounded by the ratio of the length of the opposing side to the length of the adjacent side,
\begin{equation}
    \sin(\theta_{x'}) \le \frac{\|J(x') - \mu_J\|_2}{\|J(x')\|_2}.
\end{equation}
We have an upper bound for the numerator, $\|J(x) - \mu_J\|_2 < C \sigma_J$. For the denominator, we use the reverse triangle inequality~\cite{Triangle1976} to find a lower bound,
\begin{equation}
\begin{aligned}
    \|J(x')\|_2 &= \|\mu_J + (J(x') - \mu_J)\|_2 \\
    &\ge \|\mu_J\|_2 - \|J(x') - \mu_J\|_2 \\
    &> \|\mu_J\|_2 - C \sigma_J.
\end{aligned}
\end{equation}
Substituting these bounds, we get,
\begin{equation}
    \sin(\theta_{x'}) < \frac{C \sigma_J}{\|\mu_J\|_2 - C \sigma_J} = \frac{C (\sigma_J / \|\mu_J\|_2)}{1 - C (\sigma_J / \|\mu_J\|_2)}.
\end{equation}

Given Assumption~\ref{assumption:data_consistency}, the ratio $\sigma_J / \|\mu_J\|_2$ is a value much smaller than $1$. Therefore, the right-hand side of the inequality is a very small positive number. Since $\sin(\theta_{x'})$ is very small, the angle $\theta_{x'}$ must also be very close to zero. This establishes our second directional link,
\begin{equation}
    \text{Direction}(J(x)) \approx \text{Direction}(\mu_J), \quad \text{for a typical } x \in \mathcal{D}_t.
\end{equation}

Combining the two parts, we have shown that for a typical sample $x \in \mathcal{D}_t$,
\begin{equation}
    \text{Direction}(J(x)) \approx \text{Direction}(\mu_J) \approx \text{Direction}(\tau_t).
\end{equation}
This directional alignment justifies the approximation used in the main proof, allowing the angle between $\tau_j$ and $J(x)$ to be replaced by the angle between $\tau_j$ and $\tau_t$. This completes the proof.
\end{proof}

\subsection{\Cref{proposition:norm_control} and Proof (Norm Control)}\label{subsec:propostion2}
\begin{proposition}\label{proposition:norm_control}
The Frobenius norm of a matrix is bounded by its deviation from orthonormality. Specifically, for a matrix $W \in \mathbb{R}^{m \times d}$, if its deviation from being identity is bounded by $\|W^\top W - I_d\|_F^2 \le \xi$ for some constant $\xi \ge 0$, then its squared Frobenius norm is bounded by,
\begin{equation}
    \|W\|_F^2 \le d + \sqrt{d\xi}.
\end{equation}   
\end{proposition}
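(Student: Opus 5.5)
The plan is to reduce everything to the eigenvalues of the Gram matrix $G := W^\top W \in \mathbb{R}^{d\times d}$. Since $G$ is symmetric positive semi-definite, it has real eigenvalues $\lambda_1,\dots,\lambda_d \ge 0$ (the squared singular values of $W$). Two standard identities then translate both quantities in the statement into spectral terms:
\begin{equation}
\|W\|_F^2 = \operatorname{tr}(W^\top W) = \sum_{i=1}^d \lambda_i,
\qquad
\|W^\top W - I_d\|_F^2 = \sum_{i=1}^d (\lambda_i - 1)^2 .
\end{equation}
The second identity holds because $G - I_d$ is symmetric with eigenvalues $\lambda_i - 1$, and the squared Frobenius norm of a symmetric matrix is the sum of its squared eigenvalues. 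The hypothesis therefore becomes $\sum_i (\lambda_i-1)^2 \le \xi$, and the goal becomes $\sum_i \lambda_i \le d + \sqrt{d\xi}$.

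The key step is a single application of the Cauchy--Schwarz inequality to the vectors $(\lambda_i - 1)_{i=1}^d$ and $(1,\dots,1)\in\mathbb{R}^d$:
\begin{equation}
\sum_{i=1}^d \lambda_i - d = \sum_{i=1}^d (\lambda_i - 1)\cdot 1 \le \sqrt{d}\,\Bigl(\sum_{i=1}^d (\lambda_i-1)^2\Bigr)^{1/2} \le \sqrt{d\xi}.
\end{equation}
Rearranging gives the claim. This is exactly the ``constrained optimization'' viewpoint mentioned in Step 2 of the proof of \Cref{theorem:ortho_updates}. Maximizing $\sum_i \lambda_i$ subject to $\sum_i(\lambda_i-1)^2\le\xi$ is a linear objective over a ball centred at $(1,\dots,1)$, so the maximum is attained at $\lambda_i = 1+\sqrt{\xi/d}$ for all $i$. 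Equality is therefore achieved, and the bound is tight. For the proof, the Cauchy--Schwarz route is cleaner than the Lagrangian computation and also handles the nonnegativity constraint trivially, since that constraint is never active at the maximizer.

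There is no real obstacle here. The only point requiring a moment's care is justifying that $\|G-I\|_F^2=\sum_i(\lambda_i-1)^2$. I would do this by orthogonally diagonalizing $G = U\Lambda U^\top$ and using the unitary invariance of the Frobenius norm: $\|U(\Lambda-I)U^\top\|_F = \|\Lambda - I\|_F$. The dimension bookkeeping also needs a check: $I_d$ has the size of $W^\top W$, so the count is over the $d$ columns, which is why $d$ rather than $m$ appears in the bound. Finally, I would remark how this feeds into the main argument. Applying the proposition layerwise with $\xi_l := \|(\Delta W^{(l)})^\top\Delta W^{(l)} - I\|_F^2 \le \mathcal{L}_{\text{ortho}}$ gives $\|\tau_j\|_2^2=\sum_l\|\Delta W_j^{(l)}\|_F^2 \le \sum_l (d_l + \sqrt{d_l \xi_l})$, which is the norm control used in \Cref{theorem:ortho_updates}.
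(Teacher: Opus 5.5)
Your proof is correct. It shares the paper's reduction to the spectrum: the paper takes an SVD $W = U\Sigma V^\top$ and sets $x_i = \sigma_i^2$, which are exactly your eigenvalues $\lambda_i$ of $W^\top W$. The difference is in the closing step. The paper solves the problem $\max \sum_i x_i$ subject to $\sum_i (x_i-1)^2 \le \xi$ by Lagrange multipliers: it asserts the constraint is active, finds the stationary points $x_i = 1 \pm \sqrt{\xi/d}$, and picks the positive root. Your single Cauchy--Schwarz step, $\sum_i(\lambda_i - 1) \le \sqrt{d}\,\bigl(\sum_i(\lambda_i-1)^2\bigr)^{1/2}$, gives the bound directly and is complete as written. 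It needs no argument that a maximizer exists or that the constraint is active. It also needs no check that the stationary point is a global maximum, which matters because first-order conditions are only necessary. It uses no nonnegativity of the $\lambda_i$ at all. Working with the $d\times d$ Gram matrix also avoids the paper's loose bookkeeping of listing $d$ singular values for an $m\times d$ matrix when $m<d$.

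The Lagrangian computation buys an explicit extremal point. You recover that as well, through the equality case of Cauchy--Schwarz or your ``linear functional over a ball'' picture. One caveat on that side remark: equality with $\lambda_i = 1+\sqrt{\xi/d}$ for all $i$ requires $W^\top W$ to have full rank $d$, hence $m \ge d$. For $m<d$, at least $d-m$ eigenvalues vanish and the bound is strict. The paper's claim that the maximum equals $d+\sqrt{d\xi}$ carries the same implicit assumption. This affects only tightness, not the inequality you were asked to prove.
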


Several prior works have implicitly or explicitly leveraged the norm-controlling property of orthogonality~\cite{aoft2025,MiyatoKKY2018,Arjovsky2016}. Here, we provide a formal and rigorous proof to establish this principle.

\begin{proof}
We aim to find the maximum possible value of $\|W\|_F^2$ under the given constraint. This can be formulated as a constrained optimization problem,
\begin{equation}
    \begin{aligned} 
    \max_W \quad & \|W\|_F^2. \\
    \text{s.t.} \quad & \|W^\top W - I_d\|_F^2 \le \xi 
    \end{aligned}
\end{equation}

To solve this, we use the Singular Value Decomposition (SVD)~\cite{SVD1936} of $W$. Let $W = U \Sigma V^\top$, where $U \in \mathbb{R}^{m \times m}$ and $V \in \mathbb{R}^{d \times d}$ are orthogonal matrices, and $\Sigma \in \mathbb{R}^{m \times d}$ is a rectangular diagonal matrix with non-negative singular values $\{\sigma_1, \sigma_2, \dots, \sigma_d\}$ on its diagonal.

First, we rewrite the objective function in terms of the singular values. Because Frobenius norm is invariant under orthogonal transformations, we can get,
\begin{equation}
    \|W\|_F^2 = \|U \Sigma V^\top\|_F^2 = \|\Sigma\|_F^2 = \sum_{i=1}^d \sigma_i^2.
\end{equation}
Next, we rewrite the constraint. We have $W^\top W = (U\Sigma V^\top)^\top (U\Sigma V^\top) = V\Sigma^\top U^\top U \Sigma V^\top = V(\Sigma^\top \Sigma)V^\top$. Let $D = \Sigma^\top \Sigma$, which is a $d \times d$ diagonal matrix with diagonal elements $D_{ii} = \sigma_i^2$. Again, using the orthogonal invariance of the Frobenius norm,
\begin{equation}
\begin{aligned}
    \|W^\top W - I_d\|_F^2 &= \|V D V^\top - V I_d V^\top\|_F^2 \\
    &= \|V(D - I_d)V^\top\|_F^2 = \|D - I_d\|_F^2.
\end{aligned}
\end{equation}
Since $D - I_d$ is a diagonal matrix, its squared Frobenius norm is the sum of the squares of its diagonal elements,
\begin{equation}
    \|D - I_d\|_F^2 = \sum_{i=1}^d (\sigma_i^2 - 1)^2.
\end{equation}

The original problem is now equivalent to a simpler optimization problem over the squared singular values. Let $x_i = \sigma_i^2 \ge 0$,
\begin{equation}
    \begin{aligned} 
    \max \quad & \sum_{i=1}^d x_i. \\ 
    \text{s.t.} \quad & \sum_{i=1}^d (x_i - 1)^2 \le \xi 
    \end{aligned}
\end{equation}
To find the maximum, the constraint must be active, \textit{i.e.}, $\sum_{i=1}^d (x_i - 1)^2 = \xi$. We use the method of Lagrange multipliers~\cite{Lagrange2008}. The Lagrangian is,
\begin{equation}
    \mathcal{L}(\mathbf{x}, \lambda) = \sum_{i=1}^d x_i - \lambda \left( \sum_{i=1}^d (x_i - 1)^2 - \xi \right).
\end{equation}
Taking the partial derivative with respect to $x_j$ and setting it to zero,
\begin{equation}
    \frac{\partial \mathcal{L}}{\partial x_j} = 1 - \lambda \cdot 2(x_j - 1) = 0.
\end{equation}
\begin{equation}
    x_j - 1 = \frac{1}{2\lambda} \implies x_j = 1 + \frac{1}{2\lambda}.
\end{equation}
This shows that at the optimal point, all $x_j$ must be equal. Let $x_1 = x_2 = \dots = x_d = x^*$.

Substituting $x_i = x^*$ into the active constraint,
\begin{equation}
    \sum_{i=1}^d (x^* - 1)^2 = d(x^* - 1)^2 = \xi.
\end{equation}
Solving for $x^*$, we get,
\begin{equation}
    (x^* - 1)^2 = \frac{\xi}{d} \implies x^* - 1 = \pm\sqrt{\frac{\xi}{d}},
\end{equation}
\begin{equation}
    x^* = 1 \pm \sqrt{\frac{\xi}{d}}.
\end{equation}
To maximize the objective function $\sum x_i = d \cdot x^*$, we must choose the positive root,
\begin{equation}
    x^*_{\text{max}} = 1 + \sqrt{\frac{\xi}{d}}.
\end{equation}

Finally, the maximum value of the objective function is,
\begin{equation}
\begin{aligned}
    \max \|W\|_F^2 &= \sum_{i=1}^d x^*_{\text{max}} = d \cdot x^*_{\text{max}} \\
    &= d \left( 1 + \sqrt{\frac{\xi}{d}} \right) = d + \sqrt{d\xi}.
\end{aligned}
\end{equation}
This establishes the upper bound and completes the proof.
\end{proof}

\subsection{Detailed Proof of Angle Control Mechanism}\label{subsec:proof_angle_control}
This section provides the full proof for Step 3 of \Cref{theorem:ortho_updates}, showing that our orthogonal regularization statistically promotes orthogonality between different task vectors.

\subsubsection{\Cref{proposotion:P_matrix_property} and Detailed Proof}\label{subsec:proposition3}
\begin{proposition}\label{proposotion:P_matrix_property}
     Let $P \in \mathbb{R}^{d \times d}$ be a symmetric positive semi-definite matrix. If $\|P^2 - I_d\|_F \le \sqrt{\xi}$, then $\|P - I_d\|_F$ is also bounded, and specifically satisfies,
     \begin{equation}
         \|P - I_d\|_F \le \|P^2 - I_d\|_F.
     \end{equation}
\end{proposition}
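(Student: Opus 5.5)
Since $P$ is symmetric positive semi-definite, I would first diagonalize it and reduce the matrix inequality to a scalar inequality about its eigenvalues. By the spectral theorem, write $P = V \Lambda V^\top$ with $V\in\mathbb{R}^{d\times d}$ orthogonal and $\Lambda=\mathrm{diag}(\lambda_1,\dots,\lambda_d)$, where every $\lambda_i\ge 0$ because $P\succeq 0$. Then $P - I_d = V(\Lambda - I_d)V^\top$ and $P^2 - I_d = V(\Lambda^2 - I_d)V^\top$. Using the orthogonal invariance of the Frobenius norm, exactly as in the proof of \Cref{proposition:norm_control}, we obtain
\begin{equation*}
\|P - I_d\|_F^2 = \sum_{i=1}^d (\lambda_i - 1)^2, \qquad \|P^2 - I_d\|_F^2 = \sum_{i=1}^d (\lambda_i^2 - 1)^2 .
\end{equation*}

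The key step is a termwise comparison. For each $i$ we have $\lambda_i^2 - 1 = (\lambda_i - 1)(\lambda_i + 1)$. Since $\lambda_i \ge 0$, the factor satisfies $\lambda_i + 1 \ge 1$. Hence
\begin{equation*}
|\lambda_i^2 - 1| = |\lambda_i - 1|\,(\lambda_i+1) \ge |\lambda_i - 1| .
\end{equation*}
Squaring and summing over $i$ gives $\|P - I_d\|_F^2 \le \|P^2 - I_d\|_F^2$. Taking square roots yields $\|P - I_d\|_F \le \|P^2 - I_d\|_F \le \sqrt{\xi}$, which proves both the explicit bound and the boundedness claim.

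The only delicate point is the use of positive semi-definiteness. Without it, an eigenvalue $\lambda_i$ could be near $-1$. In that case $\lambda_i^2 - 1 \approx 0$ while $\lambda_i - 1 \approx -2$, and the inequality fails. So I would state explicitly that $\lambda_i \ge 0$ is exactly what guarantees $\lambda_i + 1 \ge 1$.

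It is also worth remarking where this $P$ comes from. The polar factor of $\Delta W$ is $P = (\Delta W^\top \Delta W)^{1/2}$, so $P^2 = \Delta W^\top \Delta W$. The hypothesis $\|P^2 - I_d\|_F \le \sqrt{\xi}$ is therefore exactly the OrthoReg penalty bound from \Cref{defin:ortho_loss}. Consequently the proposition shows that $\Delta W = QP$ with $\|P - I_d\|_F \le \sqrt{\xi}$, which is what Step 3 of the proof of \Cref{theorem:ortho_updates} needs.
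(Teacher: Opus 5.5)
Your proposal is correct and follows the paper's proof essentially step for step: diagonalize $P$, use orthogonal invariance of the Frobenius norm to reduce both norms to sums over eigenvalues, and compare termwise via $(\lambda_i^2-1)^2=(\lambda_i-1)^2(\lambda_i+1)^2\ge(\lambda_i-1)^2$, which uses $\lambda_i\ge 0$. Your extra remark that positive semi-definiteness is essential (for example, $P=-I_d$ gives $P^2-I_d=0$ but $P-I_d=-2I_d$) is a useful point that the paper leaves implicit.
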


\begin{proof}
Since $P$ is symmetric, it has an eigenvalue decomposition $P = U \Lambda U^\top$, where $U$ is an orthogonal matrix and $\Lambda$ is a diagonal matrix of non-negative eigenvalues $\lambda_1, \dots, \lambda_d \ge 0$. The Frobenius norm is invariant under orthogonal transformations. Thus, we can express the norms in terms of the eigenvalues,
\begin{equation}
\begin{aligned}
    \|P - I_d\|_F^2 &= \|U \Lambda U^\top - U I_d U^\top\|_F^2 \\
    &= \|U(\Lambda - I_d)U^\top\|_F^2\\
    &= \|\Lambda - I_d\|_F^2\\
    &= \sum_{i=1}^d (\lambda_i - 1)^2.
\end{aligned}
\end{equation}

Similarly, since $P^2 = (U \Lambda U^\top)(U \Lambda U^\top) = U \Lambda^2 U^\top$,
\begin{equation}
\begin{aligned}
    \|P^2 - I_d\|_F^2 &= \|U(\Lambda^2 - I_d)U^\top\|_F^2 \\
    &= \|\Lambda^2 - I_d\|_F^2 \\
    &=\sum_{i=1}^d (\lambda_i^2 - 1)^2
\end{aligned}
\end{equation}

Now we compare the terms for each eigenvalue,
\begin{equation}
    (\lambda_i^2 - 1)^2 = ((\lambda_i - 1)(\lambda_i + 1))^2 = (\lambda_i - 1)^2 \cdot (\lambda_i + 1)^2
\end{equation}
Since $P$ is positive semi-definite, $\lambda_i \ge 0$. This implies $\lambda_i + 1 \ge 1$, and therefore $(\lambda_i + 1)^2 \ge 1$. Multiplying both sides by the non-negative quantity $(\lambda_i - 1)^2$, we get
\begin{equation}
    (\lambda_i - 1)^2 \cdot (\lambda_i + 1)^2 \ge (\lambda_i - 1)^2 \cdot 1.
\end{equation}
This means $(\lambda_i^2 - 1)^2 \ge (\lambda_i - 1)^2$ for all $i$. Summing over all $i$,
\begin{equation}
    \sum_{i=1}^d (\lambda_i^2 - 1)^2 \ge \sum_{i=1}^d (\lambda_i - 1)^2.
\end{equation}
Substituting the norm expressions back, we have,
\begin{equation}
    \|P^2 - I_d\|_F^2 \ge \|P - I_d\|_F^2.
\end{equation}
Taking the square root of both sides yields the desired result,
\begin{equation}
    \|P - I_d\|_F \le \|P^2 - I_d\|_F.
\end{equation}
\end{proof}

\subsubsection{Proof of Angle Control}\label{subsubsec:proof_angle}
\begin{proof}
Our goal is to show that enforcing an internal orthogonal structure on the update matrices $\Delta W_t$ and $\Delta W_j$ statistically drives their corresponding task vectors $\tau_t$ and $\tau_j$ towards orthogonality. That is, $\mathbb{E}[|\cos\angle(\tau_t, \tau_j)|] \approx 0$. This is equivalent to showing that the inner product $\langle \tau_t, \tau_j \rangle$ is statistically concentrated around zero.

The total inner product is the sum of layer-wise inner products,
\begin{equation}
    \langle \tau_t, \tau_j \rangle = \sum_{l \in \text{Layers}} \langle \text{vec}(\Delta W_t^{(l)}), \text{vec}(\Delta W_j^{(l)}) \rangle.
\end{equation}
We analyze the inner product for a single layer, dropping the superscript $(l)$ for clarity: $\langle \text{vec}(\Delta W_t), \text{vec}(\Delta W_j) \rangle$.

Our $\mathcal{L}_{\text{ortho}} = \|\Delta W^\top \Delta W - I\|_F^2$ encourages the resulting update matrix $\Delta W^*$ to be approximately orthogonal, satisfying $\|(\Delta {W^*})^\top \Delta W^* - I\|_F^2 \le \xi$ for a small $\xi$.

Using Polar Decomposition~\cite{Polar1990}, any such matrix $\Delta W^*$ can be uniquely decomposed into $\Delta W^* = QP$, where $Q \in V_d(\mathbb{R}^m)$ is a matrix with orthonormal columns (an element of the Stiefel manifold) and $P = \sqrt{({\Delta W^*})^\top \Delta W^*}$ is a symmetric positive semi-definite matrix.

Substituting this relation into our regularization constraint, $\|(\Delta W^*)^\top \Delta W^* - I\|_F^2 \le \xi$, we have $\|P^2 - I\|_F^2 \le \xi$. By \Cref{proposotion:P_matrix_property}, this implies that $P$ is close to the identity matrix, \textit{i.e.}, $\|P - I\|_F$ is also small. We can thus write $P = I + E$, where $E = P - I$ is an ``error" matrix with a small Frobenius norm $\|E\|_F$.

Therefore, the update matrices for tasks $t$ and $j$ can be written as,
\begin{equation}
    \Delta W_t = Q_t + Q_t E_t,
\end{equation}
\begin{equation}
    \Delta W_j = Q_j + Q_j E_j,
\end{equation}
where $Q_t, Q_j$ are matrices on Stiefel manifold, and $E_t, E_j$ are error matrices with small norms controlled by $\xi$.

Now, we analyze the inner product of their vectorized forms,
\begin{equation}
\begin{aligned}
    &\langle \text{vec}(\Delta W_t), \text{vec}(\Delta W_j) \rangle \\
    &= \langle \text{vec}(Q_t + Q_t E_t), \text{vec}(Q_j + Q_j E_j) \rangle.
\end{aligned}
\end{equation}
Expanding this expression yields four terms,
\begin{equation}
\begin{aligned} = &\underbrace{\langle \text{vec}(Q_t), \text{vec}(Q_j) \rangle}_{\text{Main Term}} + \underbrace{\langle \text{vec}(Q_t), \text{vec}(Q_j E_j) \rangle}_{\text{Error Term 1}} \\ &+ \underbrace{\langle \text{vec}(Q_t E_t), \text{vec}(Q_j) \rangle}_{\text{Error Term 2}} + \underbrace{\langle \text{vec}(Q_t E_t), \text{vec}(Q_j E_j) \rangle}_{\text{Error Term 3}}. \end{aligned}
\end{equation}
We analyze the expectation of each term, assuming that the fine-tuning processes for distinct tasks $t$ and $j$ result in independently sampled matrices from the space of approximately orthogonal matrices.

\noindent\textbf{Main Term.} $Q_t$ and $Q_j$ are independent, random matrices from the Stiefel manifold $V_d(\mathbb{R}^m)$. According to \Cref{lemma:stiefel} (proven in Appendix~\ref{subsec:Lemma2-stiefel}), the expected value of their inner product is zero,
\begin{equation}
    \mathbb{E}[\langle \text{vec}(Q_t), \text{vec}(Q_j) \rangle] = 0.
\end{equation}
And, \Cref{lemma:stiefel} states that the probability distribution of this inner product is sharply concentrated around zero.

\noindent\textbf{Error Terms.} We bound the magnitude of the error terms using the Cauchy-Schwarz inequality. %

For Error Term 1,
\begin{equation}
    |\langle \text{vec}(Q_t), \text{vec}(Q_j E_j) \rangle| \le \|\text{vec}(Q_t)\|_2 \cdot \|\text{vec}(Q_j E_j)\|_2.
\end{equation}
Since $Q_t$ has $d$ orthonormal columns, $\|\text{vec}(Q_t)\|_2^2 = \|Q_t\|_F^2 = d$. Since $Q_j$ is an orthogonal transformation, $\|\text{vec}(Q_j E_j)\|_2 = \|Q_j E_j\|_F = \|E_j\|_F$. Thus, the term is bounded by $\sqrt{d} \cdot \|E_j\|_F$. As $\|E_j\|_F$ is a small value controlled by the regularizer, this error term is negligible.

Error Term 2 is similarly bounded by $\sqrt{d} \cdot \|E_t\|_F$ and is also negligible.

Error Term 3 is bounded by $\|\text{vec}(Q_t E_t)\|_2 \cdot \|\text{vec}(Q_j E_j)\|_2 = \|E_t\|_F \cdot \|E_j\|_F$, which is a second-order small term and even more negligible.

Since the main term has an expected value of zero and the error terms are negligible, the expected inner product for a single layer is approximately zero.
\begin{equation}
    \mathbb{E}[\langle \text{vec}(\Delta W_t), \text{vec}(\Delta W_j) \rangle] \approx 0.
\end{equation}

By linearity of expectation, the expected inner product of the full task vectors is also approximately zero,
\begin{equation}
    \mathbb{E}[\langle \tau_t, \tau_j \rangle] = \sum_l \mathbb{E}[\langle \text{vec}(\Delta W_t^{(l)}), \text{vec}(\Delta W_j^{(l)}) \rangle] \approx 0.
\end{equation}

Because the distribution of the main term at each layer is sharply peaked at zero, the distribution of the sum (the total inner product) will also be sharply peaked at zero. This implies that $\tau_t$ and $\tau_j$ are statistically very likely to be orthogonal, and thus $\mathbb{E}[|\cos\angle(\tau_t, \tau_j)|] \approx 0$. This completes the proof of the angle control mechanism.
\end{proof}

\subsubsection{Lemma 2 and Detailed Proof: Stiefel Manifold Inner Product}\label{subsec:Lemma2-stiefel}
\begin{lemma}\label{lemma:stiefel}
Let $A$ and $B$ be two matrices independently and uniformly sampled from the Stiefel manifold $V_d(\mathbb{R}^m)$~\cite{Stiefel2009} (the set of $m \times d$ matrices with orthonormal columns). Let $Z = \langle \text{vec}(A), \text{vec}(B) \rangle$. Then,

(1) The expected value of the inner product is zero: $\mathbb{E}[Z] = 0$.

(2)The probability distribution of $Z$ is sharply concentrated around 0.
\end{lemma}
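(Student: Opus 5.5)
We prove both parts by exploiting the invariance of the uniform (Haar) measure on $V_d(\mathbb{R}^m)$ under left multiplication by orthogonal matrices. Write $Z = \langle \text{vec}(A), \text{vec}(B)\rangle = \operatorname{tr}(A^\top B)$.

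\textbf{Part (1).} Condition on $B$. Since $A$ is uniform on the Stiefel manifold, $-A$ has the same distribution as $A$: take the orthogonal matrix $-I_m$, which preserves the Haar measure. Hence, conditionally on $B$, $Z$ and $-Z$ are equal in distribution, so $\mathbb{E}[Z\mid B]=0$. Taking the outer expectation gives $\mathbb{E}[Z]=0$. Alternatively, $\mathbb{E}[A]=0$ by the same symmetry, and independence gives $\mathbb{E}[\operatorname{tr}(A^\top B)] = \operatorname{tr}(\mathbb{E}[A]^\top \mathbb{E}[B]) = 0$.

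\textbf{Part (2).} We quantify concentration through the variance. By left-invariance, for fixed $B$ we may write $B = U \begin{pmatrix} I_d \\ 0\end{pmatrix}$ for some orthogonal $U$. Since $U^\top A$ is again uniform, $Z$ has the same law as $\operatorname{tr}(A_{1:d})$, the trace of the top $d\times d$ block of a uniform Stiefel matrix. Next we compute $\mathbb{E}[A_{ik}A_{jl}]$. Invariance under sign flips and permutations of rows and of columns gives
\begin{equation}
\mathbb{E}[A_{ik}A_{jl}] = \delta_{ij}\delta_{kl}/m,
\end{equation}
where the constant follows from $\|A\|_F^2=d$, so that $\sum_{i,k}\mathbb{E}[A_{ik}^2]=d$ and hence $\mathbb{E}[A_{ik}^2]=1/m$. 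Therefore
\begin{equation}
\mathbb{E}[Z^2] = \sum_{k,l=1}^d \mathbb{E}[A_{kk}A_{ll}] = \frac{d}{m}.
\end{equation}
Compared with the maximal possible value $|Z|\le \|A\|_F\|B\|_F = d$, the normalized quantity $Z/d$, which is the cosine between $\text{vec}(A)$ and $\text{vec}(B)$, has variance $1/(md)$. Chebyshev's inequality then gives
\begin{equation}
\mathbb{P}\bigl(|Z|/d \ge \epsilon\bigr) \le \frac{1}{md\epsilon^2},
\end{equation}
which is tiny for realistic layer sizes. This is the sense in which the distribution of $Z$ is sharply concentrated around $0$.

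The step requiring the most care is justifying the second-moment identity, in particular that off-diagonal correlations vanish. Each such vanishing follows from a sign flip of a single row or a single column, both of which are measure-preserving. For right-multiplication by orthogonal $d\times d$ matrices, we use that the Haar measure on $V_d(\mathbb{R}^m)$ is also right-invariant, which holds because it is the pushforward of the Haar measure on $O(m)$ obtained by taking the first $d$ columns.

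If a sharper, exponential statement is wanted, we can apply Lévy concentration on $O(m)$: the map $U\mapsto \operatorname{tr}(A^\top U B)$ is $\sqrt{d}$-Lipschitz in Frobenius norm, which yields tails of order $\exp(-c\, m\epsilon^2 d)$ for $Z/d$. Chebyshev already suffices for the claim as stated.
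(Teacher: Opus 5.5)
Your Part (1) is essentially the paper's argument. The paper shows $\mathbb{E}[B]=0$ from invariance under reflections such as $\mathrm{diag}(-1,1,\dots,1)$ and then uses independence, which is your ``alternative'' line; your $-I_m$ symmetry conditional on $B$ is the same idea in one step.

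Part (2) is where you take a genuinely different route. The paper offers only a qualitative dimension-counting heuristic and derives no quantitative bound. It notes that $V_d(\mathbb{R}^m)$ has codimension $d(d+1)/2$, argues that the level set $\{Z=0\}$ on $V_d(\mathbb{R}^m)\times V_d(\mathbb{R}^m)$ therefore carries comparatively large volume, and concludes that the density of $Z$ peaks at $0$. You instead use left invariance to reduce to $Z\overset{d}{=}\sum_{k=1}^d A_{kk}$. You then compute $\mathbb{E}[Z^2]=d/m$ exactly, using row sign flips and $\mathbb{E}[A_{kk}^2]=1/m$. This gives the explicit bound $\mathbb{P}(|Z|/d\ge\epsilon)\le 1/(md\epsilon^2)$, which actually proves the statement and gives ``sharply concentrated'' a precise meaning.

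For the optional exponential bound, run L\'evy's inequality on the connected group $SO(m)$, which still acts transitively on $V_d(\mathbb{R}^m)$ when $d<m$. Haar measure on the disconnected $O(m)$ does not concentrate Lipschitz functions in general.

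Your computation also shows something the paper's heuristic obscures. $\mathrm{Var}(Z/d)=1/(md)$ is exactly the variance of the cosine between two independent uniform unit vectors in $\mathbb{R}^{md}$, and for $d=1$ the two laws coincide. So the concentration comes from the ambient dimension $md$, not from the orthogonality constraints. The paper's closing claim that Stiefel matrices are ``much more likely to be nearly orthogonal than two completely random unit vectors in $\mathbb{R}^{m\times d}$'' is therefore not supported by your calculation. The lemma as stated does not need that claim.
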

\begin{proof}
\textbf{Part 1: Proof of Zero Expectation.}

The inner product can be written as the trace of the matrix product: $Z = \text{Tr}(A^\top B)$. Due to the independence of $A$ and $B$, the expectation of the product is the product of expectations,
\begin{equation}
    \mathbb{E}[Z] = \mathbb{E}_A[\mathbb{E}_B[\text{Tr}(A^\top B) | A]] = \mathbb{E}_A[\text{Tr}(A^\top \mathbb{E}_B[B])].
\end{equation}
Let's compute $\mathbb{E}[B]$. The distribution of $B$ is the uniform (Haar) measure on $V_d(\mathbb{R}^m)$. This distribution is invariant under left-multiplication by any orthogonal matrix $Q \in O(m)$, where $O(m)$ is the group of $m \times m$ orthogonal matrices. This means that for any $Q \in O(m)$, the random matrix $QB$ has the same distribution as $B$. Therefore,
\begin{equation}
    \mathbb{E}[B] = \mathbb{E}[QB] = Q\mathbb{E}[B].
\end{equation}
This equality must hold for all $Q \in O(m)$. Let's consider a specific reflection matrix $Q$ that negates the first coordinate, \textit{e.g.}, $Q = \text{diag}(-1, 1, \dots, 1)$. If the first row of $\mathbb{E}[B]$ were a non-zero vector $\mathbf{r}$, then the first row of $Q\mathbb{E}[B]$ would be $-\mathbf{r}$. The equality $\mathbb{E}[B] = Q\mathbb{E}[B]$ would imply $\mathbf{r} = -\mathbf{r}$, which is only possible if $\mathbf{r} = \mathbf{0}$. This logic applies to every row by choosing appropriate reflection matrices. Therefore, the only matrix that satisfies this condition for all $Q \in O(m)$ is the zero matrix.
\begin{equation}
    \mathbb{E}[B] = \mathbf{0}.
\end{equation}
Substituting this back into the expectation for $Z$, we get,
\begin{equation}
    \mathbb{E}[Z] = \text{Tr}(\mathbb{E}[A^\top] \cdot \mathbf{0}) = 0.
\end{equation}
This proves the first part of the lemma.

\textbf{Part 2: Proof of Concentration around Zero.}

This is a geometric argument. The vectors $\text{vec}(A)$ and $\text{vec}(B)$ are not arbitrary vectors in $\mathbb{R}^{m\times d}$. They are constrained to lie on the submanifold $\text{vec}(V_d(\mathbb{R}^m))$. The condition $A^\top A = I_d$ imposes $\frac{d(d+1)}{2}$ independent constraints on the elements of $A$. This means the dimension of the Stiefel manifold $V_d(\mathbb{R}^m)$ is $\dim(V) = md - \frac{d(d+1)}{2}$.

The co-dimension of this submanifold within the ambient space $\mathbb{R}^{m \times d}$ is $\frac{d(d+1)}{2}$, which is positive for $d \ge 1$. The condition for orthogonality, $\langle\text{vec}(A), \text{vec}(B)\rangle = 0$, defines a hyperplane in the product space. The probability density of $Z$ at a value $z_0$ is proportional to the ``volume" of the surface defined by $\langle\text{vec}(A), \text{vec}(B)\rangle = z_0$ on the product manifold $V_d(\mathbb{R}^m) \times V_d(\mathbb{R}^m)$.

Intuitively, because the vectors are already living in a lower-dimensional space due to the internal orthogonality constraints, the additional constraint of being orthogonal to another such vector is ``easier" to satisfy. The intersection of the hyperplane $\langle\mathbf{a}, \mathbf{b}\rangle=0$ with the product manifold is larger than its intersection with the product of two spheres of the same dimension. This geometric fact leads to a higher probability density at $Z=0$, creating a sharp peak in the distribution. This indicates that two random matrices from Stiefel manifold are much more likely to be nearly orthogonal than two completely random unit vectors in $\mathbb{R}^{m\times d}$.
\end{proof}

\section{Comparative Analysis with TTA}\label{sec:tta}
\subsection{Theoretical Connection}\label{subsec:theoretical_tta}
In this section, we establish a theoretical connection between our proposed method (OrthoReg) and Tangent Task Arithmetic (TTA)~\cite{tta2023}. We demonstrate that both methods, despite their different implementations, derive their effectiveness from a shared underlying mechanism: promoting orthogonality between different task vectors (\textit{i.e.}, $\langle \tau_t, \tau_j \rangle \approx 0$ for $t \neq j$). This inter-task vector orthogonality is a key driver for achieving weight disentanglement.

As proven in \Cref{theorem:ortho_updates} (specifically, the Angle Control mechanism in Appendix~\ref{subsubsec:proof_angle}), our OrthoReg achieves this goal explicitly. By enforcing an internal orthogonal structure on each update matrix $\Delta W$, it statistically drives the resulting full task vectors towards orthogonality.

In contrast, TTA achieves this goal implicitly by leveraging the geometric properties of the pre-trained model's Neural Tangent Kernel (NTK). We now provide a detailed derivation to formalize this connection.

\begin{table*}[t]
\centering
\caption{
    Computational cost comparison on the Cars dataset using a ViT-L-14 model. The table highlights the efficiency of OrthoReg. The final column shows the Absolute Accuracy from the task addition benchmark (as seen in Table 1 of the main paper). While applying OrthoReg to Non-linear Fine-tuning (Non-lin.~FT) achieves performance that is superior to Tangent Task Arithmetic (TTA) and significantly better than the baseline Non-lin.~FT, this table further demonstrates its superior computational efficiency. As seen, TTA incurs substantial overhead in both training time and memory, whereas OrthoReg adds only a modest cost to the baseline. The colored cells visually emphasize the significant difference in computational cost between TTA and our proposed method.
}
\label{tab:computational_cost}
\begin{tabular}{lccccc}
\toprule
\textbf{Fine-tuning Method} & {\textbf{Total}} & {\textbf{Trainable}} & {\textbf{Training}} & {\textbf{Peak GPU}} & {\textbf{Abs. Acc.}} \\
& {\textbf{Params (M)}} & {\textbf{Params (M)}} & {\textbf{Time (Min)}} & {\textbf{Mem (MB)}} & {\textbf{(\%)}} \\
\midrule
\multicolumn{6}{l}{\textit{Full Fine-tuning Methods}} \\
\addlinespace[0.3em]
Non-lin.~FT~\cite{ta2023} (Baseline)     & 342.56 & 342.56 & 158.21 & 42589.22 & 84.07 \\
TTA~\cite{tta2023} (Linearized)           & \cellcolor{DarkRedHighlight}685.12 & 342.56 & \cellcolor{DarkRedHighlight}280.86 & \cellcolor{DarkRedHighlight}68031.34 & 86.19 \\
Non-lin.~FT + OrthoReg (ours)   & \cellcolor{LightRedHighlight}342.56 & 342.56 & \cellcolor{LightRedHighlight}177.04 & \cellcolor{LightRedHighlight}44500.27 & 88.23  \\
\midrule
\multicolumn{6}{l}{\textit{Parameter-Efficient Fine-tuning (Attention-Only)}} \\
\addlinespace[0.3em]
ATT-FT~\cite{linearatt2025}                     & 342.56 & 100.66  & 126.28 & 36591.06 & 87.81 \\
ATT-FT + OrthoReg (ours)        & 342.56 & 100.66  & 132.96 & 36976.50 & 90.41 \\
\bottomrule
\end{tabular}
\end{table*}

TTA operates by performing fine-tuning in the tangent space of the pre-trained model $\theta_0$. The model's output is approximated by its first-order Taylor expansion,
\begin{equation}
    f(x; \theta_0 + \tau) \approx f(x; \theta_0) + \tau^\top J(x),
\end{equation}
where $J(x) = \nabla_\theta f(x; \theta_0)$ is the Jacobian. The optimization is performed over the task vector $\tau$ directly. For a task $t$ with data $\{(x_i, y_i)\}_{i=1}^{N_t}$ from domain $\mathcal{D}_t$, the TTA objective can be formulated as a regularized empirical risk minimization problem, for instance, using a mean-squared error loss:
$$
\min_{\tau_t} \frac{1}{N_t} \sum_{i=1}^{N_t} \left\| (f(x_i; \theta_0) + \tau_t^\top J(x_i)) - y_i \right\|_2^2 + \lambda \|\tau_t\|_2^2.
$$
This is a linear ridge regression problem in the variable $\tau_t$. According to the Representer Theorem, the optimal solution $\tau_t^*$ must lie in the subspace spanned by the Jacobians of the training data points. Therefore, $\tau_t^*$ can be expressed as a linear combination of these Jacobians,
\begin{equation}
    \tau_t^* = \sum_{i=1}^{N_t} \alpha_i J(x_i),
\end{equation}
where $\{\alpha_i\}$ are scalar coefficients determined by the optimization.

\begin{figure*}
  \centering
   \includegraphics[width=0.98\linewidth]{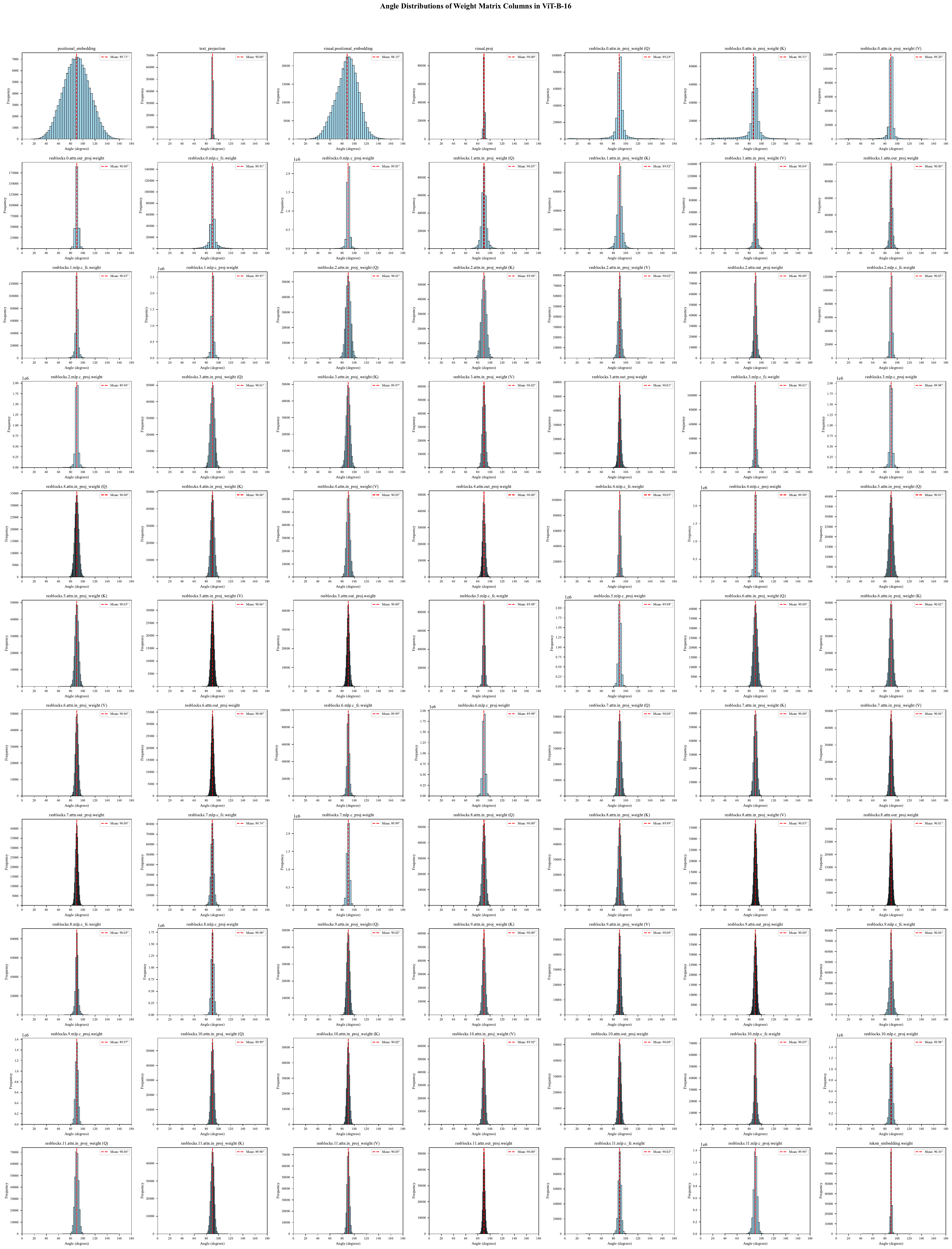}
   \caption{Angle distributions of weight matrix columns for all layers in ViT-B/16. Each subplot displays a histogram of the angles (in degrees) between all pairs of column vectors for a specific weight matrix. The red dashed line indicates the $90^{\circ}$ point of perfect orthogonality. The plots are ordered sequentially, starting with the embedding layers, followed by the 12 transformer blocks.}
   \label{fig:angle_all}
\end{figure*}

Now, consider the inner product between two task vectors, $\tau_t^*$ and $\tau_j^*$, obtained by applying TTA to two different tasks, $t$ and $j$,
\begin{equation}
    \langle \tau_t^*, \tau_j^* \rangle = \left\langle \sum_{i=1}^{N_t} \alpha_i J(x_i), \sum_{k=1}^{N_j} \beta_k J(x_k) \right\rangle,
\end{equation}
where $\{x_i\} \subset \mathcal{D}_t$ and $\{x_k\} \subset \mathcal{D}_j$. By linearity of the inner product, this becomes,
\begin{equation}
    \langle \tau_t^*, \tau_j^* \rangle = \sum_{i=1}^{N_t} \sum_{k=1}^{N_j} \alpha_i \beta_k \langle J(x_i), J(x_k) \rangle.
\end{equation}

The term $\langle J(x_i), J(x_k) \rangle$ is precisely the definition of the Neural Tangent Kernel (NTK) evaluated at the pair of inputs $(x_i, x_k)$,
\begin{equation}
    k_{\text{NTK}}(x_i, x_k) = J(x_i)^\top J(x_k).
\end{equation}
Therefore, the inner product of the task vectors is a weighted sum of NTK values between the data points of the two tasks,
\begin{equation}
    \langle \tau_t^*, \tau_j^* \rangle = \sum_{i=1}^{N_t} \sum_{k=1}^{N_j} \alpha_i \beta_k \, k_{\text{NTK}}(x_i, x_k).
\end{equation}

A central empirical finding of the TTA paper~\cite{tta2023} is that the NTK of large pre-trained models, such as CLIP, exhibits a strong localization property. This property means that the kernel function value is significant only when both inputs are from the same task domain and decays rapidly to near-zero when the inputs are from different, unrelated task domains. Formally, for distinct tasks $t \neq j$,
\begin{equation}
    k_{\text{NTK}}(x_i, x_k) \approx 0 \quad \text{for all } x_i \in \mathcal{D}_t \text{ and } x_k \in \mathcal{D}_j.
\end{equation}
Substituting this result into our expression for the inner product, we find that every term in the double summation is approximately zero. Consequently, the entire sum is approximately zero,
\begin{equation}
    \langle \tau_t^*, \tau_j^* \rangle \approx \sum_{i=1}^{N_t} \sum_{k=1}^{N_j} \alpha_i \beta_k \cdot 0 \approx 0.
\end{equation}

This derivation shows that TTA's effectiveness in promoting weight disentanglement stems from its ability to implicitly construct task vectors that are nearly orthogonal to each other. This orthogonality is not an explicit constraint but rather an emergent property arising from the localized structure of the pre-trained model's NTK.

Our analysis thus unifies our method and TTA under a common principle: inter-task vector orthogonality is a core mechanism for achieving weight disentanglement. Our OrthoReg method provides a more direct, explicit to enforce this geometric property, which explains its ability to further enhance the performance of TTA and other task arithmetic methods, as demonstrated in our experiments.

\begin{figure*}[htbp]
  \centering
  \begin{subfigure}{\linewidth}
    \centering
    \includegraphics[width=\linewidth]{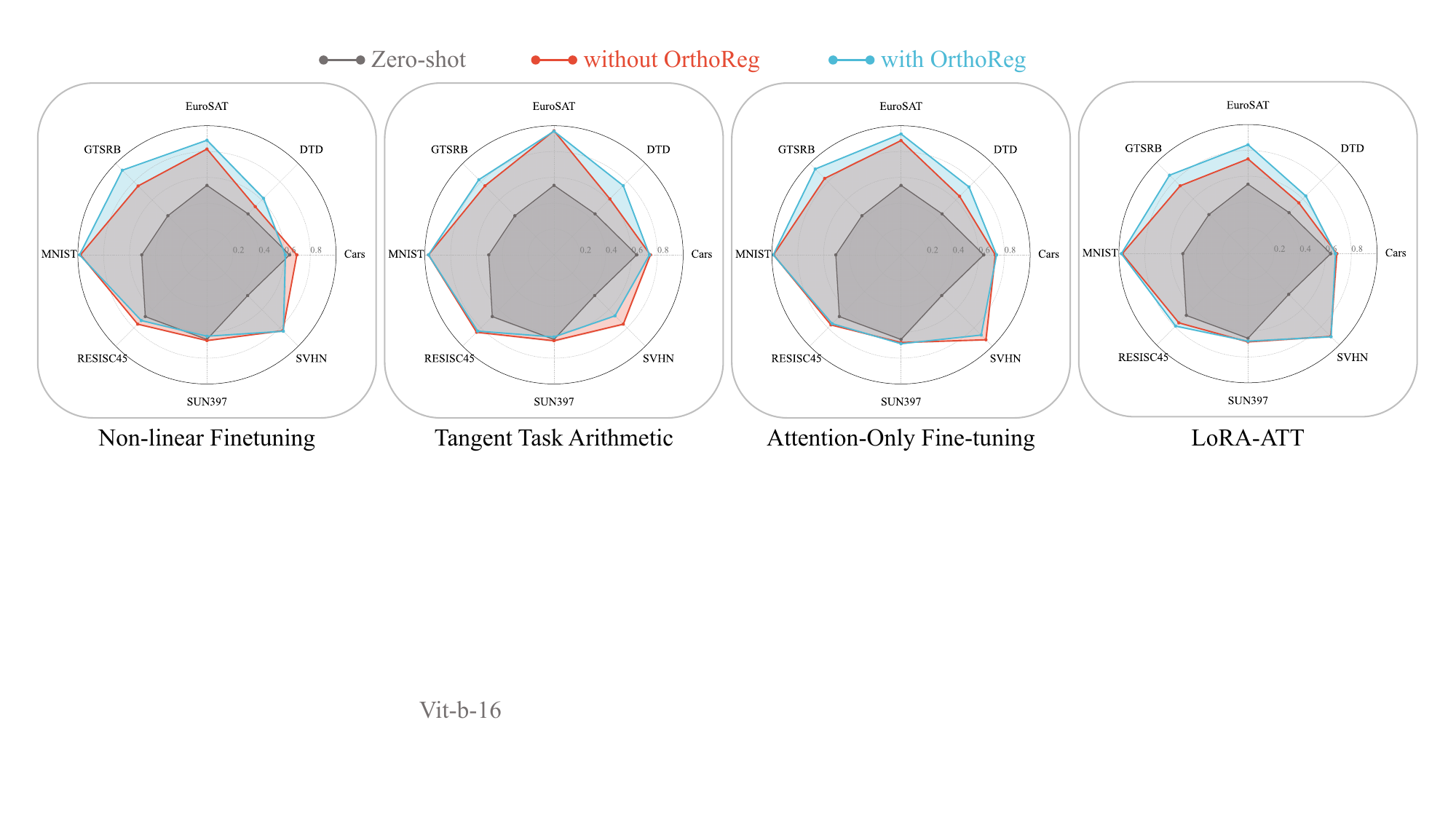}
    \caption{ViT-B-16}
    \label{subfig:radar-16}
  \end{subfigure}
  \vspace{1em} %
  \begin{subfigure}{\linewidth}
    \centering
    \includegraphics[width=\linewidth]{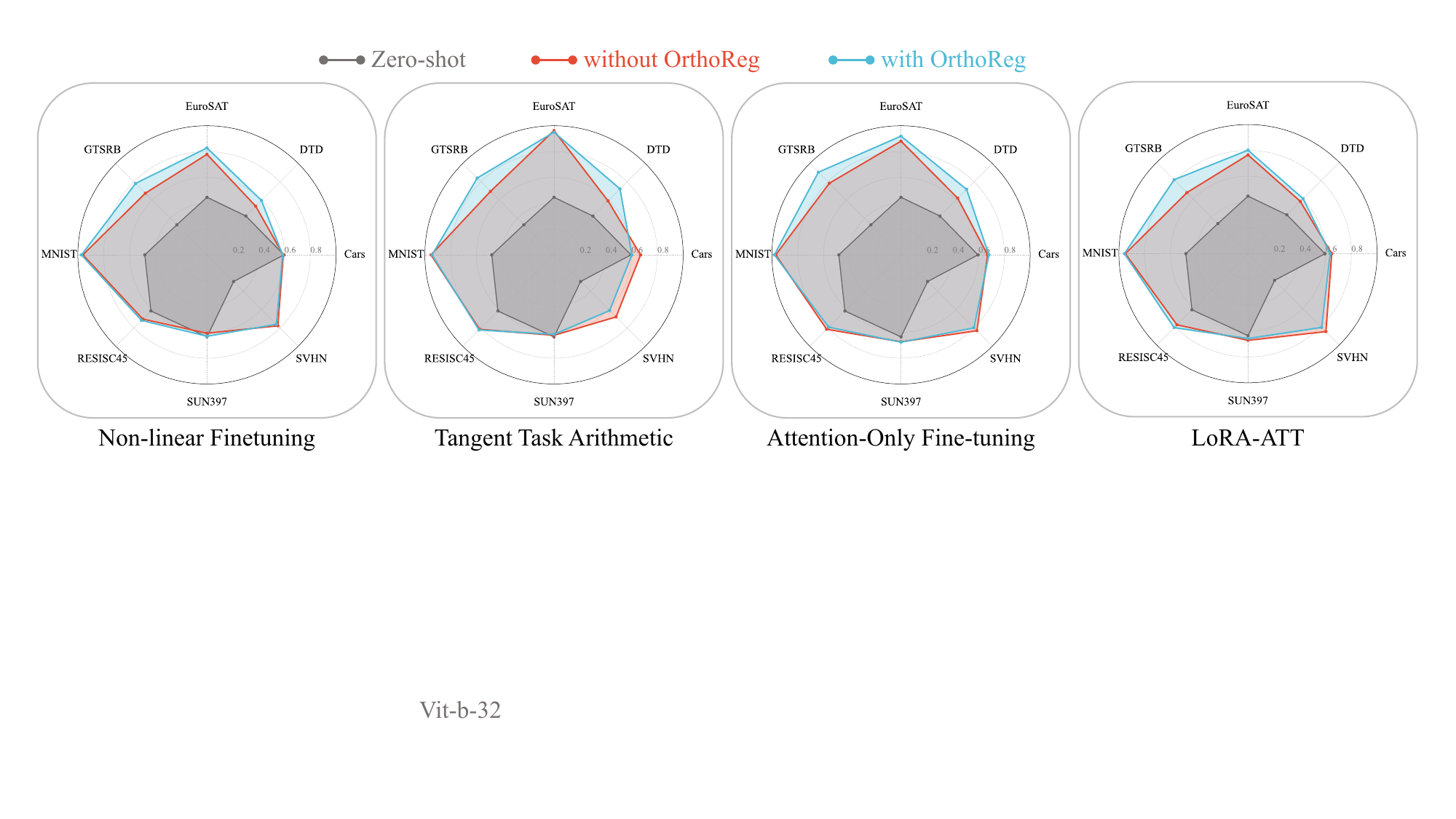}
    \caption{ViT-B-32}
    \label{subfig:radar-32}
  \end{subfigure}
  \vspace{1em} %
  \begin{subfigure}{\linewidth}
    \centering
    \includegraphics[width=\linewidth]{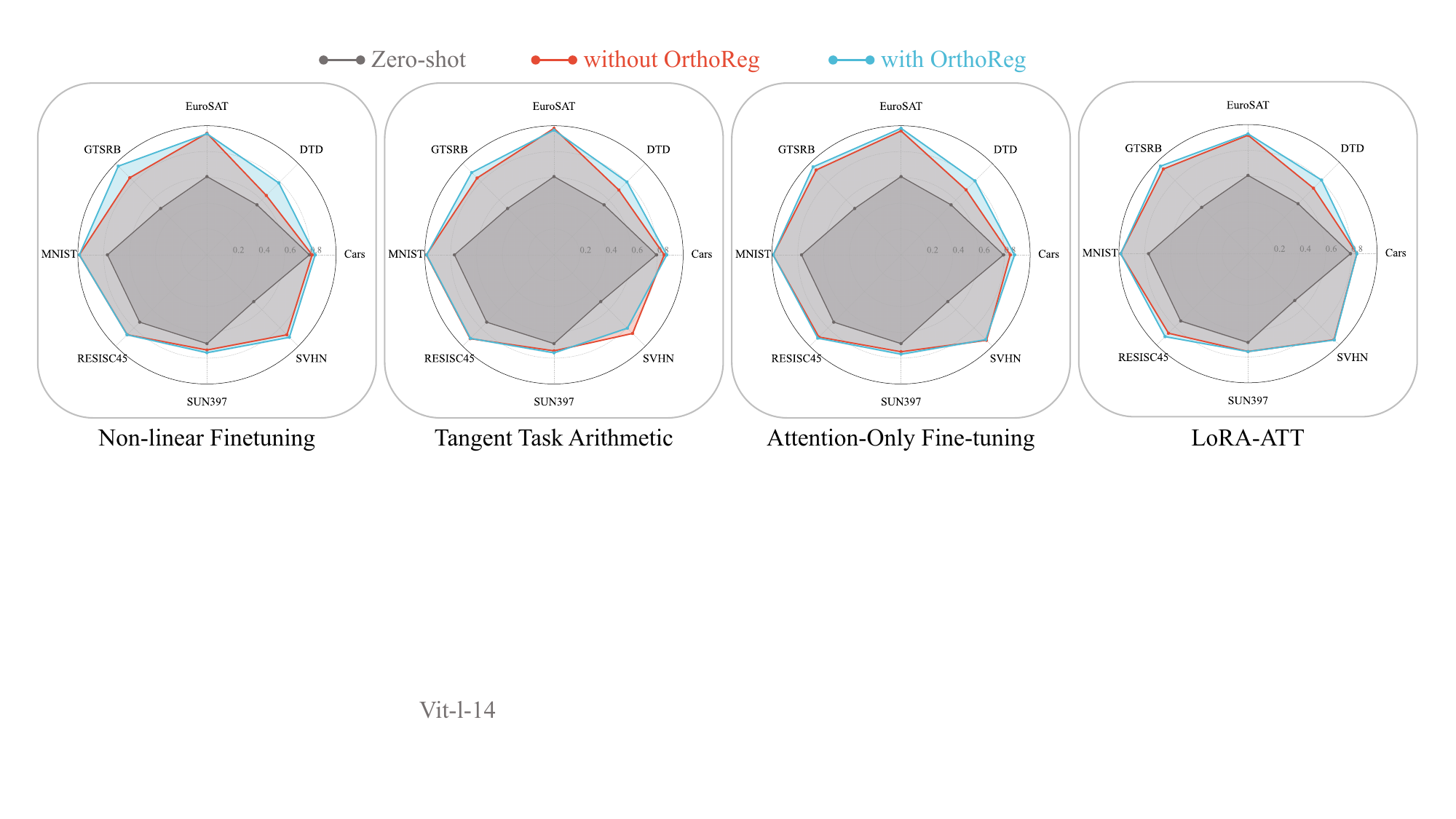}
    \caption{ViT-L-14}
    \label{subfig:radar-14}
  \end{subfigure}
  \caption{The accuracy of merged models across the eight benchmark tasks for different ViT architectures. Each subplot shows the performance for a specific baseline method: zero-shot (gray), the baseline's merged model (red), and the baseline enhanced with our orthogonal regularization (blue). The rows correspond to models: (a) ViT-B-16, (b) ViT-B-32, and (c) ViT-L-14.}
  \label{fig:radar-all-architectures}
\end{figure*}

\subsection{Experimental Performance Comparison and Analysis}
As established in Section~4.4 and Appendix~\ref{subsec:theoretical_tta}, both our OrthoReg method and Tangent Task Arithmetic (TTA)~\cite{tta2023} succeed by promoting inter-task vector orthogonality. However, we posited that OrthoReg offers a more direct, efficient, and scalable approach by avoiding the costly Jacobian computations inherent to TTA. This section provides an empirical analysis to validate this claim by comparing the computational costs, specifically training time and peak GPU memory usage of TTA against standard fine-tuning methods enhanced with our OrthoReg regularizer.

\noindent \textbf{Experimental Setup.} We conduct a controlled experiment on the Cars dataset~\cite{Krause2013CollectingAL} using the ViT-L-14 model architecture. We measure the wall-clock training time and peak GPU memory consumption for a single fine-tuning run.

\begin{table*}[t]
\small
\centering
\caption{The minimum average Target Accuracy (Tar.Acc.) achievable while maintaining at least 90\% of the zero-shot accuracy on the ImageNet control task (Con.Acc.). Our proposed orthogonal regularization (+OrthoReg) shows a consistent and significant improvement in forgetting the target task. An asterisk (*) denotes the best (lowest) target accuracy for each model architecture.}
\label{tab:clip_task_negation}
\begin{tabular}{ccccccc}
\toprule
\multirow{2}{*}{\textbf{Method}} &
\multicolumn{2}{c}{\textbf{ViT-B-32, 8 tasks}} &
\multicolumn{2}{c}{\textbf{ViT-B-16, 8 tasks}} &
\multicolumn{2}{c}{\textbf{ViT-L-14, 8 tasks}} \\
\cmidrule(lr){2-3} \cmidrule(lr){4-5} \cmidrule(lr){6-7}
& \textbf{Tar.Acc.}\textbf{(\textdownarrow)} & \textbf{Con.Acc.} \textbf{(\textuparrow)}
& \textbf{Tar.Acc.}\textbf{(\textdownarrow)} & \textbf{Con.Acc.} \textbf{(\textuparrow)}
& \textbf{Tar.Acc.}\textbf{(\textdownarrow)} & \textbf{Con.Acc.} \textbf{(\textuparrow)} \\
\midrule
zero-shot & 47.74 & 66.70 & 54.22 & 68.34 & 64.54 & 77.44 \\
\midrule
Non-linear Finetuning~\cite{ta2023}  & 17.34 & 60.80 & 14.92 & 63.63 & 13.51 & 72.51 \\
Non-lin. FT\textbf{+OrthoReg} (ours) & \textbf{14.14} & 60.84 & \textbf{13.78} & 65.69 & \textbf{12.69} & 74.17 \\
\rowcolor{rowblue}
\rowcolor{rowblue}
\quad  $\Delta$ & {\color{darkblue}-3.20} & {\color{darkblue}+0.04} & {\color{darkblue}-1.14} & {\color{darkblue}+2.06} & {\color{darkblue}-0.82} & {\color{darkblue}+1.66} \\
\midrule
Tangent Task Arithmetic~\cite{tta2023}  & 7.36 & 62.08 & 6.68 & 65.49 & 5.07 & 72.51 \\
TTA\textbf{+OrthoReg} (ours) & {\textbf{6.66}} & 62.19 & {\textbf{4.77}} & 65.13 & {\textbf{3.83}} & 72.87 \\
\rowcolor{rowblue}
\quad  $\Delta$ & {\color{darkblue}-0.70} & {\color{darkblue}+0.11} & {\color{darkblue}-1.91} & {\color{darkblue}-0.36} & {\color{darkblue}-1.24} & {\color{darkblue}+0.36} \\
\midrule
Attention-Only Fine-tuning~\cite{linearatt2025}  & 19.11 & 64.82 & 19.01 & 67.67 & 24.85 & 76.42 \\
ATT-FT\textbf{+OrthoReg} (ours) & \textbf{10.75} & 62.18 & \textbf{10.63} & 64.10 & \textbf{11.47} & 73.17 \\
\rowcolor{rowblue}
\quad  $\Delta$ & {\color{darkblue}-8.36} & {\color{darkblue}-2.64} & {\color{darkblue}-8.38} & {\color{darkblue}-3.57} & {\color{darkblue}-13.38} & {\color{darkblue}-3.25} \\
\midrule
LoRA-ATT & 16.85 & 63.23 & 19.44 & 67.28 & 21.23 & 75.41 \\
LoRA-ATT\textbf{+OrthoReg} (ours) & \textbf{14.59} & 61.68 & \textbf{17.25} & 67.08 & \textbf{10.10} & 72.19 \\
\rowcolor{rowblue}
\quad  $\Delta$ & {\color{darkblue}-2.26} & {\color{darkblue}-1.55} & {\color{darkblue}-2.19} & {\color{darkblue}-0.20} & {\color{darkblue}-11.13} & {\color{darkblue}-3.22} \\
\bottomrule
\end{tabular}\label{tab:task_negation2}
\end{table*}

\begin{table*}[t]
\small
\centering
\caption{The minimum average Target Accuracy (Tar.Acc.) achievable while maintaining at least 80\% of the zero-shot accuracy on the ImageNet control task (Con.Acc.). Our proposed orthogonal regularization (+OrthoReg) shows a consistent and significant improvement in forgetting the target task. An asterisk (*) denotes the best (lowest) target accuracy for each model architecture.}
\label{tab:clip_task_negation}
\begin{tabular}{ccccccc}
\toprule
\multirow{2}{*}{\textbf{Method}} &
\multicolumn{2}{c}{\textbf{ViT-B-32, 8 tasks}} &
\multicolumn{2}{c}{\textbf{ViT-B-16, 8 tasks}} &
\multicolumn{2}{c}{\textbf{ViT-L-14, 8 tasks}} \\
\cmidrule(lr){2-3} \cmidrule(lr){4-5} \cmidrule(lr){6-7}
& \textbf{Tar.Acc.}\textbf{(\textdownarrow)} & \textbf{Con.Acc.} \textbf{(\textuparrow)}
& \textbf{Tar.Acc.}\textbf{(\textdownarrow)} & \textbf{Con.Acc.} \textbf{(\textuparrow)}
& \textbf{Tar.Acc.}\textbf{(\textdownarrow)} & \textbf{Con.Acc.} \textbf{(\textuparrow)} \\
\midrule
zero-shot & 47.74 & 66.70 & 54.22 & 68.34 & 64.54 & 77.44 \\
\midrule
Non-linear Finetuning~\cite{ta2023}  & 11.97 & 54.43 & 11.65 & 59.20 & 12.67 & 70.59 \\
Non-lin. FT\textbf{+OrthoReg} (ours) & \textbf{10.24} & 57.06 & \textbf{10.40} & 61.39 & \textbf{9.30} & 72.33 \\
\rowcolor{rowblue}
\quad  $\Delta$ & {\color{darkblue}-1.73} & {\color{darkblue}+2.63} & {\color{darkblue}-1.25} & {\color{darkblue}+2.19} & {\color{darkblue}-3.37} & {\color{darkblue}+1.74} \\
\midrule
Tangent Task Arithmetic~\cite{tta2023}  & 5.70 & 60.76 & 5.61 & 64.53 & 2.84 & 70.81 \\
TTA\textbf{+OrthoReg} (ours) & {\textbf{3.26}} & 59.26 & {\textbf{2.10}} & 62.61 & {\textbf{1.86}} & 70.23 \\
\rowcolor{rowblue}
\quad  $\Delta$ & {\color{darkblue}-2.44} & {\color{darkblue}-1.50} & {\color{darkblue}-3.51} & {\color{darkblue}-1.92} & {\color{darkblue}-0.98} & {\color{darkblue}-0.58} \\
\midrule
Attention-Only Fine-tuning~\cite{linearatt2025}  & 19.11 & 64.82 & 19.01 & 67.67 & 24.85 & 76.42 \\
ATT-FT\textbf{+OrthoReg} (ours) & \textbf{7.23} & 58.38 & \textbf{8.08} & 61.21 & \textbf{8.12} & 68.46 \\
\rowcolor{rowblue}
\quad  $\Delta$ & {\color{darkblue}-11.88} & {\color{darkblue}-6.44} & {\color{darkblue}-10.93} & {\color{darkblue}-6.46} & {\color{darkblue}-16.73} & {\color{darkblue}-7.96} \\
\midrule
LoRA-ATT & 15.58 & 62.4 & 15.83 & 62.40 & 21.23 & 75.41 \\
LoRA-ATT\textbf{+OrthoReg} (ours) & \textbf{11.00} & 58.47 & \textbf{9.19} & 60.41 & \textbf{7.68} & 69.83 \\
\rowcolor{rowblue}
\quad  $\Delta$ & {\color{darkblue}-4.58} & {\color{darkblue}-3.93} & {\color{darkblue}-6.64} & {\color{darkblue}-1.99} & {\color{darkblue}-13.55} & {\color{darkblue}-5.58} \\
\bottomrule
\end{tabular}\label{tab:task_negation3}
\end{table*}

\noindent \textbf{Results and Analysis.} The results, summarized in \Cref{tab:computational_cost} are organized to highlight the efficiency trade-offs between different full-parameter fine-tuning strategies and their parameter-efficient counterparts.

The primary comparison focuses on the full fine-tuning methods. Standard Non-linear Fine-tuning (Non-lin.~FT) serves as our baseline, completing training in 158.21 minutes and consuming 42589.22 MB of peak GPU memory. In stark contrast, TTA~\cite{tta2023}, which operates on a linearized model, is substantially more resource-intensive. It requires 280.86 minutes (a 77.5\% increase in time) and 68031.34 MB of memory (a 59.7\% increase), confirming that its reliance on Jacobian computations imposes a significant computational burden.

Our proposed OrthoReg, when applied to Non-lin.~FT, introduces only a moderate overhead for its regularization calculations, resulting in a total cost of 177.04 minutes and 44500.27 MB of memory during the training phase. Crucially, this is significantly more efficient than TTA in both time and memory, while achieving superior or comparable task-addition performance as shown in the main text and the last column of \Cref{tab:computational_cost} (\textit{e.g.}, for ViT-L-14, Non-lin. FT + OrthoReg achieves 88.23\% Abs.Acc. vs. TTA's 86.19\%). This demonstrates that OrthoReg provides a more efficient path to enforcing the properties that benefit task arithmetic.

This efficiency advantage is also evident in the parameter-efficient setting. As shown in the lower section of \Cref{tab:computational_cost}, applying OrthoReg to ATT-FT baseline results in only a minimal increase in computational cost. The training time rises modestly from 126.28 to 132.96 minutes, and peak memory usage increases marginally from 36591.06 MB to 36976.50 MB. However, the performance increases considerably from 87.81\% to 90.41\%. This demonstrates that the substantial performance improvements gained from OrthoReg come at a very low computational price, further highlighting its practicality.

In conclusion, these experiments provide strong empirical evidence that OrthoReg achieves the goal of promoting task vector orthogonality more efficiently than TTA. This efficiency, combined with the superior performance demonstrated in our main results, establishes OrthoReg as a more effective and accessible tool for reliable task arithmetic.

\begin{figure*}[htbp] 
    \centering 
    \begin{subfigure}{0.24\textwidth}
        \centering
        \includegraphics[width=\linewidth]{img/hotmap_pdf/VIT-B-16/standard.pdf}
        \caption{Non-lin. FT}
    \end{subfigure}
    \hfill 
    \begin{subfigure}{0.24\textwidth}
        \centering
        \includegraphics[width=\linewidth]{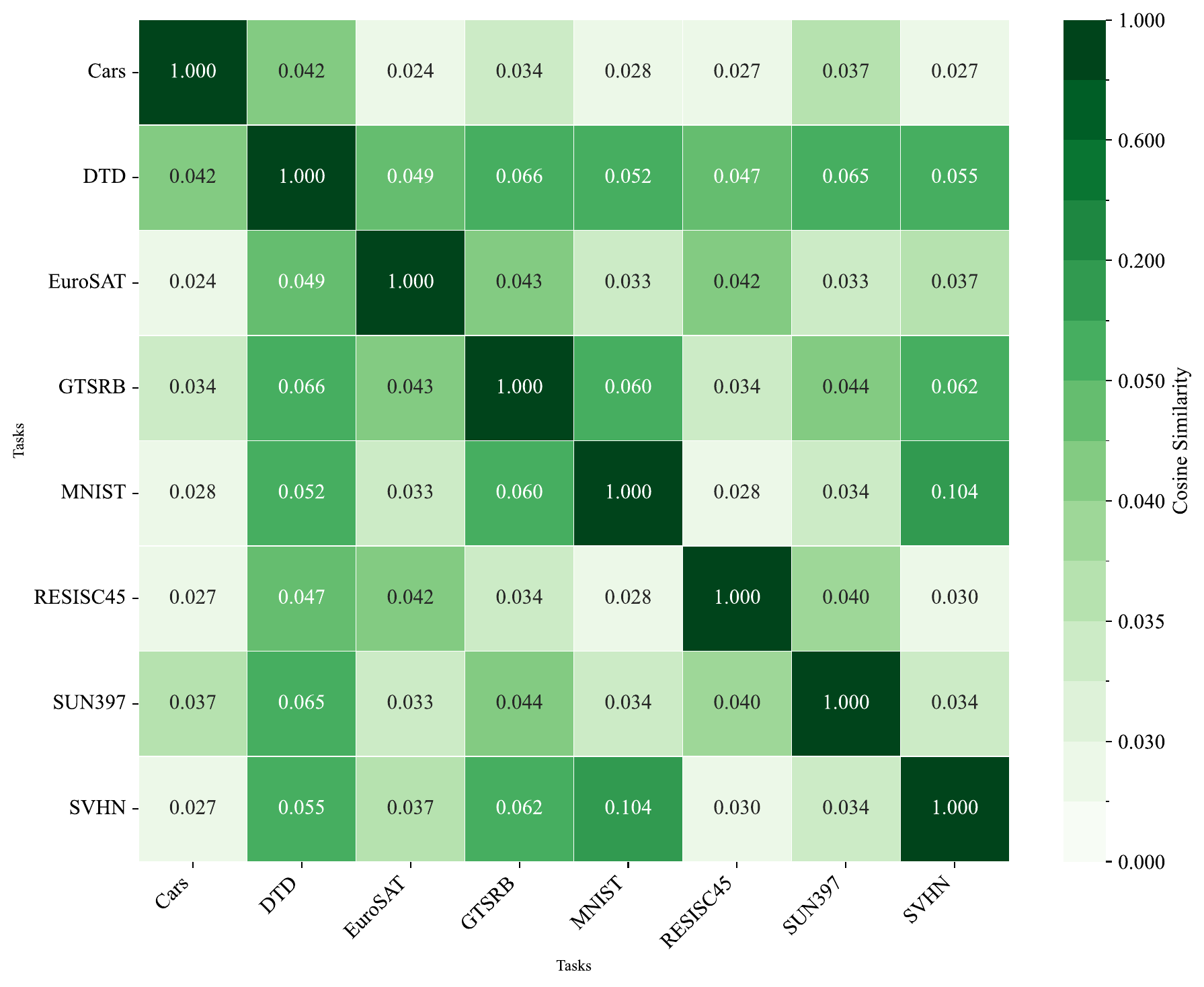}
        \caption{TTA}
    \end{subfigure}
    \hfill
    \begin{subfigure}{0.24\textwidth}
        \centering
        \includegraphics[width=\linewidth]{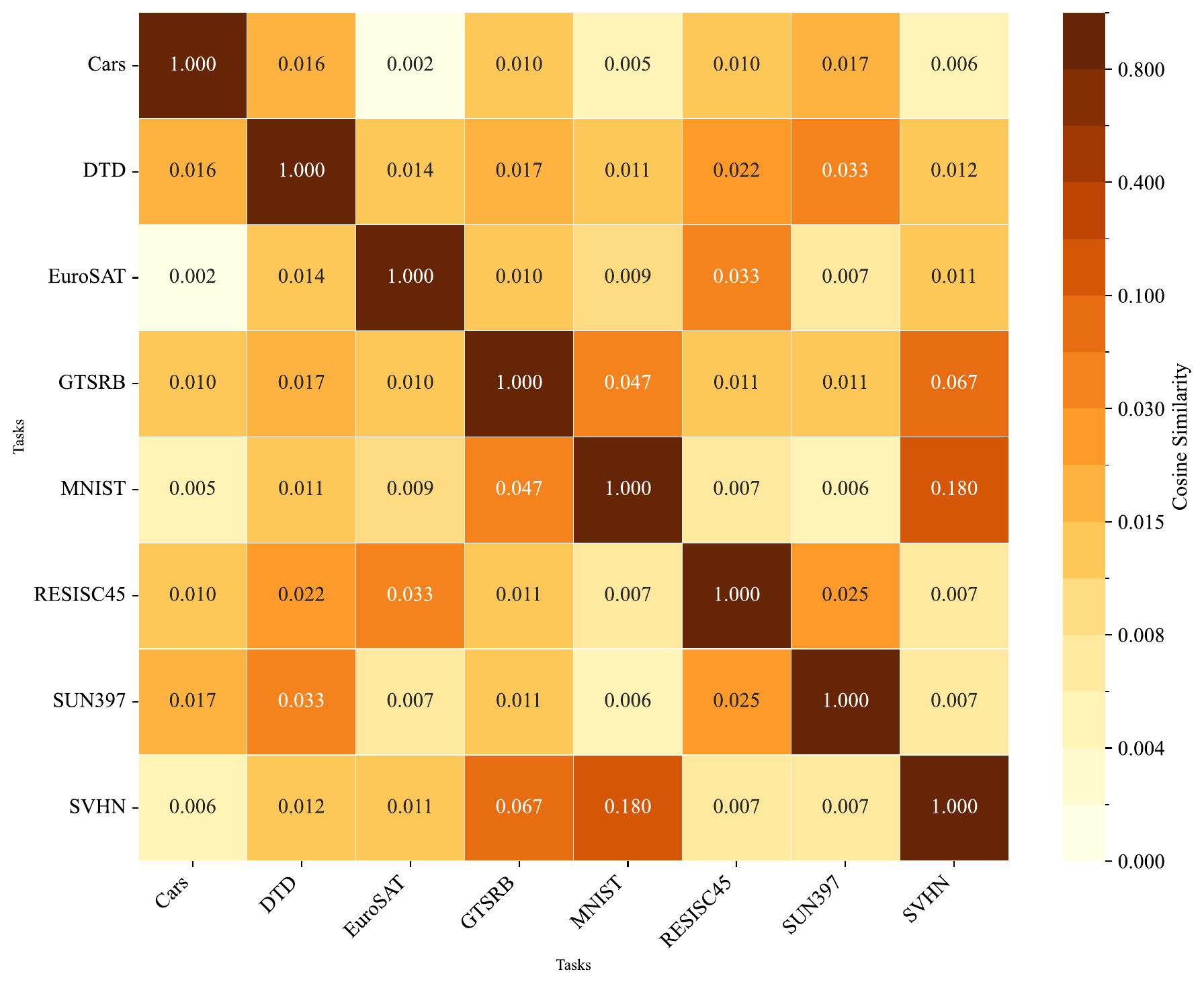} 
        \caption{ATT-FT}
    \end{subfigure}
    \hfill
    \begin{subfigure}{0.24\textwidth}
        \centering
        \includegraphics[width=\linewidth]{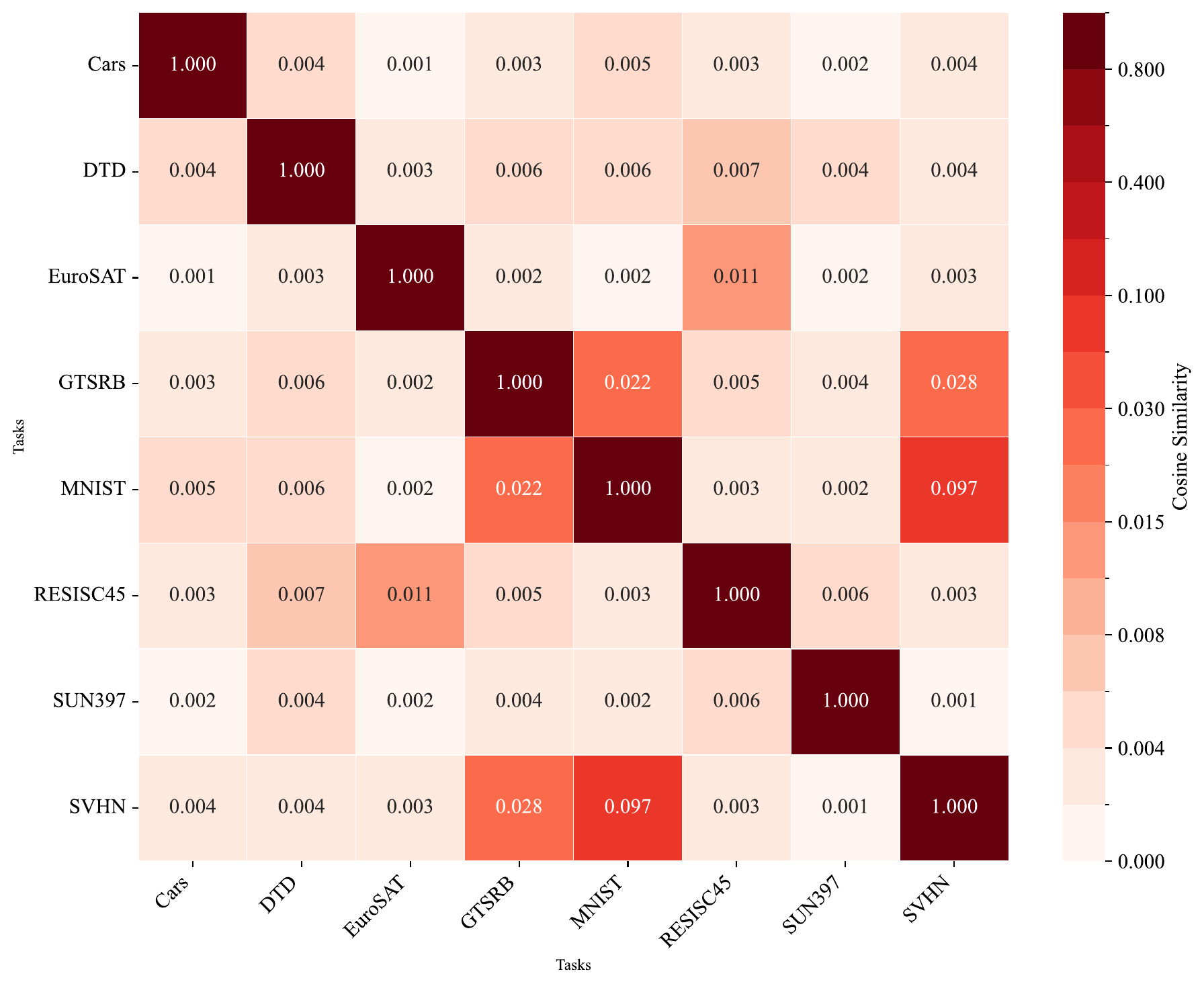} 
        \caption{LoRA-ATT}
    \end{subfigure}

    \vspace{0.3cm} 

    \begin{subfigure}{0.24\textwidth}
        \centering
        \includegraphics[width=\linewidth]{img/hotmap_pdf/VIT-B-16/standard_ortho.pdf} 
        \caption{Non-lin. FT+OrthoReg}
    \end{subfigure}
    \hfill
    \begin{subfigure}{0.24\textwidth}
        \centering
        \includegraphics[width=\linewidth]{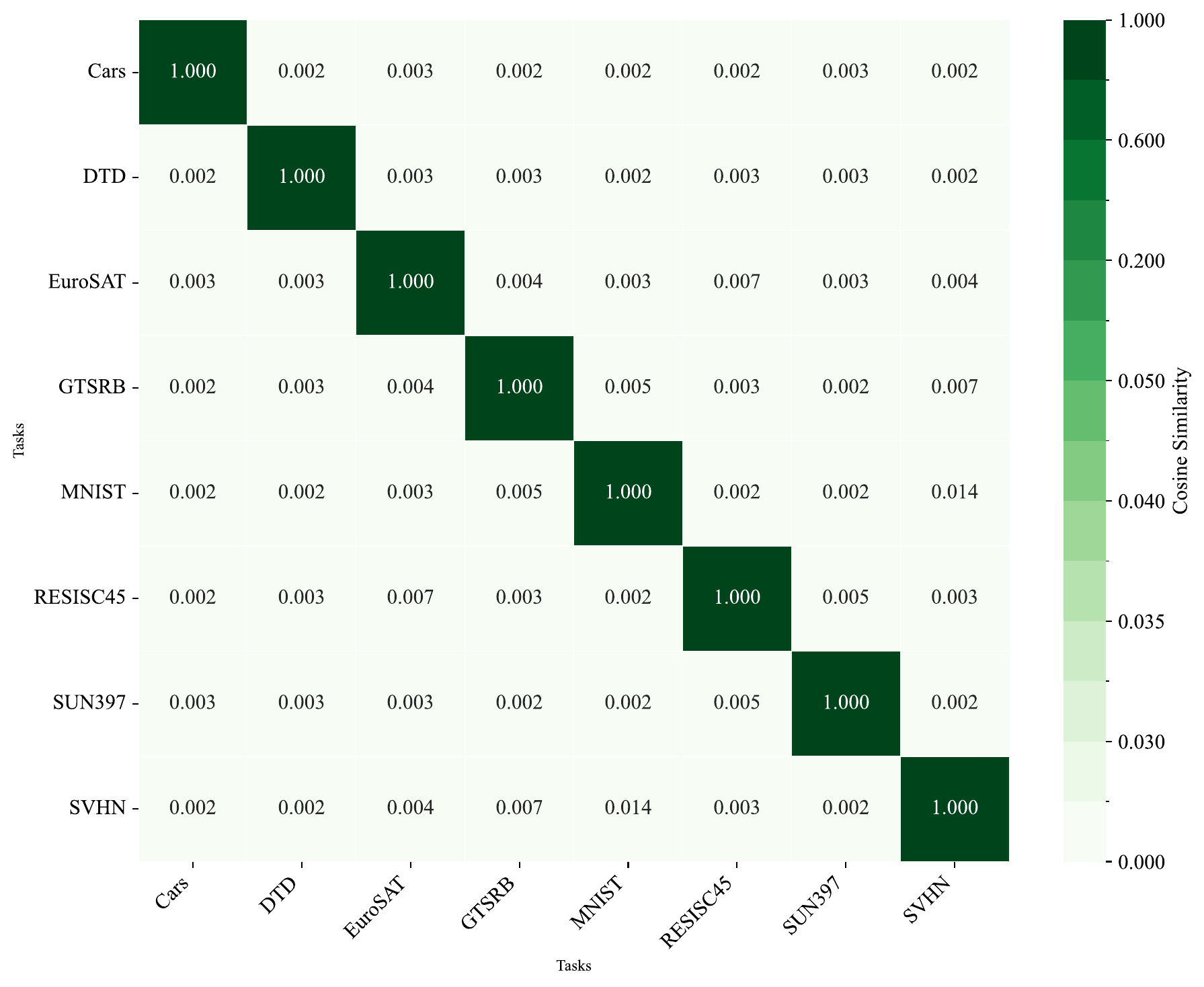} %
        \caption{TTA+OrthoReg}
    \end{subfigure}
    \hfill
    \begin{subfigure}{0.24\textwidth}
        \centering
        \includegraphics[width=\linewidth]{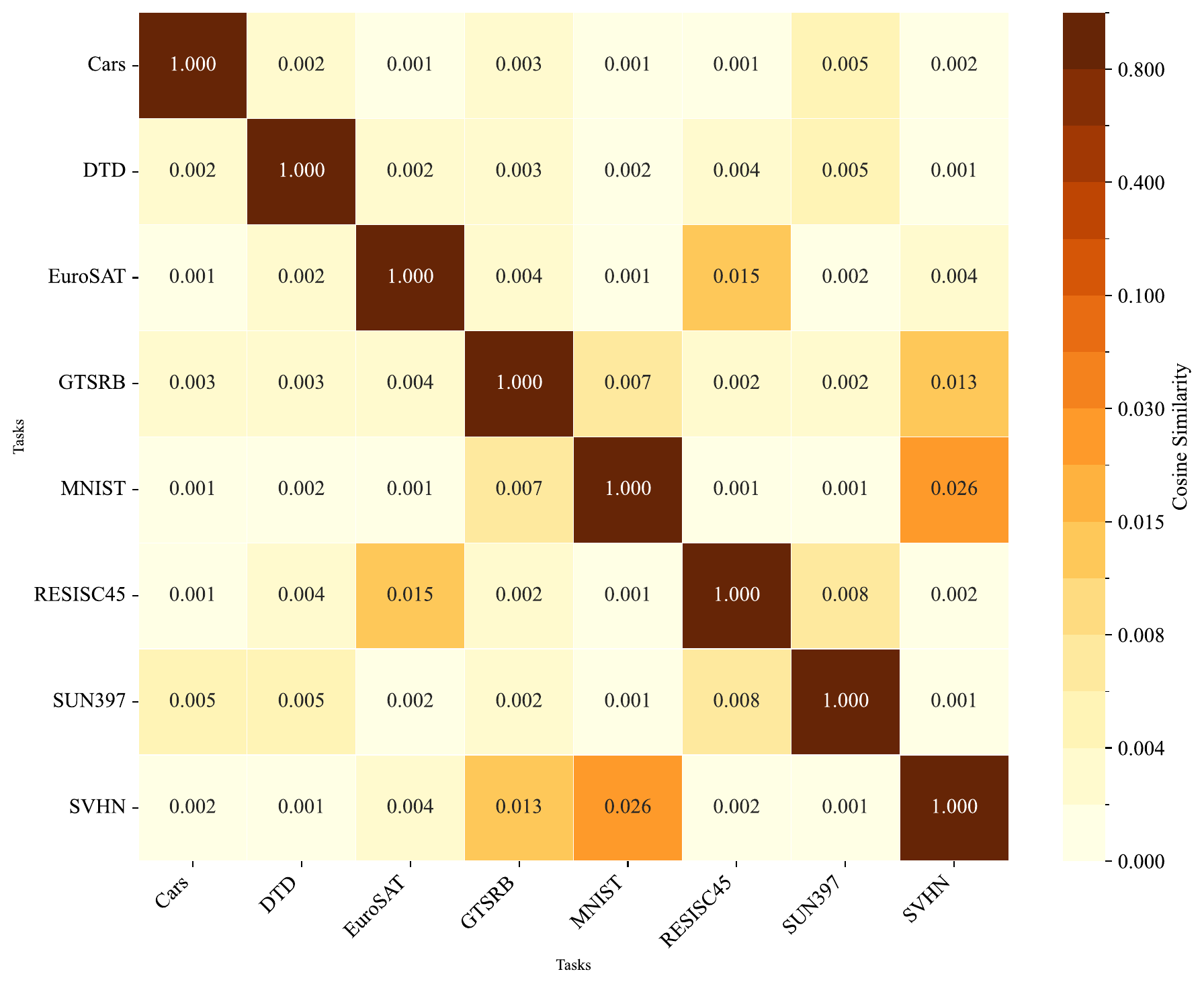}
        \caption{ATT-FT+OrthoReg}
    \end{subfigure}
    \hfill
    \begin{subfigure}{0.24\textwidth}
        \centering
        \includegraphics[width=\linewidth]{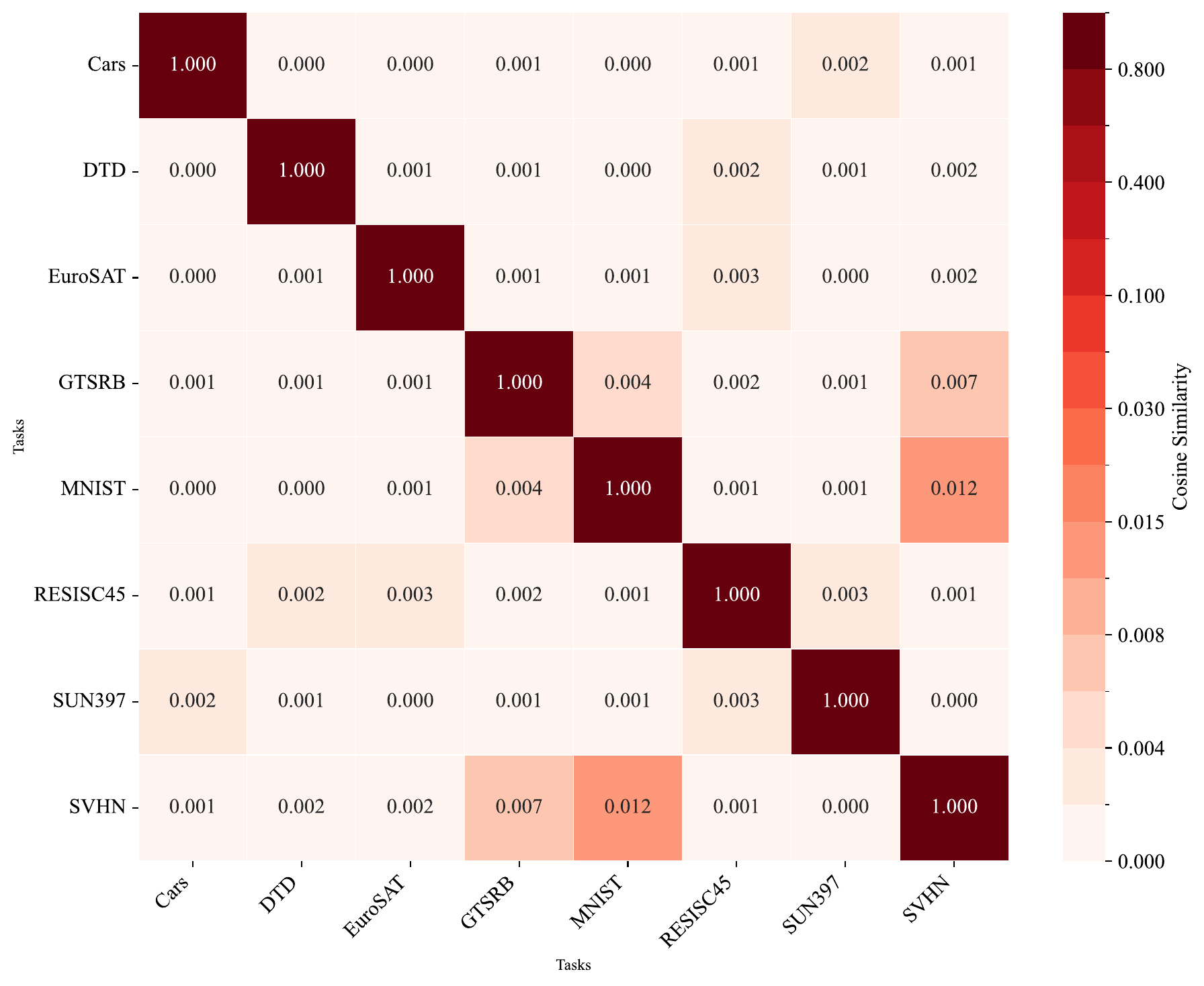}
        \caption{LoRA-ATT+OrthoReg}
    \end{subfigure}

    \caption{
        Pairwise cosine similarity heatmaps of task vectors for ViT-B-16 across different methods. The top row shows the baseline methods, where significant off-diagonal correlation (brighter colors) is visible.
        The bottom row shows the same methods with our OrthoReg regularizer.
        The consistently darker off-diagonal values in the bottom row provide strong empirical validation that OrthoReg successfully produces more orthogonal task vectors, mitigating a key source of task interference.
    }
    \label{fig:heatmaps_vitb16}
\end{figure*}

\begin{figure*}[htbp] 
    \centering 
    \begin{subfigure}{0.24\textwidth}
        \centering
        \includegraphics[width=\linewidth]{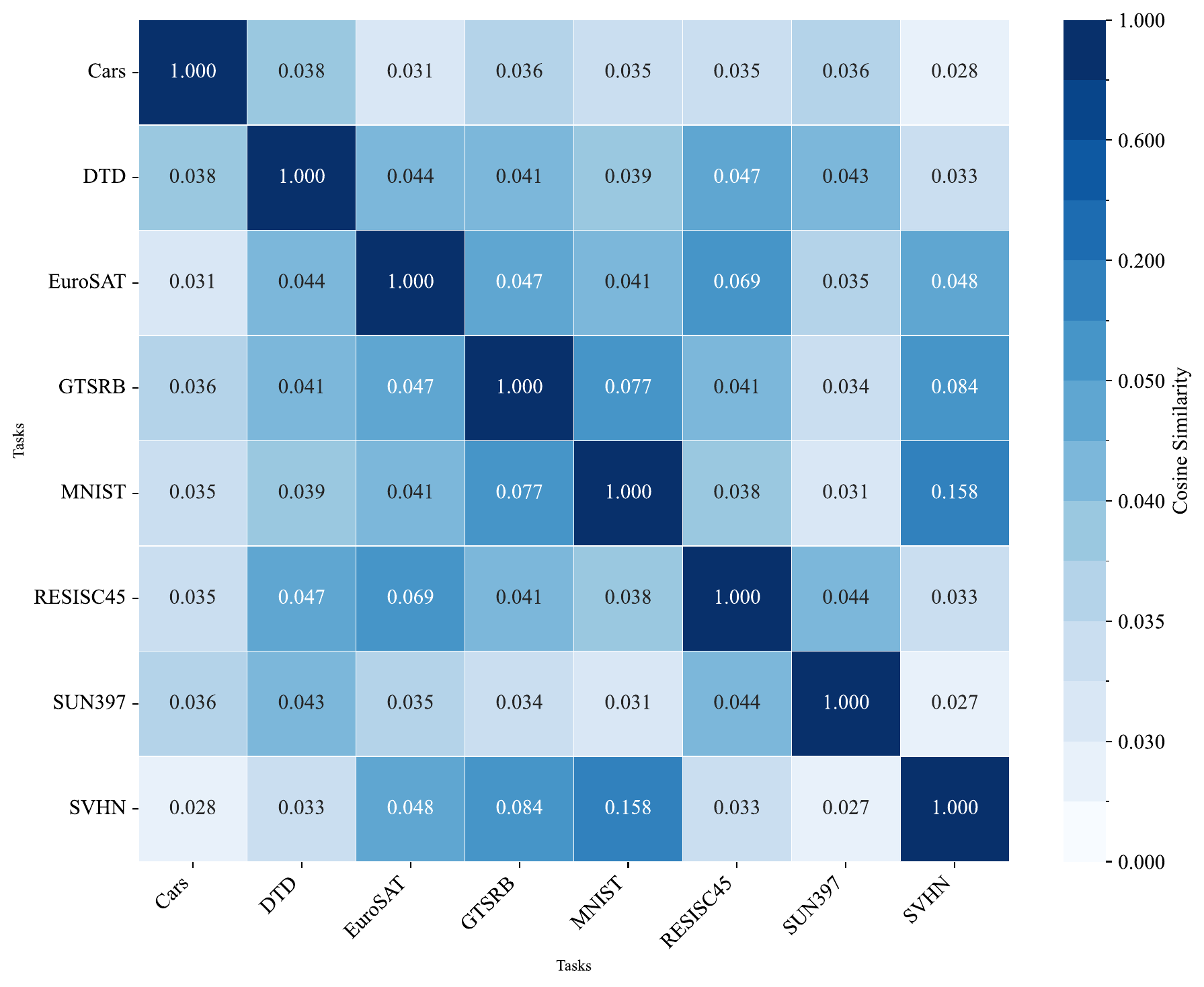}
        \caption{Non-lin. FT}
    \end{subfigure}
    \hfill 
    \begin{subfigure}{0.24\textwidth}
        \centering
        \includegraphics[width=\linewidth]{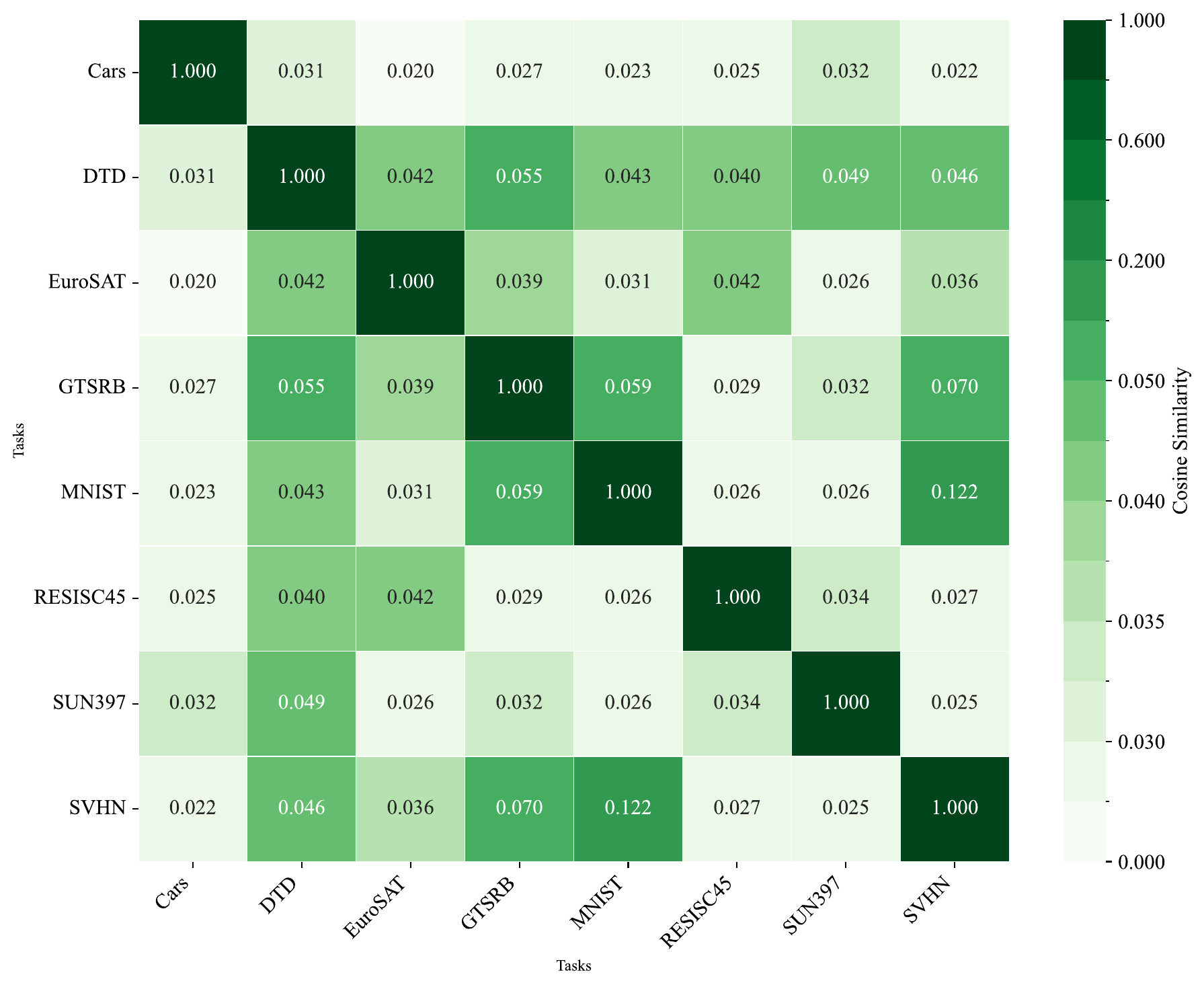}
        \caption{TTA}
    \end{subfigure}
    \hfill
    \begin{subfigure}{0.24\textwidth}
        \centering
        \includegraphics[width=\linewidth]{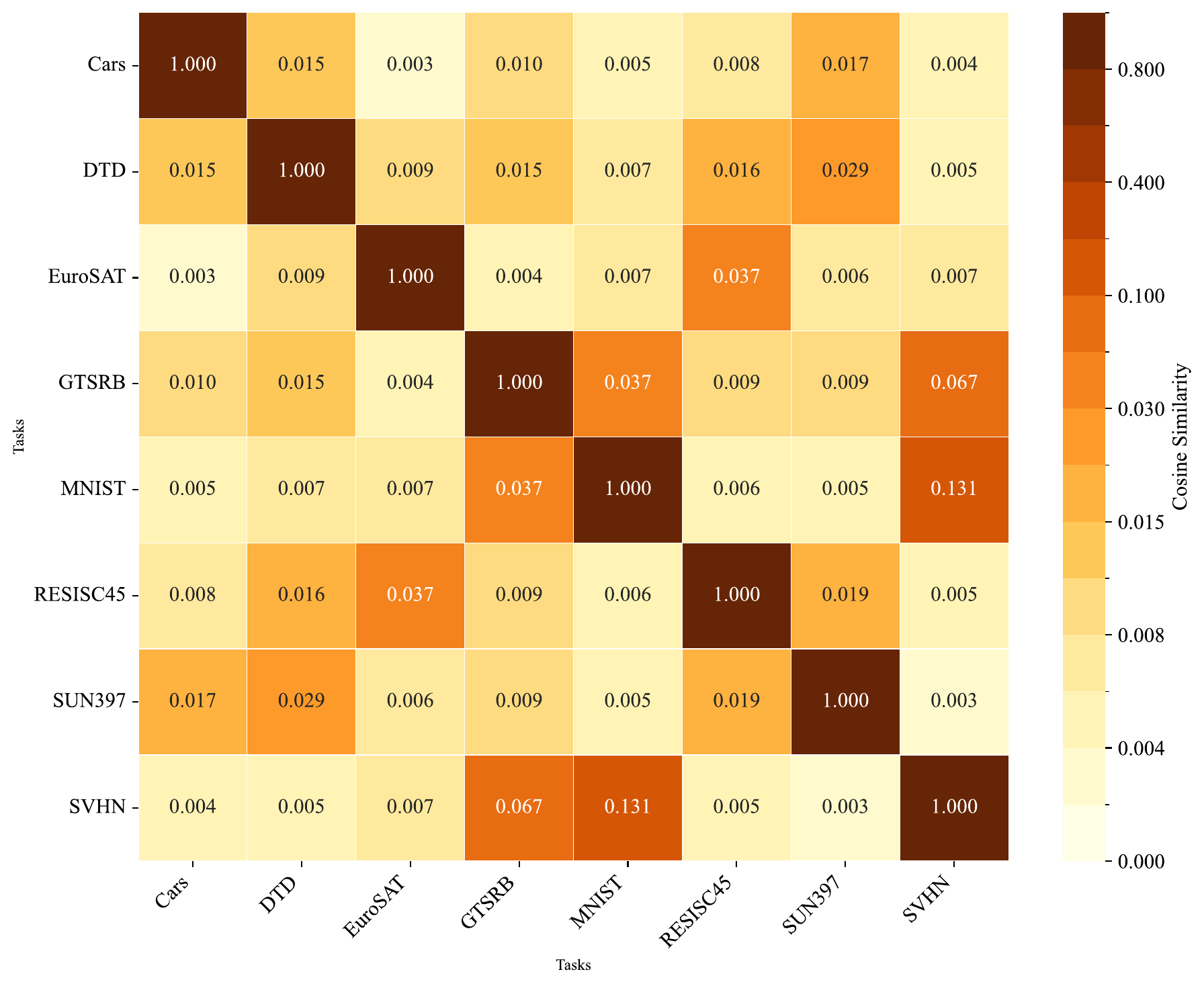} 
        \caption{ATT-FT}
    \end{subfigure}
    \hfill
    \begin{subfigure}{0.24\textwidth}
        \centering
        \includegraphics[width=\linewidth]{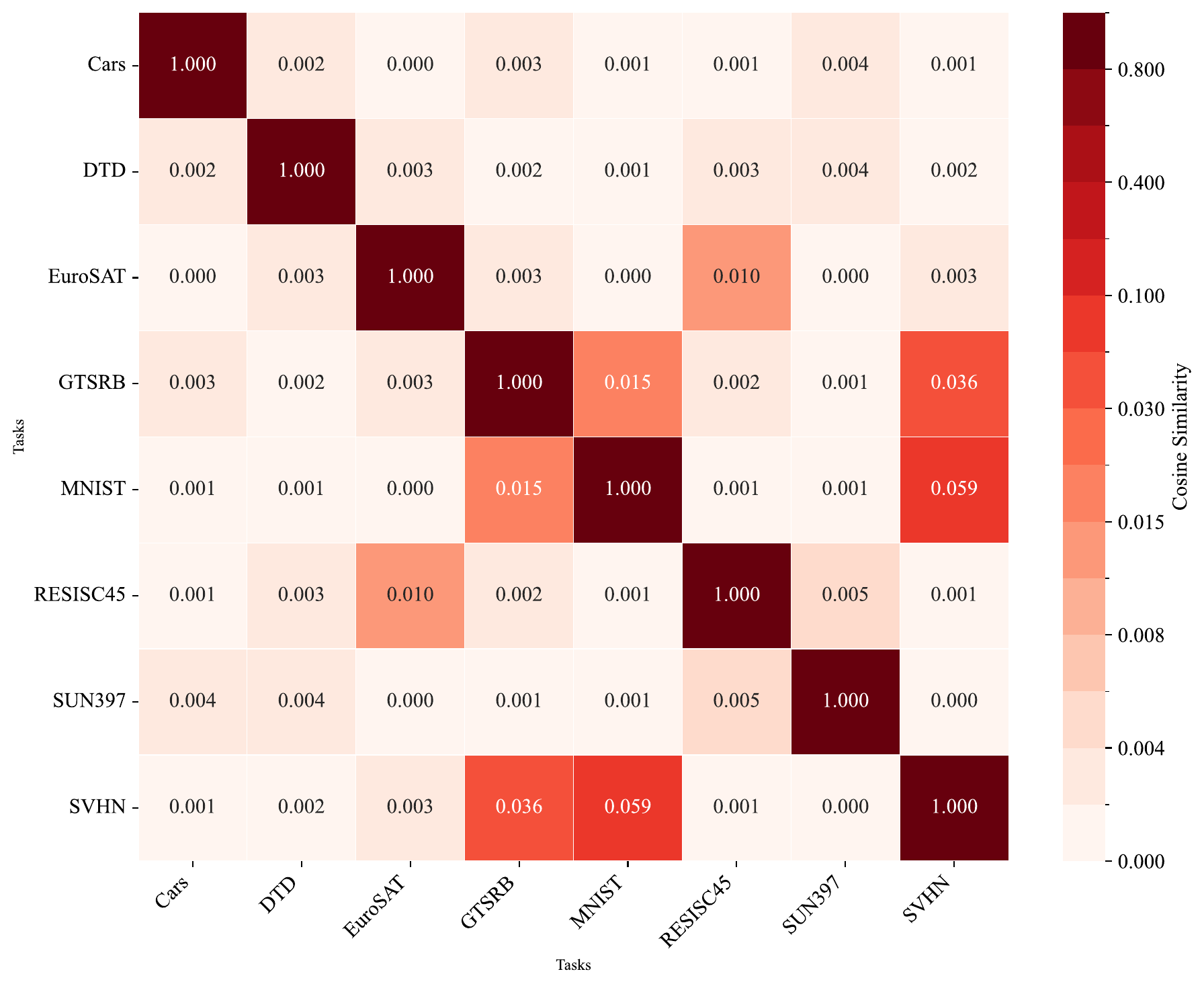} 
        \caption{LoRA-ATT}
    \end{subfigure}

    \vspace{0.3cm} 

    \begin{subfigure}{0.24\textwidth}
        \centering
        \includegraphics[width=\linewidth]{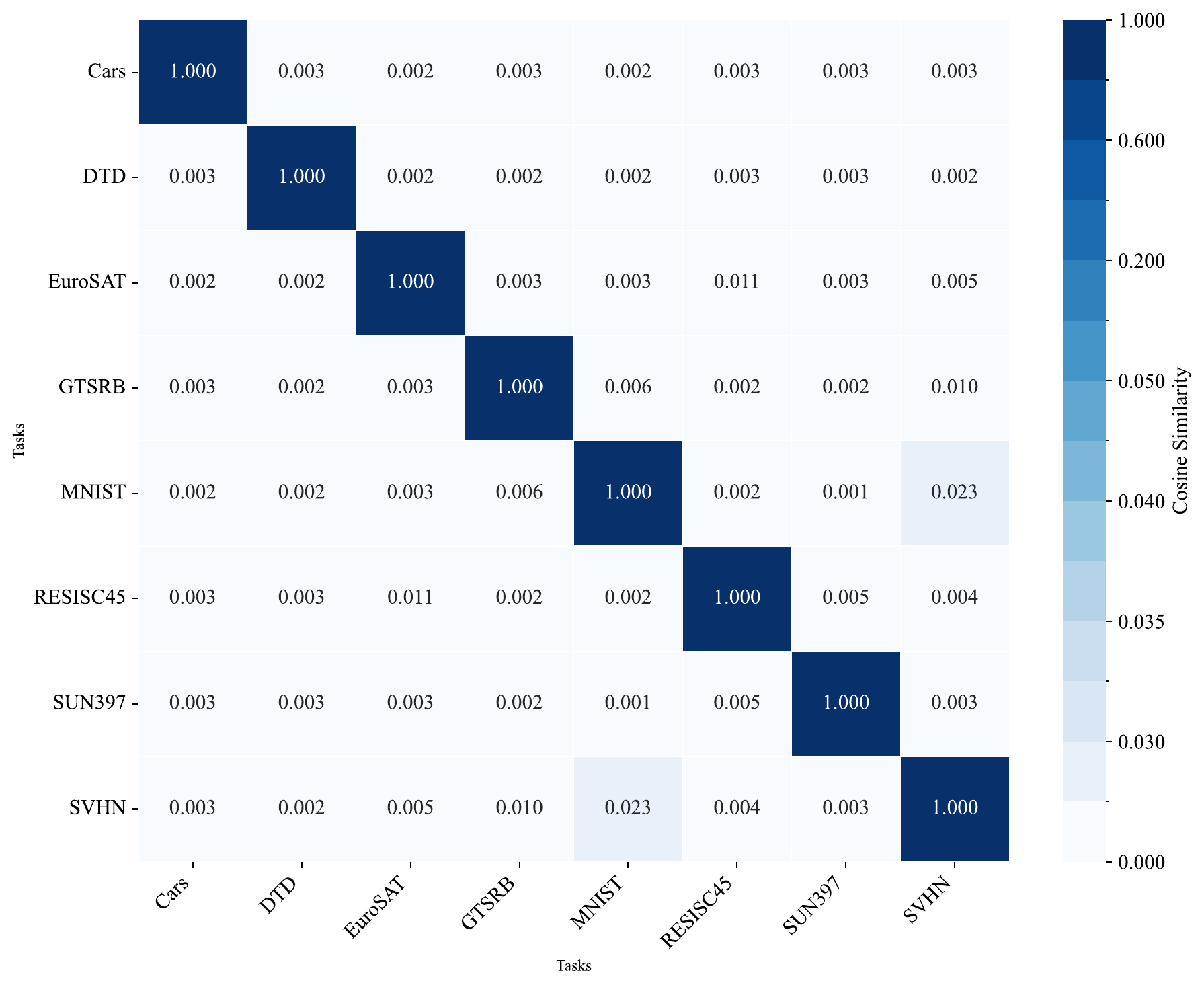} 
        \caption{Non-lin. FT+OrthoReg}
    \end{subfigure}
    \hfill
    \begin{subfigure}{0.24\textwidth}
        \centering
        \includegraphics[width=\linewidth]{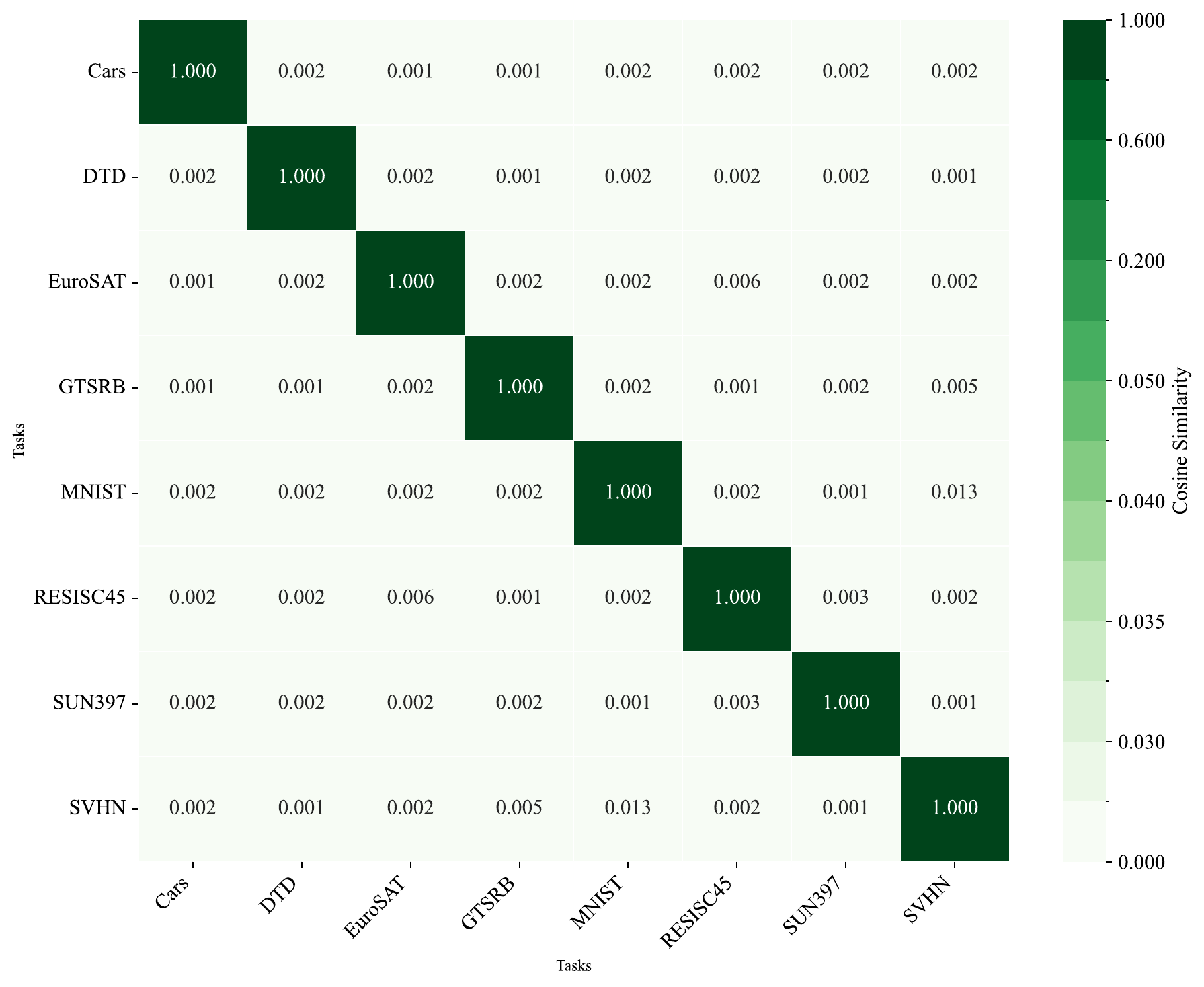} %
        \caption{TTA+OrthoReg}
    \end{subfigure}
    \hfill
    \begin{subfigure}{0.24\textwidth}
        \centering
        \includegraphics[width=\linewidth]{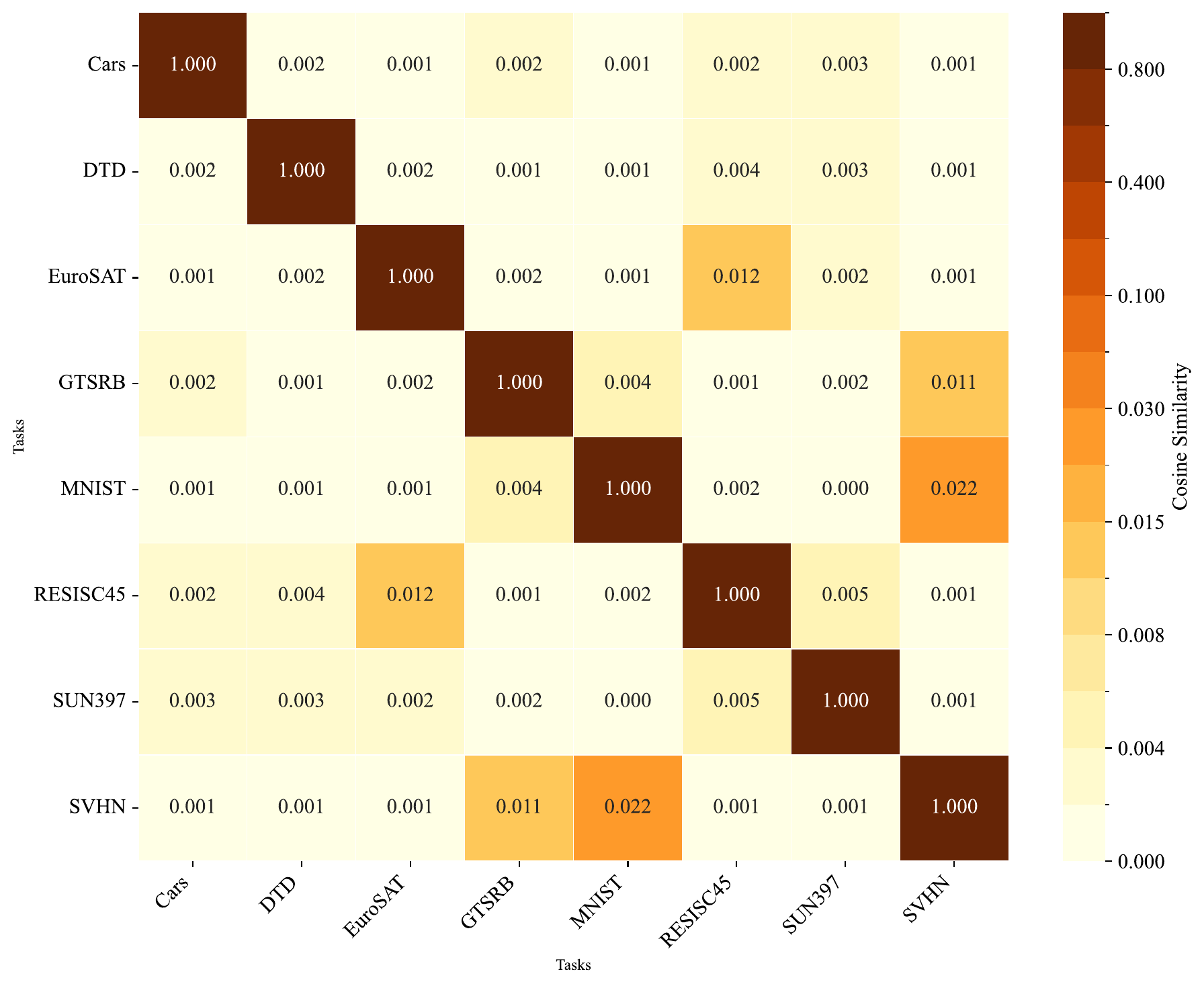}
        \caption{ATT-FT+OrthoReg}
    \end{subfigure}
    \hfill
    \begin{subfigure}{0.24\textwidth}
        \centering
        \includegraphics[width=\linewidth]{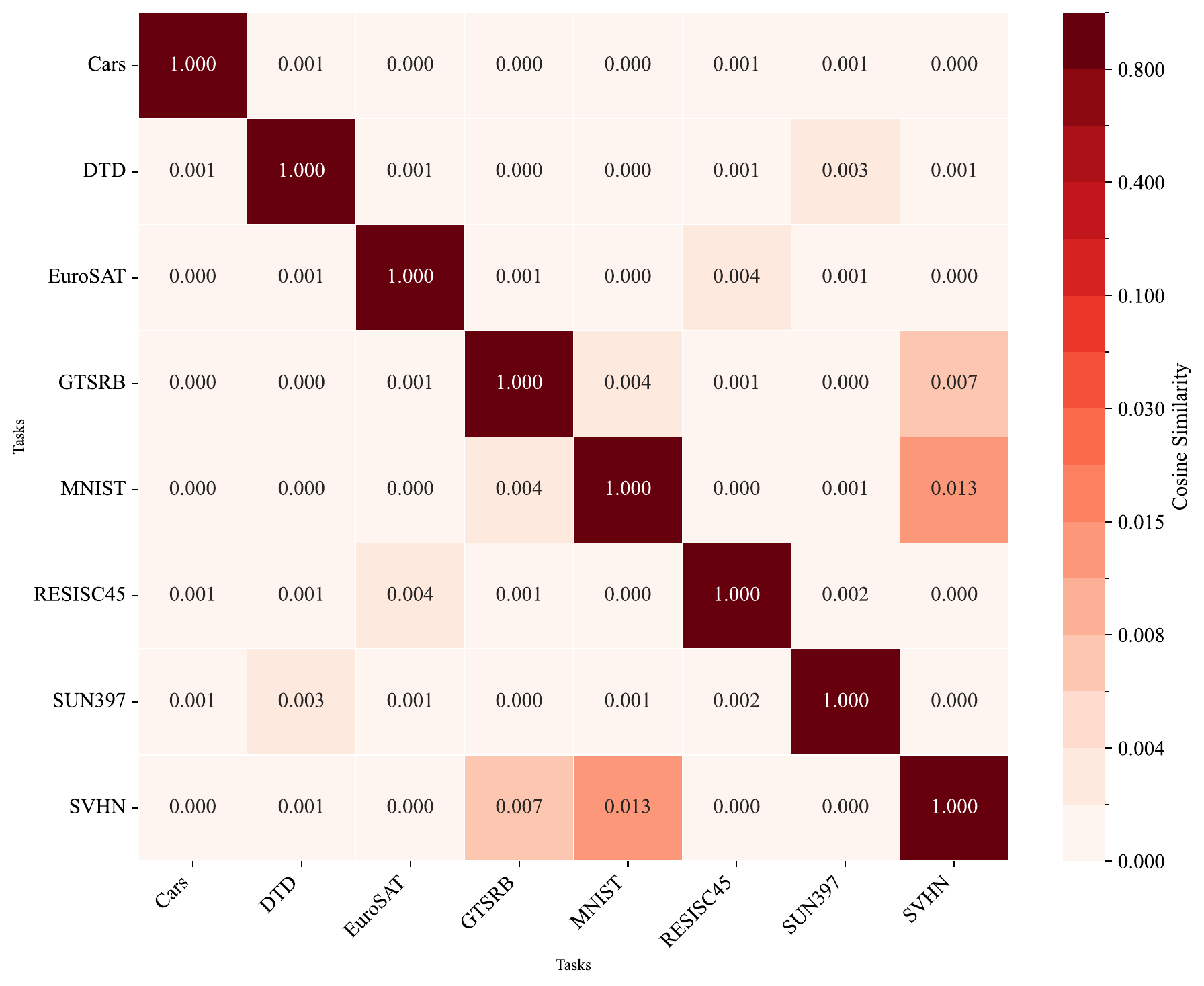}
        \caption{LoRA-ATT+OrthoReg}
    \end{subfigure}

    \caption{
        Pairwise cosine similarity heatmaps of task vectors for VIT-B-32 across different methods. The top row shows the baseline methods, where significant off-diagonal correlation (brighter colors) is visible.
        The bottom row shows the same methods with our OrthoReg regularizer.
        The consistently darker off-diagonal values in the bottom row provide strong empirical validation that OrthoReg successfully produces more orthogonal task vectors, mitigating a key source of task interference.
    }
    \label{fig:heatmaps_vitb32}
\end{figure*}

\begin{figure*}[htbp] 
    \centering 
    \begin{subfigure}{0.24\textwidth}
        \centering
        \includegraphics[width=\linewidth]{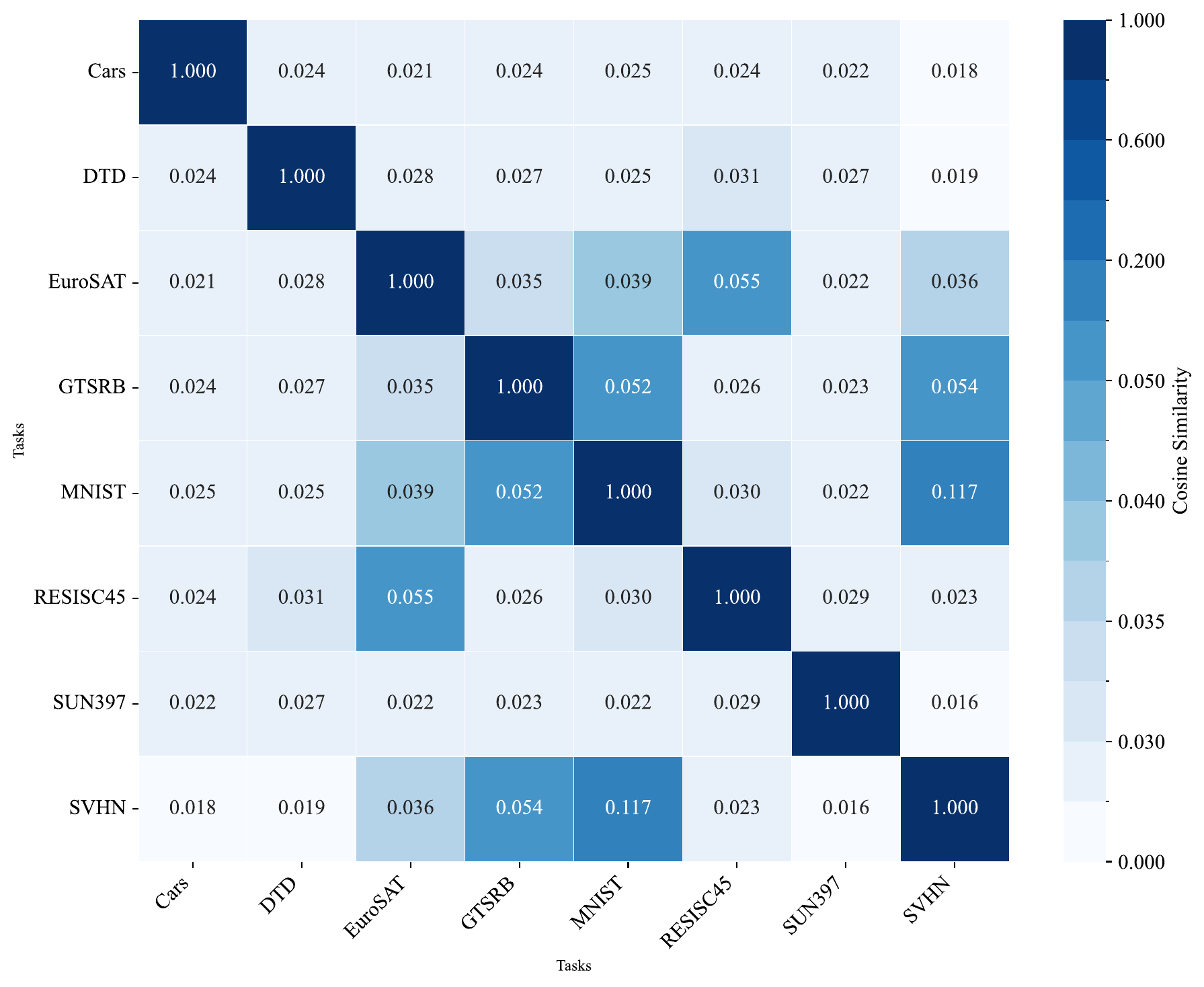}
        \caption{Non-lin. FT}
    \end{subfigure}
    \hfill 
    \begin{subfigure}{0.24\textwidth}
        \centering
        \includegraphics[width=\linewidth]{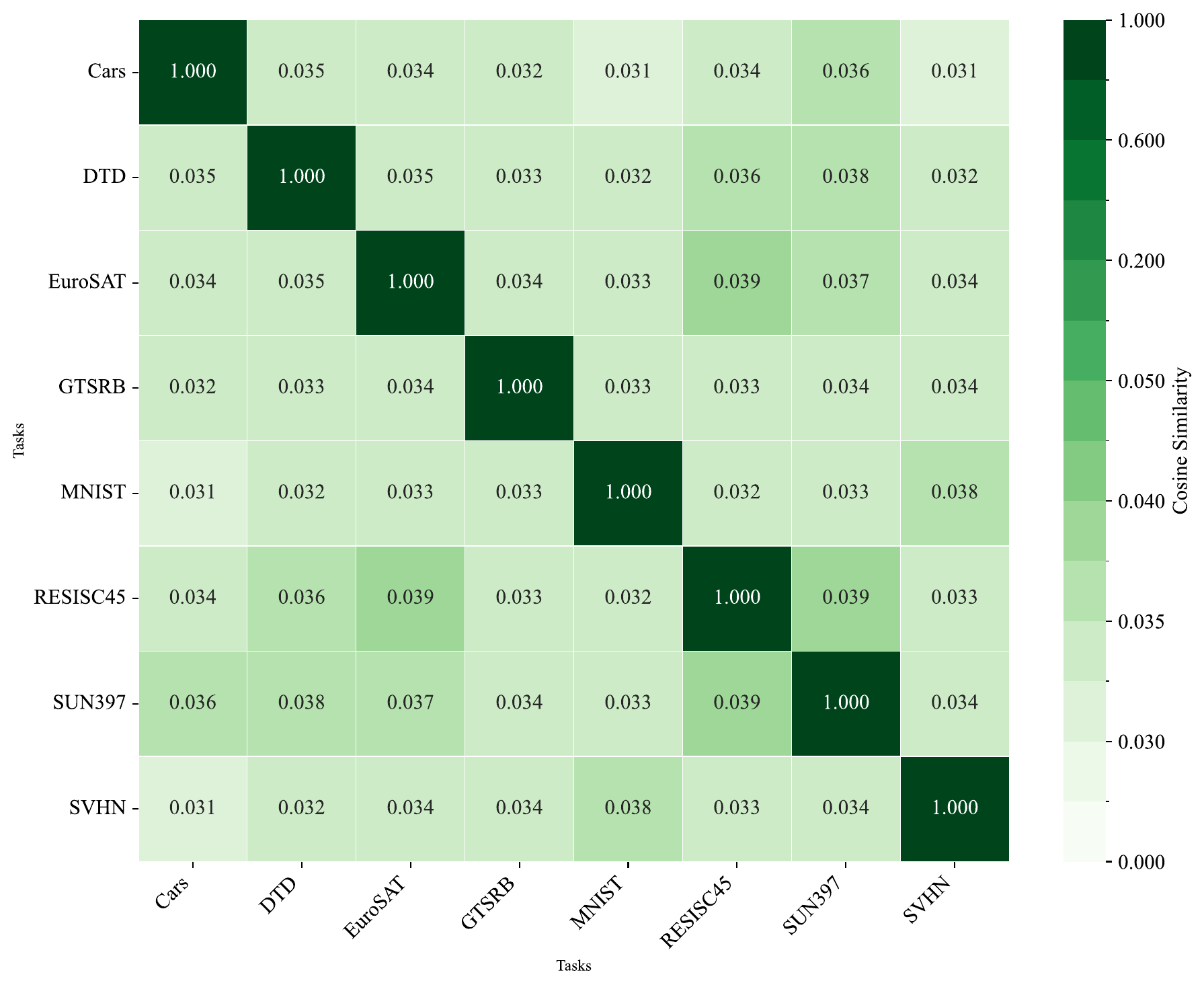}
        \caption{TTA}
    \end{subfigure}
    \hfill
    \begin{subfigure}{0.24\textwidth}
        \centering
        \includegraphics[width=\linewidth]{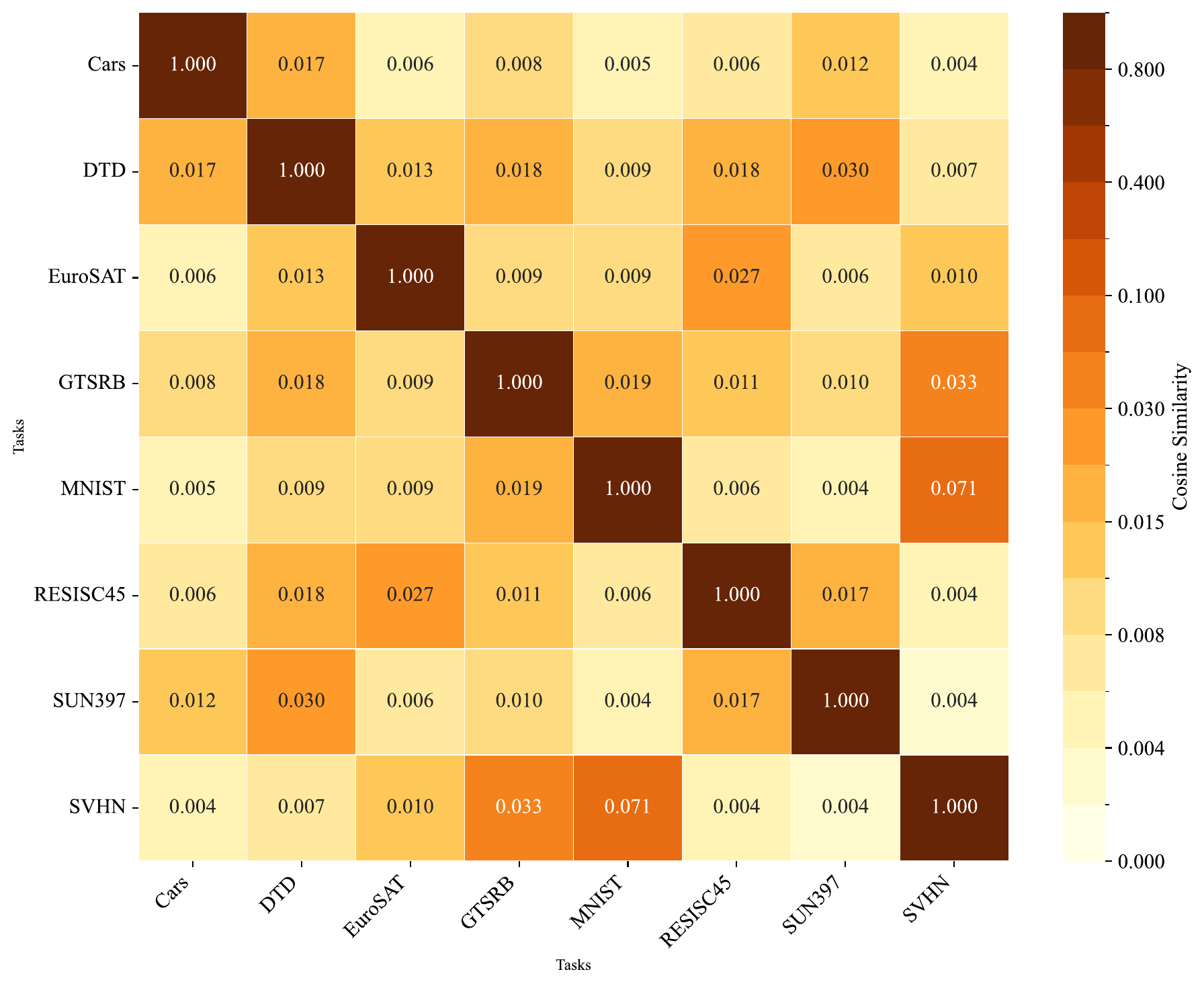} 
        \caption{ATT-FT}
    \end{subfigure}
    \hfill
    \begin{subfigure}{0.24\textwidth}
        \centering
        \includegraphics[width=\linewidth]{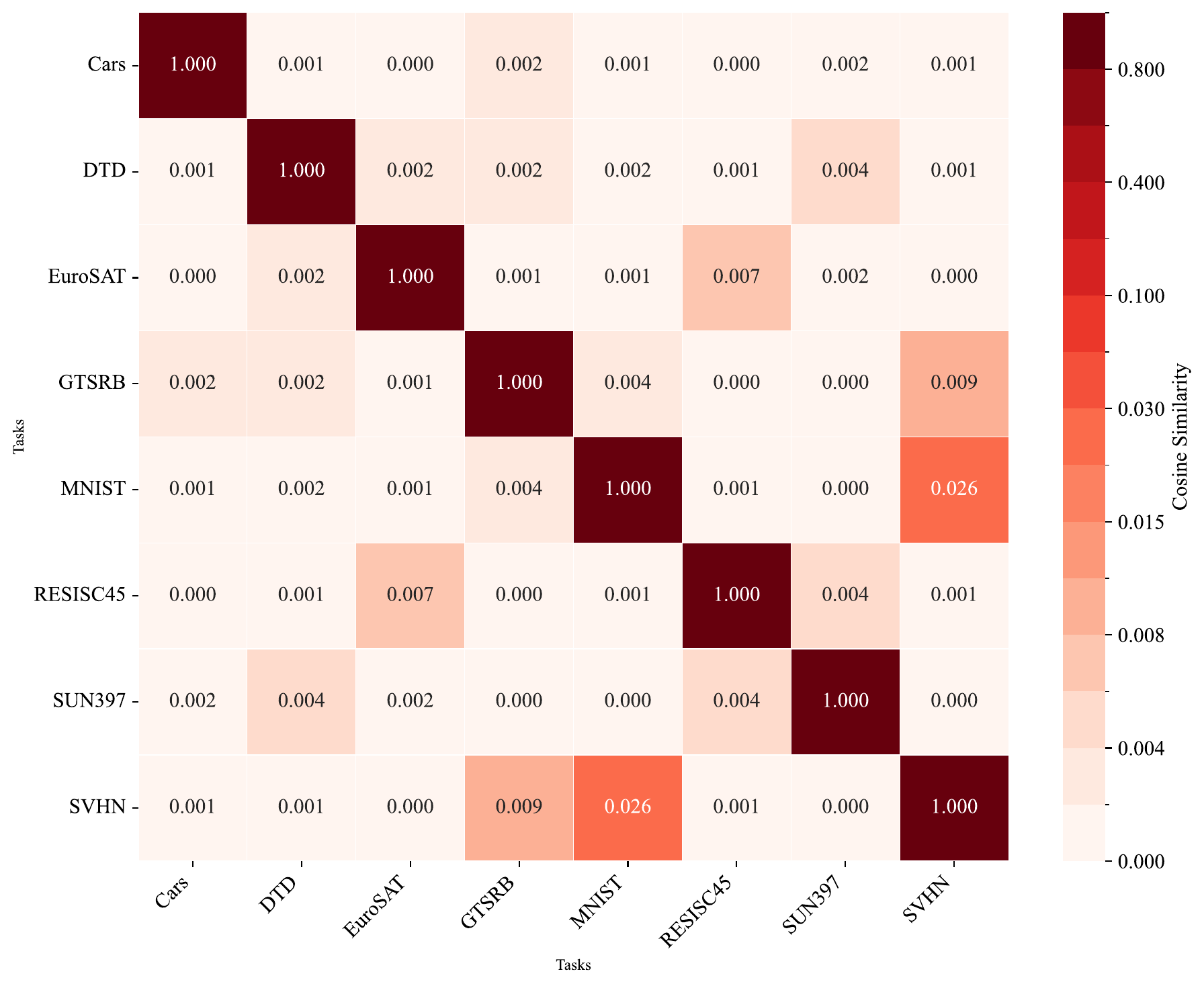} 
        \caption{LoRA-ATT}
    \end{subfigure}

    \vspace{0.3cm} 

    \begin{subfigure}{0.24\textwidth}
        \centering
        \includegraphics[width=\linewidth]{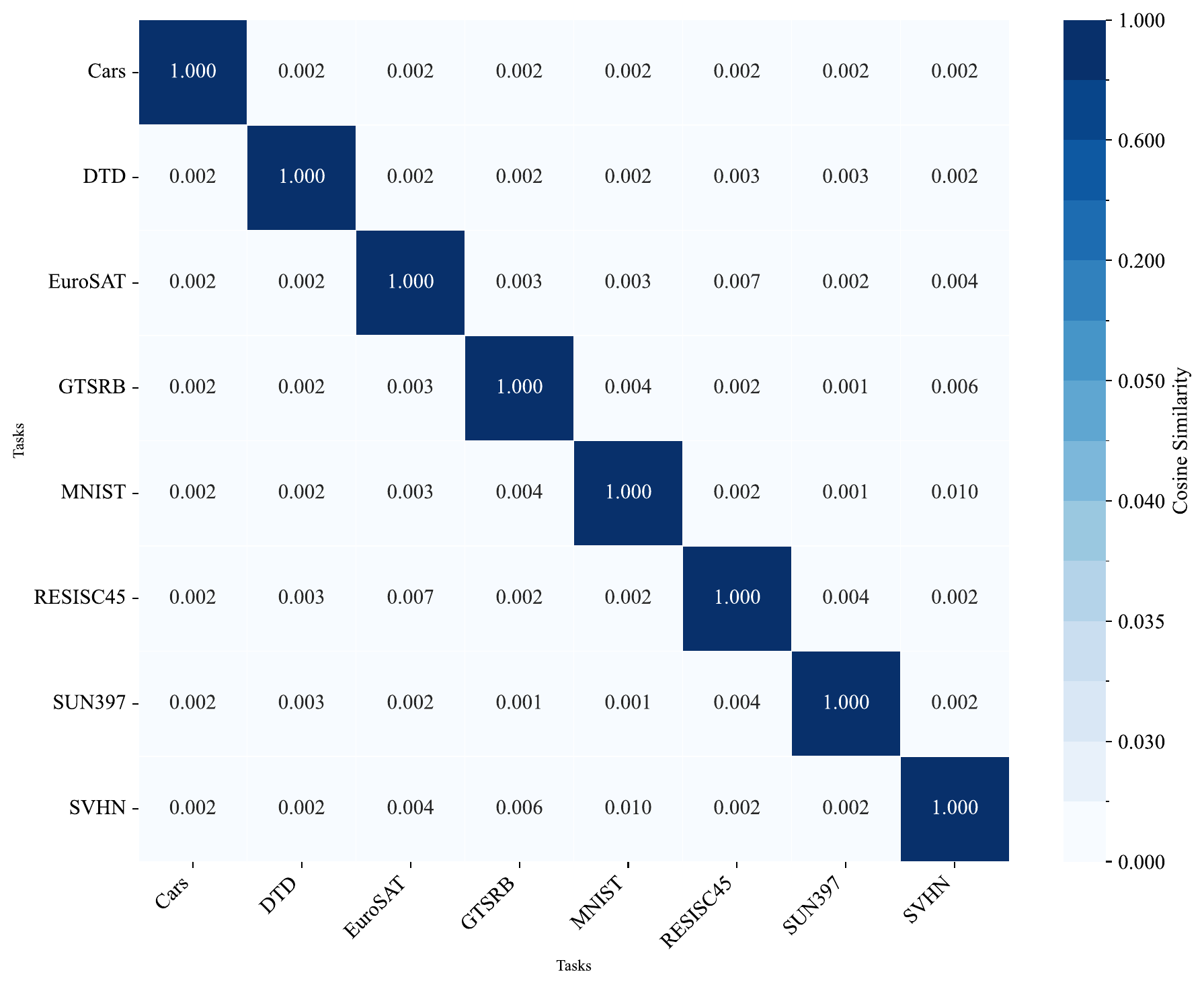} 
        \caption{Non-lin. FT+OrthoReg}
    \end{subfigure}
    \hfill
    \begin{subfigure}{0.24\textwidth}
        \centering
        \includegraphics[width=\linewidth]{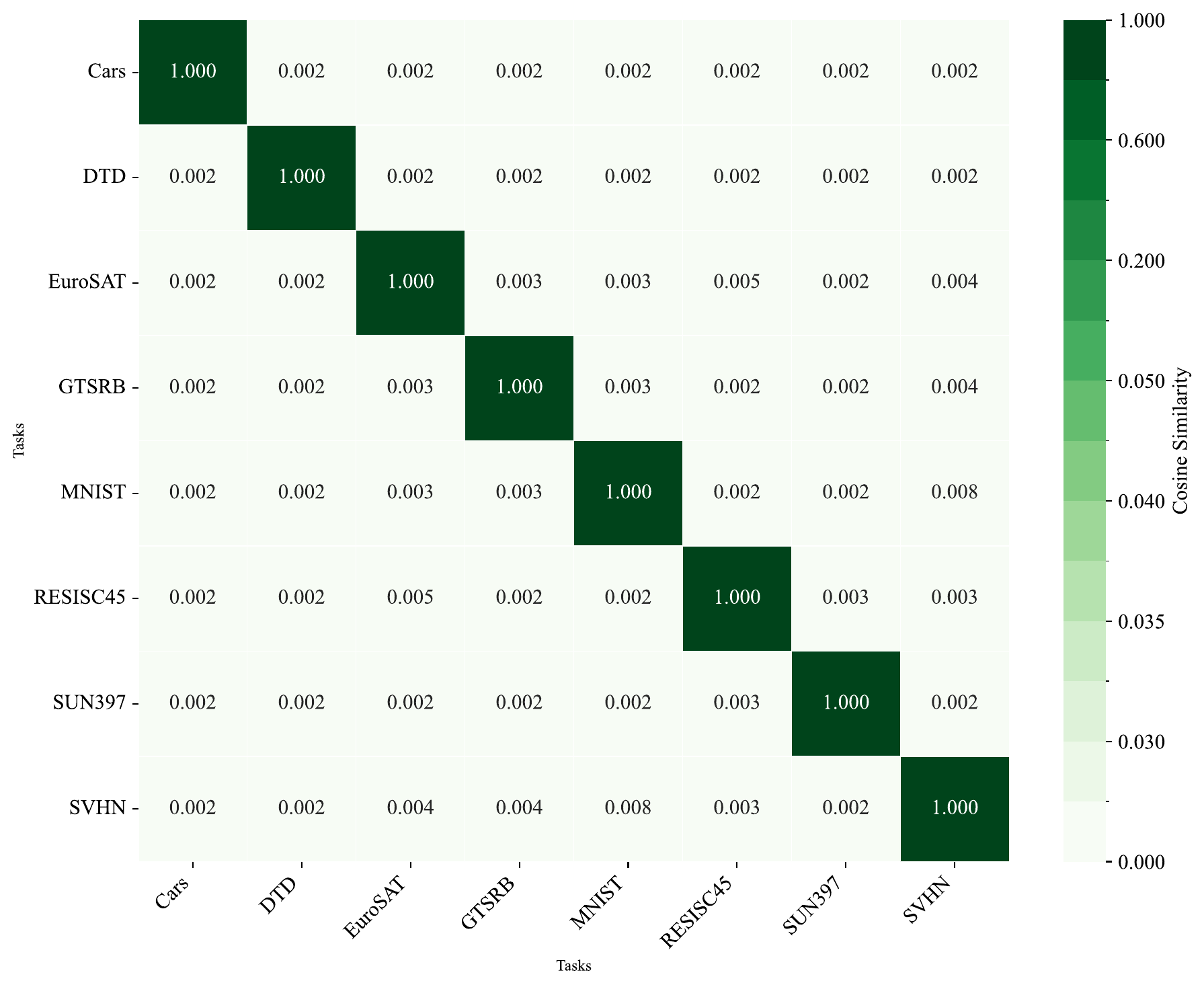} %
        \caption{TTA+OrthoReg}
    \end{subfigure}
    \hfill
    \begin{subfigure}{0.24\textwidth}
        \centering
        \includegraphics[width=\linewidth]{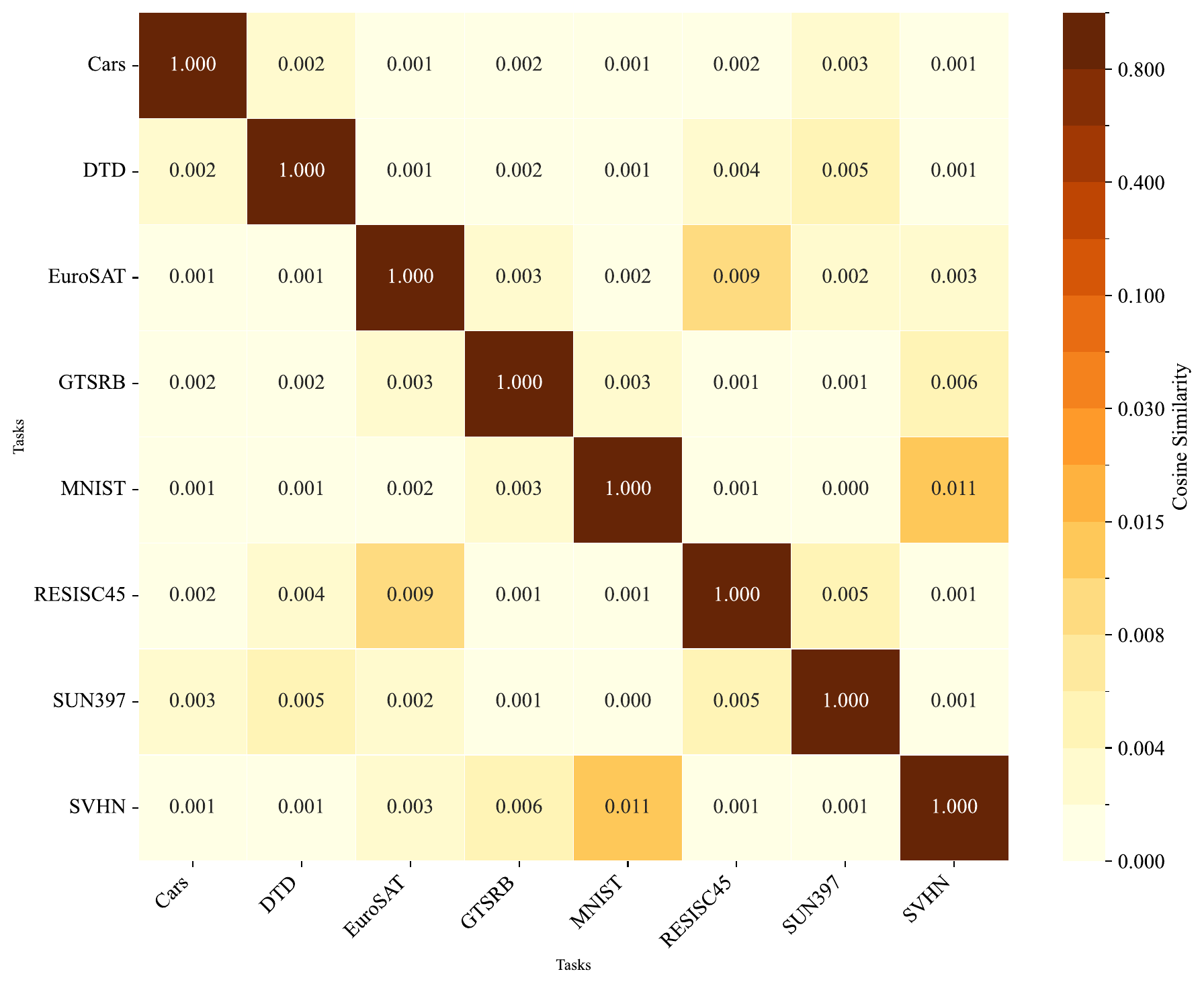}
        \caption{ATT-FT+OrthoReg}
    \end{subfigure}
    \hfill
    \begin{subfigure}{0.24\textwidth}
        \centering
        \includegraphics[width=\linewidth]{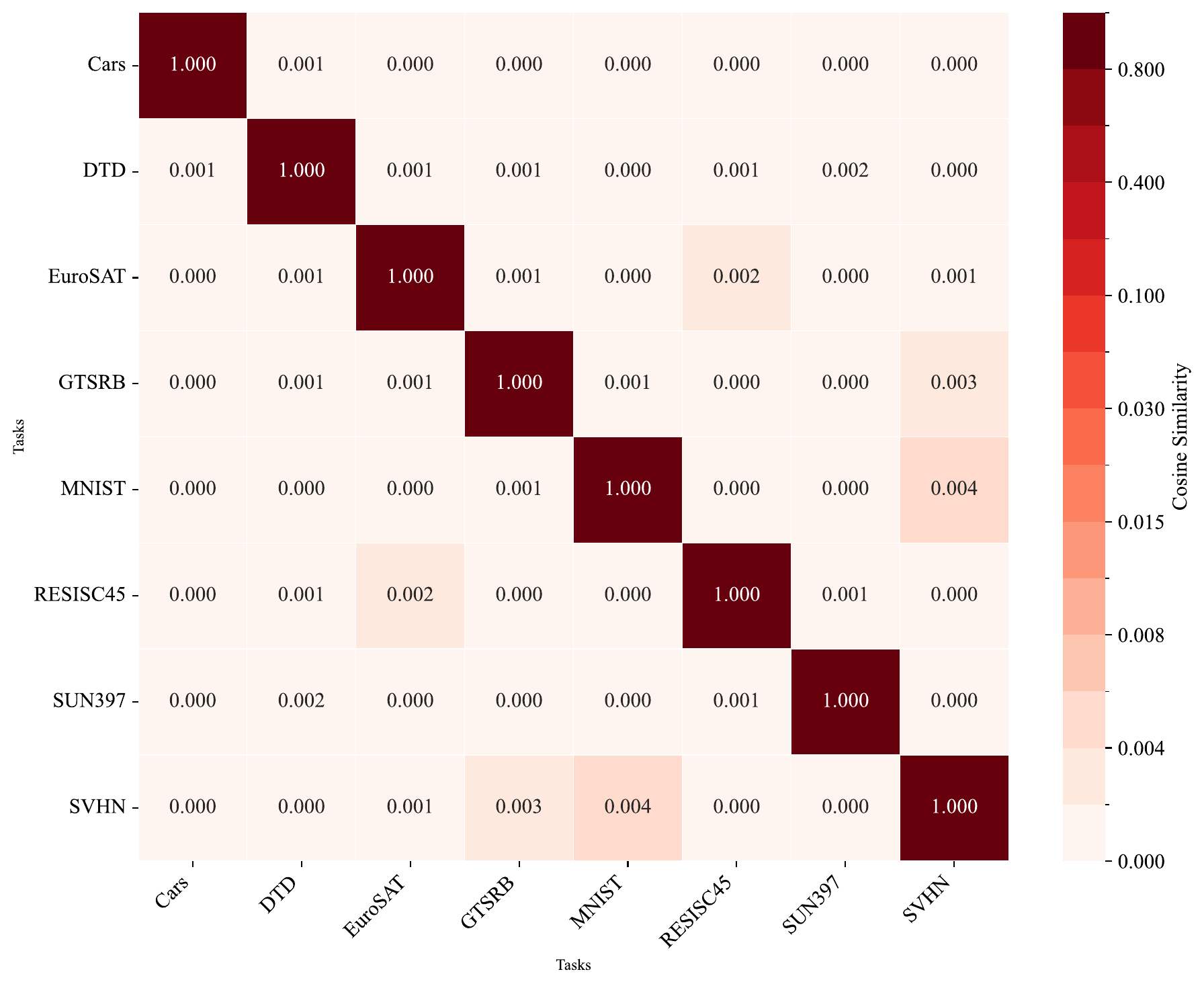}
        \caption{LoRA-ATT+OrthoReg}
    \end{subfigure}

    \caption{
        Pairwise cosine similarity heatmaps of task vectors for ViT-L-14 across different methods. The top row shows the baseline methods, where significant off-diagonal correlation (brighter colors) is visible.
        The bottom row shows the same methods with our OrthoReg regularizer.
        The consistently darker off-diagonal values in the bottom row provide strong empirical validation that OrthoReg successfully produces more orthogonal task vectors, mitigating a key source of task interference.
    }
    \label{fig:heatmaps_vitl14}
\end{figure*}
\section{Experiments Details}\label{sec:experiments_details}

The Normalized Accuracy (Norm.Acc.) metric evaluates the performance of the merged multi-task model ($\theta_{MT}$) relative to individually fine-tuned single-task models ($\theta_t^*$). It is defined as the average of the performance ratios across all $T$ tasks. A score of 100\% indicates that the merged model performs, on average, on par with the individual specialist models, suggesting a successful composition with minimal negative interference.

The formula is given by,
\begin{equation}
    \text{Norm.Acc.} = \left( \frac{1}{T} \sum_{t=1}^{T} \frac{\text{acc}(\theta_{MT}, \mathcal{D}_t)}{\text{acc}(\theta_t^*, \mathcal{D}_t)} \right) \times 100\%,
\end{equation}
where $T$ is the total number of tasks being merged, $\text{acc}(\theta_{MT}, \mathcal{D}_t)$ is the accuracy of the merged model on test set for task $t$ and $\text{acc}(\theta_t^*, \mathcal{D}_t)$ is the accuracy of the model fine-tuned only on task $t$, evaluated on its own test set.

This definition is consistent with the evaluation protocol established in prior work~\cite{ta2023,tta2023,linearatt2025}.

\section{More Experimental Results}
\subsection{Detailed Visualization of Orthogonality}\label{subsec:app_weight_angles}
To provide comprehensive empirical support for the claim made in Section~4.2.3, this part presents a detailed visualization of the weight vector angle distributions for all linear layers within the pre-trained CLIP ViT-B/16 model. \Cref{fig:angle_all} displays the histograms for each weight matrix.

\begin{table*}[t]
\centering
\small
\setlength{\tabcolsep}{8pt}
\caption{Performance comparison of different LoRA module configurations with and without orthogonality regularization. The last row under each module shows the improvement ($\Delta$) from OrthoReg.}
\label{tab:lora_module_ablation}
\begin{tabular}{@{} l l  cc  cc  cc @{}}
\toprule
\multirow{2}{*}{\textbf{LoRA Modules}} & {\textbf{Finetuning}} &
\multicolumn{2}{c}{\textbf{ViT-B-32, 8 tasks}} &
\multicolumn{2}{c}{\textbf{ViT-B-16, 8 tasks}} &
\multicolumn{2}{c}{\textbf{ViT-L-14, 8 tasks}} \\
\cmidrule(lr){3-4} \cmidrule(lr){5-6} \cmidrule(lr){7-8}
 & \textbf{Mode} & Abs.Acc.(↑) & Norm.Acc.(↑) & Abs.Acc.(↑) & Norm.Acc.(↑) & Abs.Acc.(↑) & Norm.Acc.(↑) \\
\midrule

\multirow{3}{*}{qkvofp (All)} 
& LoRA      & 73.03 & 81.89 & 75.18 & 81.83 & 85.44 & 90.98 \\
& +OrthoReg & 74.71 & 84.31 & 78.07 & 85.23 & 87.69 & 93.67 \\
& $\Delta$  & \textbf{+1.68} & \textbf{+2.42} & \textbf{+2.89} & \textbf{+3.40} & \textbf{+2.25} & \textbf{+2.69} \\
\midrule

\multirow{3}{*}{qkvo-- (Attn All)} 
& LoRA      & 73.95 & 84.19 & 76.31 & 84.04 & 87.13 & 93.49 \\
& +OrthoReg & 76.20 & 86.55 & 80.48 & 91.97 & 89.14 & 95.49 \\
& $\Delta$  & \textbf{+2.25} & \textbf{+2.36} & \textbf{+4.17} & \textbf{+7.93} & \textbf{+2.01} & \textbf{+2.00} \\
\midrule

\multirow{3}{*}{qkv--- (Q,K,V)} 
& LoRA      & 70.14 & 80.98 & 74.69 & 82.82 & 85.03 & 91.67 \\
& +OrthoReg & 73.68 & 84.40 & 78.10 & 86.27 & 87.56 & 93.97 \\
& $\Delta$  & \textbf{+3.54} & \textbf{+3.42} & \textbf{+3.41} & \textbf{+3.45} & \textbf{+2.53} & \textbf{+2.30} \\
\midrule

\multirow{3}{*}{q-v--- (Q,V only)} 
& LoRA      & 69.25 & 80.30 & 75.15 & 83.35 & 84.39 & 91.11 \\
& +OrthoReg & 72.71 & 83.77 & 77.03 & 85.37 & 86.58 & 93.29 \\
& $\Delta$  & \textbf{+3.46} & \textbf{+3.47} & \textbf{+1.88} & \textbf{+2.02} & \textbf{+2.19} & \textbf{+2.18} \\
\midrule

\multirow{3}{*}{----fp (MLP only)} 
& LoRA      & 69.19 & 78.01 & 71.24 & 78.02 & 81.98 & 87.78 \\
& +OrthoReg & 68.92 & 77.77 & 72.05 & 78.72 & 82.80 & 88.13 \\
& $\Delta$  & \textbf{-0.27} & \textbf{-0.24} & \textbf{+0.81} & \textbf{+0.70} & \textbf{+0.82} & \textbf{+0.35} \\
\bottomrule
\end{tabular}
\end{table*}

As illustrated in \Cref{fig:angle_all}, a clear and consistent pattern emerges across the model's layers. We observe two distinct behaviors. (1) Embedding Layers. The first two subplots correspond to the patch\_embedding and pos\_embedding layers. These layers show broader, more Gaussian-like distributions, which is understandable given their unique function of mapping raw inputs into the initial embedding space. As our analysis primarily concerns the transformation dynamics within the main model body, these layers are not the central focus of our study. (2) Transformer Blocks. In stark contrast, nearly all subsequent weight matrices, which constitute the core computational machinery of the model, including the query, key, value (QKV) projections, attention output projections (proj), and MLP layers within all 12 transformer blocks, exhibit angle distributions that are sharply and narrowly peaked at 90 degrees.

This detailed, per-layer visualization provides robust evidence that near-orthogonality is not an isolated occurrence but a pervasive geometric property of the pre-trained model's core processing blocks.

\subsection{Detailed Per-Task Performance Visualization}\label{subsec:details_per-task}

This section supplements the analysis in Section~5.2 by providing the comprehensive per-task performance radar charts for all evaluated architectures: ViT-L-14, ViT-B-16, and ViT-B-32. The results shown in \Cref{fig:radar-all-architectures} reinforce and expand upon the findings presented in the main body. We consistently observe that applying OrthoReg (the blue area) leads to a larger performance footprint compared to the baselines (the red area) across the vast majority of tasks, methods, and architectures. This further corroborates our claim that OrthoReg is a model-agnostic regularizer that effectively mitigates task interference, leading to broad performance gains in multi-task scenarios.

\subsection{Details About Task Negation}\label{subsec:details_task_negation}
In this section, we provide additional details for the task negation experiments discussed in Section ~5.3. When the accuracy requirement on the control task is further relaxed, such as to 90\% (see ~\Cref{tab:task_negation2}) or 80\% (see ~\Cref{tab:task_negation3}), the effect of task negation becomes progressively stronger, resulting in lower accuracy on the target task. Moreover, our OrthoReg regularizer can further enhance the negation effect while still meeting the control-task accuracy threshold. In some cases, it even improves control-task accuracy while reducing target-task accuracy. These results demonstrate that our method effectively disentangles task-specific feature information, substantially reducing undesired interference with non-target tasks during the task negation process.

\subsection{Visualization of Task Vector Similarity}\label{subsec:details_taskvector_sim}
To supplement the analysis in Section~5.4, this section provides additional task vector similarity heatmaps. These figures (\Cref{fig:heatmaps_vitb16}, \Cref{fig:heatmaps_vitb32}, \Cref{fig:heatmaps_vitl14}) illustrate the effect of OrthoReg across different baseline methods and model architectures, consistently demonstrating that our method produces more orthogonal task vectors.

\subsection{Detailed Ablation Study on LoRA Components}\label{subsec:details_lora}
This part provides additional details and results to supplement the LoRA ablation study presented in Section~5.1.

\subsubsection{Rationale for Module Selection}
The selection of different module subsets for our LoRA-based ablation study was designed to systematically probe the effect of OrthoReg on distinct functional components of the Vision Transformer. 
\begin{itemize}
    \item All Tunable Layers. qkvofp: This represents the most comprehensive PEFT approach, applying LoRA to all available linear layers (attention and MLP). It serves as a baseline to evaluate the effect of tuning the entire model in a parameter-efficient manner.
    \item MLP Layers Only. ---fp: This configuration isolates the FFN or MLP blocks. By tuning only these layers, we can assess their specific contribution to task adaptation and how OrthoReg influences them in isolation.
    \item Attention Subsets. qkvo--, qkv---, and q-v---: These configurations focus on the multi-head self-attention mechanism, which is widely considered crucial for capturing task-specific patterns.
    \begin{itemize}
        \item qkvo-- tunes all four projection matrices (query, key, value, and output), representing a full intervention within the attention block.
        \item qkv--- omits the output projection, allowing us to gauge its importance.
        \item q-v--- is a particularly important configuration. Prior work~\cite{Zanella2024} has identified that fine-tuning only the query and value matrices can be a highly effective and parameter-efficient strategy.
    \end{itemize}
\end{itemize}
By comparing these configurations, we can draw nuanced conclusions about where task-specific knowledge is stored and how promoting orthogonality in different components contributes to the final performance of task arithmetic.

\subsubsection{results}
Table \ref{tab:lora_module_ablation} summarizes the effect of applying OrthoReg across different LoRA module configurations. Overall, OrthoReg consistently improves performance in all settings except the MLP-only configuration. The largest gains appear in attention-related modules , such as qkvo-- , with improvements up to +4.17 points on ViT-B-16. This aligns with the common understanding that attention layers carry most of the task-specific information, and orthogonalizing their updates most effectively reduces feature entanglement.

Full-layer tuning (qkvofp) also benefits substantially from OrthoReg, indicating that larger tunable subspaces allow orthogonality constraints to better isolate task-relevant directions. The Q,V-only configuration (q-v---), previously identified as an efficient tuning strategy, also shows stable improvements when combined with OrthoReg.

The only exception is the MLP-only setup, where OrthoReg slightly reduces accuracy on smaller models. This suggests that MLP layers contribute less task-specific variation, and enforcing orthogonality may occasionally restrict useful shared representations.

Overall, the results confirm that OrthoReg most strongly enhances the components responsible for task-discriminative behavior, leading to more accurate task vectors and more reliable task arithmetic.

\end{document}